\newcommand{\vct}{\boldsymbol }
\newcommand{\ud}{\mathrm{d}}
\newcommand{\Lap}{\mathrm{Lap}}
\newcommand{\tv}{\mathsf{TV}}
\newcommand{\sR}{\mathfrak R}
\renewcommand{\hat}{\widehat}
\renewcommand{\tilde}{\widetilde}
\def\BState{\State\hskip-\ALG@thistlm}
\renewcommand{\hat}{\widehat}
\renewcommand{\tilde}{\widetilde}
\renewcommand{\bar}{\overline}
\newcommand{\const}{\mathrm{const.}}
\newcommand{\sy}{\mathsf{sy}}
\newcommand{\kl}{\mathsf{KL}}
\newcommand{\sd}{\mathfrak d}
\definecolor{DSgray}{cmyk}{0,1,0,0}
\begin{document}
%%%%%%%%%%%%%%%%

% Outcomment only when entries are known. Otherwise leave as is and
%   default values will be used.
%\setcounter{page}{1}
%\VOLUME{00}%
%\NO{0}%
%\MONTH{Xxxxx}% (month or a similar seasonal id)
%\YEAR{0000}% e.g., 2005
%\FIRSTPAGE{000}%
%\LASTPAGE{000}%
%\SHORTYEAR{00}% shortened year (two-digit)
%\ISSUE{0000} %
%\LONGFIRSTPAGE{0001} %
%\DOI{10.1287/xxxx.0000.0000}%

% Author's names for the running heads
% Sample depending on the number of authors;
% \RUNAUTHOR{Jones}
% \RUNAUTHOR{Jones and Wilson}
% \RUNAUTHOR{Jones, Miller, and Wilson}
% \RUNAUTHOR{Jones et al.} % for four or more authors
% Enter authors following the given pattern:
%\RUNAUTHOR{Chen and Smichi-Levi and Wang}

% Title or shortened title suitable for running heads. Sample:
% \RUNTITLE{Bundling Information Goods of Decreasing Value}
% Enter the (shortened) title:
\RUNTITLE{Differential Privacy in Personalized Pricing}

% Full title. Sample:
% \TITLE{Bundling Information Goods of Decreasing Value}
% Enter the full title:
\TITLE{Differential Privacy in Personalized Pricing with Nonparametric Demand Models}

% Block of authors and their affiliations starts here:
% NOTE: Authors with same affiliation, if the order of authors allows,
%   should be entered in ONE field, separated by a comma.
%   \EMAIL field can be repeated if more than one author
\ARTICLEAUTHORS{%
%\vspace{.1in}
\AUTHOR{Xi Chen\thanks{Author names listed in alphabetical order.}}
\AFF{Leonard N.~Stern School of Business, New York University, New York, NY 10012, USA\\
xc13@stern.nyu.edu}
\AUTHOR{Sentao Miao}
\AFF{Desautels Faculty of Management, McGill University, Montreal, QC H3A 1G5, Canada\\
sentao.miao@mcgill.ca}
\AUTHOR{Yining Wang}
\AFF{Warrington College of Business, University of Florida, Gainesville, FL 32611, USA\\
yining.wang@warrington.ufl.edu}
% Enter all authors
} % end of the block

\ABSTRACT{In the recent decades, the advance of information technology and abundant personal data facilitate the application of algorithmic personalized pricing. However, this leads to the growing concern of potential violation of privacy due to adversarial attack. To address the privacy issue, this paper studies a dynamic personalized pricing problem with \textit{unknown} nonparametric demand models under data privacy protection. Two concepts of data privacy, which have been widely applied in practices, are introduced: \textit{central differential privacy (CDP)} and \textit{local differential privacy (LDP)}, which is proved to be stronger than CDP in many cases. We develop two algorithms which make pricing decisions and learn the unknown demand on the fly, while satisfying the CDP and LDP gurantees respectively. In particular, for the algorithm with CDP guarantee, the regret is proved to be at most $\tilde O(T^{(d+2)/(d+4)}+\varepsilon^{-1}T^{d/(d+4)})$. Here, the parameter $T$ denotes the length of the time horizon, $d$ is the dimension of the personalized information vector, and the key parameter $\varepsilon>0$ measures the strength of privacy (smaller $\varepsilon$ indicates a stronger privacy protection). On the other hand, for the algorithm with LDP guarantee, its regret is proved to be at most $\tilde O(\varepsilon^{-2/(d+2)}T^{(d+1)/(d+2)})$, which is near-optimal as we prove a lower bound of $\Omega(\varepsilon^{-2/(d+2)}T^{(d+1)/(d+2)})$ for any algorithm with LDP guarantee. 
}

\KEYWORDS{Differential privacy, dynamic pricing, local privacy, regret}

%\HISTORY{Submitted November, 2019.}

\date{}

\maketitle

\parskip=6pt

%\xnote{Many papers (e.g., arXiv versions) in reference have been published. I corrected a few but it is better to double check. Also it might be helpful make the format of the reference consistent}
%
%\xnote{make the notation consistent, e.g., $\varepsilon$-DP vs $\varepsilon$-DP (I would prefer the latter).}

\section{Introduction}
From the early day bargaining to customized prices based on such as customer groups (e.g., student versus non-student), genders (e.g., personal care products, see \citealt{de2015cradle}), and regions (e.g., regional prices of AIDS drug Combivir, see \citealt{cowen2015modern}), personalized pricing has long been implemented in many commercial activities.
With the recent advance of information technology, the pricing platform could utilize customer data more efficiently, and personalized prices can be set algorithmically. For example, insurance companies quote the premium based on customers' demographic and behavioral data (see e.g., \citealt{arumugam2019survey}); hoteling websites charge different prices based on customers' locations and devices (see e.g., \citealt{vissers2014crying}). Besides industry practices, there is also a growing body of academic research on algorithmic personalized pricing (see related literature in Section~\ref{subsec:lit_review}). 

With this surge of algorithmic personalized pricing, there is growing concern of privacy issue due to potential leakage of customers' personal information. As quoted in a report by OECD Directorate For Financial And Enterprise Affairs\footnote{https://www.oecd.org/competition/personalised-pricing-in-the-digital-era.htm}, ``Similarly, the
collection and use of personal data used in personalized pricing could implicate privacy concerns.'' However, most of the practices in personalized pricing to protect data privacy are quite ad hoc such as anonymizing personal information, which cannot guarantee the data security (see e.g., \citealt{FTC2012report},  \citealt{nyt2019reidentify}). Furthermore, even if the adversary does not have direct access to the dataset, they are still able to reconstruct other customers' personal information by interacting with and observing the decisions made by the pricing platform (see e.g.,  \citealt{fredrikson2014privacy,hidano2017model}). To address these malicious attack to personal data, \citet{dwork2006our,dwork2006calibrating} proposed an important concept of the so-called \textit{differential privacy}, which is the \textit{de facto} privacy standard in practice. More specifically, there are mainly two types of differential privacy which are widely used in practice: \textit{central differential privacy (CDP)} (see e.g., \citealt{dwork2014algorithmic}) and \textit{local differential privacy (LDP)} (see e.g., \citealt{evfimievski2003limiting,duchi2013localpriv,duchi2018minimax}). { Intuitively, CDP guarantees that for any time $t$, the adversary is unlikely, depending on a privacy parameter $\varepsilon>0$ (smaller $\varepsilon$ leads to higher security; hence it is also called $\varepsilon$-CDP), to infer the data of any customer who has arrived before $t$. For example, US Census Bureau \citeyear{census2020} applied the techniques of CDP for the privacy of census data. While for LDP, the customer does not even trust the platform, so that the platform can \emph{only} use privatized historical data to make decisions. As a result, an adversary is unlikely (again, depending on $\varepsilon$, also known as $\varepsilon$-LDP) to obtain other customers' data even if it has direct access to the platform's dataset. Examples of practices of LDP include Google \citeyear{google2014} and Apple \citeyear{apple2019}. We refer to Figure \ref{fig:cdp_ldp} for a graphical representation of CDP and LDP. 
As shown in the left panel of Figure \ref{fig:cdp_ldp}, for CDP, the \emph{trusted} aggregator (i.e., the platform) collects historical data as well as the query from the arriving customer, and it outputs a \emph{privatized} answer (e.g., price) such that an adversary cannot infer sensitive information from this answer. The LDP is illustrated 
in the right panel of Figure \ref{fig:cdp_ldp}, where the aggregator (e.g., the platform) is \emph{untrusted} so that it can only collect privatized/perturbed data from customers. }
Please refer to Section~\ref{sec:dp_notions} for the detailed formulation and comparisons between CDP and LDP in our setting.

\begin{figure}[htbp]
	\centering
	\includegraphics[width=1\textwidth]{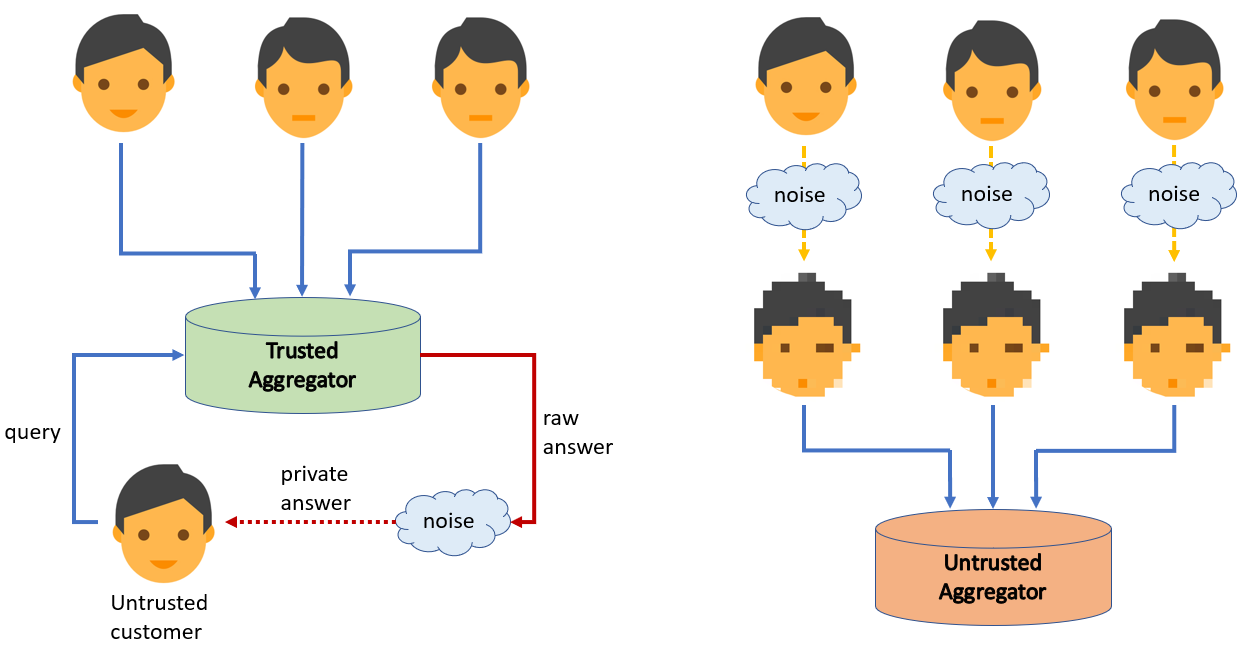}
	\caption{CDP (the left panel) and LDP (the right panel) in a general setting.} \label{fig:cdp_ldp}
\end{figure}

In this paper, we address the concern of protecting customers' data in a dynamic personalized pricing problem with demand learning. Briefly, there is a finite selling horizon with length $T$, and in each time period $t$, there is one customer with a $d$-dimensional feature/data vector $\vct x_t\in\mathbb{R}^d$ arriving at the platform for purchasing a single product. To maximize the cumulative revenue, the platform needs to decide a personalized price $p_t$ based on the knowledge of the unknown demand model (from historical data) and $\vct x_t$, while at the same time protecting customer's data.
Assuming the demand model to be \emph{nonparametric}, we propose two algorithms which protect data privacy by satisfying $\varepsilon$-CDP and $\varepsilon$-LDP respectively. The main contributions of this paper is summarized as follows.

\paragraph{\bf Protecting data privacy with nonparametric demand model.} As mentioned, the demand model in this paper is assumed to be nonparametric (see \citealt{chen2021nonparametric} for preceding work without data privacy), as opposed to the existing literature on data privacy in pricing which assumes parametric demand (see e.g., \citealt{lei2020privacy,chen2020privacy,Han:21:generalized}). 
The preservation of data privacy in such nonparametric settings gives rises to several technical challenges, which we describe in more details below.
\begin{itemize}
	\item In the work of \cite{chen2021nonparametric}, the authors divide the space of contextual vectors into local \emph{hypercubes} (i.e., each arriving customer belongs to a certain hypercube depending on his or her data $\vct x_t$), and	a pricing algorithm is proposed that runs some multi-armed bandit algorithm for each {hypercube}. 
	While this method works nicely when data privacy is not of concern, it is difficult to adapt it under either CDP or LDP settings. The main reason is that, to protect the privacy of the personalized price offered to a customer, one needs to anonymize the information on \emph{all} arms during a certain time range.
	Since the number of arms in the algorithm of \cite{chen2021nonparametric} must scale polynomially with $T$,
	this simple approach will lead to an inflated and sub-optimal regret.
	\item Many existing works on pricing with nonparametric demands (either contextual or not) utilize the \emph{trisection search}
	approach to search for the revenue-optimal prices by exploiting the concavity structures of the revenue function (with respect to price) without other prior knowledge 
	\citep{wang2014close,lei2014near}.
	While the trisection search approach has the advantage of maintaining a constant number of cumulative revenue statistics
	(in contrast to the above-mentioned multi-armed bandit approach), there is another subtle technical difficulty of directly applying it
	in the privacy preserving setting: when working with trisection search, we need a pre-allocated sample budget for each iteration
	to decide when to stop comparing the two mid-points.
	With data privacy (especially local privacy) constraints, it is difficult for the algorithm to maintain the number of samples or customers
	already arrived who belong to a certain hypercube. Therefore, the algorithm has difficulty knowing when to stop comparing mid-points in a trisection search procedure.
\end{itemize}

To address the above-mentioned technical challenges, in this paper we use the idea of \emph{quadrisection search} within each hypercube
of customers' contextual vectors.
In quadrisection search, the price interval is divided equally into four pieces with three mid-points. This gives the algorithm more information
to decide the direction of price interval shrinkage, without the need to maintain an accurate counting of customers arriving in each hypercube during a certain time range.
It also avoids the regret inflation problem from multi-armed bandit approaches because at each time the quadrisection search method
maintains only 5 (or 10) anonymized statistics for each hypercube, which will not add unreasonable cost due to privacy preservation.

%		Compared with parametric demand model, nonparametric model has more flexibility, but at the same time makes the demand learning and data protection significantly more complicated (the techniques based on parameter estimation in the current literature (e.g., \citealt{chen2020privacy}) are obviously not applicable). \xnote{This is too vague. We should be more explicit on why non parametric is hard. We should discuss curse of dimensionlity in non-parametric estimation and why the learning-while-doing in nonparametric is difficult.} To tackle this challenge, the algorithms proposed in this work combine the idea of splitting customer's feature space \citep{chen2021nonparametric} and a novel technique of privacy-preserving quadrisection price search in a nontrivial manner.  
%		\xnote{Again, this also seems not detailed enough. We should highlight the novelty on quadrisection. }

\paragraph{\bf Near-optimal pricing algorithms preserving data privacy.} 
We present two algorithms named \emph{CPPQ} and \emph{LPPQ} for nonparametric personalized pricing with privacy guarantees. {Both algorithms satisfy $\varepsilon$-CDP (for \emph{CPPQ}) or $\varepsilon$-LDP (for \emph{LPPQ}) constraints
regardless of choices of algorithm input parameters (except $\varepsilon$); thus in practice, we can tune the input parameters for better performance without worrying about privacy preservation.}
In addition to the local quadrisection search method mentioned in the previous paragraph, the proposed privacy-preserving algorithms
also employ several advanced techniques such as tree-based aggregation and noisy statistical counts to ensure customers' data privacy,
which we discuss and explain in more details later in the paper when we describe the proposed algorithms.

In addition to rigorous privacy guarantees, we also demonstrate that when certain algorithm parameters are carefully chosen,
the proposed algorithms have near-optimal regrets (upto logarithmic factors in $T$).
More specifically, for the CDP setting the proposed algorithm enjoys a regret upper bound of $\widetilde O(T^{(d+2)/(d+4)}+\varepsilon^{-1}T^{d/(d+4)})$,
which is optimal when $\varepsilon$ is not too small because $\Omega(T^{(d+2)/(d+4)})$ is a known \emph{lower bound} for personalized pricing
with nonparametric demands even when data privacy is not of concern \citep{chen2021nonparametric}. We note that the notation $\widetilde O(\cdot)$ hides some logarithmic factors in $T$ (see Theorem \ref{thm:cppt-regret} for more details).
For the more challenging LDP setting\footnote{Proposition \ref{prop:lpd-adp} shows that LDP is a stronger data privacy notion compared with CDP under certain assumptions.}, our proposed LPPQ algorithm achieves a regret upper bound of $\widetilde O(\varepsilon^{-2/(d+2)}T^{(d+1)/(d+2)})$ (see Theorem \ref{thm:regret-lppt}).
While this regret upper bound is considerably worse than the $T^{(d+2)/(d+4)}$ scaling even if $\varepsilon=\Omega(1)$,
we show that it is indeed \emph{rate-optimal}, as explained in the next paragraph.

\paragraph{\bf Minimax lower bound for locally private personalized pricing.} In this paper we prove a minimax lower bound of $\Omega(\varepsilon^{-2/(d+2)}T^{(d+1)/(d+2)})$ on the regret of \emph{any} possible personalized pricing policy for $T$ sequentially arriving customers subject to the $\varepsilon$-LDP constraint (see Theorem \ref{thm:regret-lower-bound}).
While minimax lower bounds of locally private estimators have been previously studied \citep{duchi2018minimax},
the lower bound in our problem setting is more complicated because the prices offered to sequentially arriving customers are \emph{adaptive}
and not independently distributed with respect to any underlying distribution, which is the case in the work of \cite{duchi2018minimax}.
To establish a lower bound for \emph{any adaptive} personalized pricing strategy subject to local privacy constraints, 
we carefully generalize the information theoretical arguments in \cite{duchi2018minimax} to adaptively collected data
and obtain a tight minimax lower bound.
More details and discussion is presented in Section~\ref{sec:lower-bound}.

\subsection{Related Literature}\label{subsec:lit_review}

%	\xnote{Sentao: please add a paragraph to discuss (and appraise) the work \cite{Han:21:generalized} as I recommend Jiheng Zhang to be an AE. Also, please try to include the citations \cite{Hu:20:regulation, Hu:20:privacy} (as we will nominate Hu Ming as an AE). }

This section reviews some related literature from two streams: the theory and application of data privacy, and the related paper in personalized pricing with demand learning. 

\textbf{Literature in data privacy.}  The concept of data privacy was first rigorously quantified by an important framework called differential privacy (DP), which was first introduced in \citet{dwork2006calibrating,dwork2006our}. The definition of DP has become a \textit{de facto} standard for data privacy in both academic and industrial practice. We refer the interested readers to \citet{dwork2014algorithmic,acquisti2016economics} for comprehensive reviews. Based on this concept, different theories and techniques have been developed, such as the technique of tree-based aggregation \citep{chan2011private}, which is used in the design of our algorithm CPPQ, and the mechanism of random perturbation (see e.g., \citealt{dwork2014algorithmic}), which is an important component in the design of both CPPQ and LPPQ.

A more related stream of literature to ours is the so-called online learning with differential privacy. That is, the decision maker has to learn the environment and make decisions on the fly, while preserving the data privacy. For example, \citet{mishra2015nearly} proposed (central) differentially private UCB and Thompson Sampling algorithms for multi-armed bandit (MAB) problems. 
\citet{shariff2018differentially}  studied linear contextual bandit under the CDP constraint, and \citet{ren2020multi,zheng2020locally} studied the (contextual) bandit problem with $\varepsilon$-LDP guarantee. { Later, \citet{Han:21:generalized} extended the results to generalized linear bandits with stochastic contexts. In particular, the authors leveraged the idea of stochastic gradient descent and proposed a novel LDP strategy so that their algorithms are proved to have regret $O((\ln(T)/\varepsilon)^2)$ or $\tilde O(T^{(1-\beta)/2}/\varepsilon^{1+\beta})$, depending on whether some ``optimality margin'' parameter $\beta=1$ or $\beta\in(0,1)$. }
Our work differ from the private MAB literature in that our model is nonparametric as opposed to the (generalized) linear model in MAB literature. Therefore, we cannot preserve the privacy (either centrally or locally) through protecting the demand parameters as in parametric models. Besides MAB problems, there are some other differentially private online learning problems such as private sequential learning \citep{xu2018query,Tsitsiklis2020private,xu2020optimal}, and dynamic pricing \citep{chen2020privacy}, which will be discussed later in literature review in personalized pricing. 

{ We also note that there is a growing body of literature on data privacy in service systems. For instance, \citet{Hu:20:privacy} studied customer-centric privacy management under queueing models, where customers are strategic in deciding whether to disclose private personal information to the service provider.}

\textbf{Literature in personalized pricing with demand learning.} As we discussed in the introduction, algorithmic dynamic pricing with demand learning has been increasingly popular especially in recent years (for an incomplete list of literature, see, e.g., \citealt{araman2009dynamic,besbes2009dynamic,farias2010dynamic,harrison2012bayesian,broder2012dynamic,den2013simultaneously,wang2014close,chen2015real,besbes2015suprising,cheung2017dynamic,ferreira2018online,wang2021uncertainty,miao2021general}). With the abundance of customer's personal data, there is also a growing trend of implementing personalized prices based on customers' contextual information. For instance, \citet{qiang2016dynamic} applied a greedy iterated least squares method on linear demand function. \citet{ban2021personalized} and \citet{javanmard2019dynamic} studied the high dimensional personalized pricing with parametric demand model and sparse parameters. \citet{keskin2020data} leveraged data clustering for customized electricity pricing. 

With widespread public concern of personal data security, several recent works in personalized pricing start to take the data privacy into consideration (see e.g., \citealt{tang20contextual,lei2020privacy,chen2020privacy,bimpikis2021data}). Among these literature, the most related work to this paper is \citet{chen2020privacy}, which also consider a dynamic personalized pricing problem with differential privacy guarantee. Compared with this paper, our work has the following differences. First, \citet{chen2020privacy} studied a parametric demand model, while this work considers a nonparametric model, making the demand learning and privacy protection completely different. Second, the differential privacy studied in \citet{chen2020privacy} is the $\varepsilon$-CDP, while our paper not only proposes an algorithm for $\varepsilon$-CDP, but also develops an algorithm for $\varepsilon$-LDP --- a stronger notation of privacy than $\varepsilon$-CDP in many cases. Another related paper to ours is \citet{lei2020privacy}, who also studied pricing algorithms satisfying $\varepsilon$-CDP and $\varepsilon$-LDP. The difference in their work is that their setting is an offline pricing problem while ours is online dynamic pricing with demand learning, and the demand model considered in \citet{lei2020privacy} is parametric (in contrast to the nonparametric model in this paper).  
To the best of our knowledge, this paper is the first to consider a dynamic personalized pricing problem with nonparametric demand under both CDP and LDP guarantees. 

\subsection{Paper Organization}
The rest of this paper is organized as follows. In Section \ref{sec:model}, the model and some technical assumptions are introduced. Section \ref{sec:dp_notions} introduces the important definitions of $\varepsilon$-CDP and $\varepsilon$-LDP in our problem setting, and their relationship. After that, the algorithms satisfying $\varepsilon$-CDP and $\varepsilon$-LDP are introduced in Section \ref{sec:cdp} and Section \ref{sec:ldp} respectively. Furthermore, we develop a lower bound for any algorithm with $\varepsilon$-LDP in Section \ref{sec:lower-bound}. In Section \ref{sec:num_exp}, some numerical experiments are used to illustrate the performance of the proposed algorithms. In the end, the paper is concluded in Section \ref{sec:conclusion}. Some technical proofs can be found in the appendix. 

\section{The Model and Assumptions}\label{sec:model}

%\xnote{sentao: I suggest we make the use of ``retailer'' and ``pricing platform'' more consistent throughout the paper. I suggest using platform?}
We study a stylized dynamic personalized pricing problem of a single type of product for $T$ consecutive selling periods.
At the beginning of selling period $t$, the pricing platform observes a contextual vector $\vct x_t\in\mathcal X\subseteq\mathbb R^d$ of the incoming customer.
The platform then offers a price $p_t\in[\underline p,\overline p]$, and the stochastically realized demand $y_t\in\mathcal Y\subseteq\mathbb R^+$ is being modeled 
by an unknown nonparametric model $\lambda:[\underline p,\overline p]\times\mathcal X\to\mathbb R^+$ as
\begin{equation}
\mathbb E[y_t|\vct x_t,p_t] = \lambda(p_t,\vct x), \;\;\;\;\;\; y_t\in\mathcal Y.
\label{eq:demand-model}
\end{equation}
Throughout the paper, we also define $f(p,\vct x):=p\lambda (p,\vct x)$ as the function that gives the expected revenue of price $p$
conditioned on customer context $\vct x$.

Clearly, when $\lambda$ (and subsequently $f$) is known \textit{a priori} to the platform, the optimal pricing strategy (without considerations of privacy ocncerns)
would be to simply set $p_t=p^*(\vct x_t)=\arg\max_{p\in[\underline p,\overline p]}f(p_t,\vct x_t)$.
Without knowing $\lambda$ or $f$, on the other hand, requires the platform to learn the unknown demand model and offer near-optimal personalized prices
simultaneously, commonly known in the literature as the \emph{exploration-exploitation tradeoff}.
We adopt the classical measure of \emph{cumulative regret} (we also call it \textit{regret} for brevity) to measure the performance of a pricing policy $\pi$ over $T$ time periods.
(See the next section for a rigorous definition of an admissible pricing policy and when it satisfies privacy guarantees.)
More specifically, the regret of a policy $\pi$ under model $f$ is defined as
\begin{equation}
\sR_T(f,\pi) := \mathbb E\left[\sum_{t=1}^T (f(p^*(\vct x_t),\vct x_t)-f(p_t,\vct x_t)
)\right],\;\;\;\;\;\;\text{where}\;\;p^*(\vct x_t)=\arg\max_{p\in[\underline p,\overline p]}f(p,\vct x_t).
\label{eq:defn-regret}
\end{equation}

We make the following assumptions throughout this paper:
\begin{enumerate}
\item[(A1)] The domains of $(x_t,y_t,p_t)$ satisfy $\mathcal X\subseteq[0,1]^d$, $\mathcal Y\subseteq[0,1]$ and $[\underline p,\overline p]\in[0,1]$.
Furthermore, 
$\vct x_t$ are i.i.d. over $t\in[T]$ which follows an unknown underlying distribution $P_X$ supported on $\mathcal X$ with probability density function $\chi:\mathcal X\to\mathbb R^+$
that satisfies $\sup_{\vct x\in\mathcal X}\chi(\vct x)\leq C_X$ almost surely;
\item[(A2)] There exists a finite constant $C_L<\infty$ such that $|f(p,\vct x)-f(p',\vct x')|\leq C_L(|p-p'|+\|\vct x-\vct x'\|_2)$ for all $p,p'\in[\underline p,\overline p]$
and $\vct x,\vct x'\in\mathcal X$;
\item[(A3)] For any hypercube $B\subseteq\mathcal X$ and $p\in[\underline p,\overline p]$, define the expected revenue and the optimal price on the hypercube $B$,
\begin{equation}\label{eq:opt_cube}
f_B(p) := \mathbb E_{P_X}[f(p,\vct x)|\vct x\in B], \qquad 
p^*(B)=\arg\max_{p\in[\underline p,\overline p]}f_B(p).
\end{equation}
Then there exist uniform constants $0<\sigma_H\leq C_H<\infty$ and $C_p<\infty$ such that
\begin{enumerate}
\item[(A3-a)] $p^*(B)\in(\underline p,\overline p)$; furthermore, $f_B(\cdot)$ is twice continuously differentiable in $p$ and satisfies
$\sigma_H^2\leq -f_B''(p)\leq C_H^2$ for all $p\in(\underline p,\overline p)$;
\item[(A3-b)] $\inf_{\vct x\in B}p^*(\vct x)\leq p^*(B)\leq \sup_{\vct x\in B}p^*(\vct x)$;
\item[(A3-c)] $\sup_{\vct x\in B}p^*(\vct x)-\inf_{\vct x\in B}p^*(\vct x)\leq C_p\sup_{\vct x,\vct x'\in B}\|\vct x-\vct x'\|_2$.
\end{enumerate}
\end{enumerate}

{Assumptions (A1) and (A2) are standard regularity assumptions in the literature (see, e.g.,\citealt{qiang2016dynamic,javanmard2019dynamic,ban2021personalized} for parametric demand models satisfying these two assumptions).
Assumption (A3) follows the model setup in the work of \cite{chen2021nonparametric}. First of all, the intuition of $f_B(p)$ is the average revenue of price $p$ when the context $\vct x$ is in the hypercube $B$. One can think of a case where the retailer can only observe $\vct{1}\{\vct x\in B\}$ instead of the value of $\vct x$, so that the best pricing strategy is obviously to charge $p^*(B)$. For the three technical conditions of Assumption (A3), (A3-b) and (A3-c) are exactly the same as in \citet{chen2021nonparametric} (in particular, the part two and three of Assumption 3). The only slightly different assumption compared with (part one of) Assumption 3 in \citet{chen2021nonparametric} (which basically assumes that $|f_B(p^*(B))-f_B(p)|=\Omega((p^*(B)-p)^2)$) is our (A3-a). 
On one hand,  as shown in Proposition 1 of \citet{chen2021nonparametric}, our assumption (A3-a) implies its counterpart in the first part of Assumption 3 in \citet{chen2021nonparametric}. On the other hand, for all the motivating examples studied in Remark 1 in \citet{chen2021nonparametric} (e.g., the linear covariate case, separable demand functions, and localized functions), 
our Assumption (A3-a) is satisfied as well.
}

% it shall be pointed out that in Proposition 1 of \citet{chen2021nonparametric}, (A3-a) is used as a practical criterion to verify whether a demand function satisfies part one of Assumption 3. In particular, all the examples \citet{chen2021nonparametric} used in their Remark 1 satisfy our Assumption (A3-a) as well. 

\section{Central and Local Differential Privacy}\label{sec:dp_notions}
This section introduces two important concepts of differential privacy: the CDP and LDP. In the following two subsections, we will discuss each of these two concepts adapted to our problem setting, and illustrate their relationship, especially their differences. 

\subsection{Central Differential Privacy}

We first introduce the standaerd definition of (central) differential privacy \citep{dwork2014algorithmic,dwork2006calibrating,dwork2006our}
for offline problems. 
\begin{definition}[differential priavcy]
Let $s_t:=(\vct x_t,y_t,p_t)\in\mathcal S:=\mathcal X\times\mathcal Y\times[\underline p,\overline p]$.
Let $\pi:(s_1,\cdots,s_T)\mapsto o$ be an offline randomized algorithm that takes customers' sensitive information as input and outputs statistics or decision $o\in\mathcal O$.
For any $\varepsilon>0$, $\pi$ satisfies $\varepsilon$-differential privacy if for all $s_1,\cdots,s_t,\cdots,s_T$ and $s_t'\neq s_t$,
and any measurable $O\subseteq\mathcal O$, 
it holds that
$$
\Pr[\pi(s_1,\cdots,s_t,\cdots, s_T)\in O] \leq e^{\varepsilon}\cdot \Pr[\pi(s_1,\cdots,s_t',\cdots,s_T)\in O].
$$
\label{defn:classical-dp}
\end{definition}

In Definition \ref{defn:classical-dp}, the output domain $\mathcal O$ captures all information released by the randomized algorithm $\pi$
to the general public, which would be the offered prices $\{p_t\}_{t=1}^T$ in our setting.
Such a definition, however, poses technical challenges in the context of dynamic personalized pricing as the price $p_t$ for customer arriving at time $t$
is highly indicative of the customer's personal information (see e.g., the discussion in Section 4.2 in \citealt{chen2020privacy}). More specifically, Proposition 1 in \citet{chen2020privacy} proves that any pricing policy satisfying Definition \ref{defn:classical-dp} will have worst-case regret at least $\Omega(T)$, suggesting that this definition is too strong for our setting. As a result, in the works of \citep{shariff2018differentially,chen2020privacy}, the notion of \emph{anticipating differential privacy} (for brevity of notation, we just call it central differential privacy, or CDP)
is introduced to focus on the impact of sensitive customers' information on \emph{future} prices as formalized below.
\footnote{The anticipating privacy notion defined here is slightly weaker than the definitions in \citep{shariff2018differentially,chen2020privacy},
which considered joint distributions of future prices. Nevertheless, we adopt this definition here to be more compatible
with existing definitions of local privacy notations, which is also appropriate for practical usage.}
\begin{definition}[central differential privacy for personalized pricing]
Let data of customer $t$ be $s_t:=(\vct x_t,y_t,p_t)\in\mathcal S:=\mathcal X\times\mathcal Y\times[\underline p,\overline p]$.
Let $\pi=(A_1,A_2,\cdots,A_T)$ be an admissible personalized pricing policy, where $A_t(\cdot|s_1,\cdots,s_{t})$ is a distribution of $p_t\in[\underline p,\overline p]$
measurable conditioned on $\{s_1,\cdots,s_t\}$.
We say $\pi$ satisfies $\varepsilon$-central differential privacy ($\varepsilon$-CDP)
if for all $j<t$, $s_1,\cdots,s_j,\cdots,s_{t-1}$ and $s_j'\neq s_j$ with $y_t,y_t'\in\mathcal Y$, and measurable $U\subseteq[\underline p,\overline p]$, it holds that
$$
A_t(U|\vct x_t,s_1,\cdots,s_j,\cdots,s_{t-1}) \leq e^{\varepsilon}A_t(U|\vct x_t,s_1,\cdots,s_j',\cdots,s_{t-1}).
$$
\label{defn:anticipating-dp}
\end{definition}

Intuitively, Definition \ref{defn:anticipating-dp} requires the pricing policy $\pi$ to be stable with respect to the sensitive information of a customer
\emph{prior to} the current time period, so that a malicious third party arriving at time $t$ cannot reliably infer protected information of 
previously arrived customers.
One key difference between Definitions \ref{defn:anticipating-dp} and \ref{defn:classical-dp} is the notable exclusion of $s_t$ from the conditional set (i.e., we do not require the pricing policy to be stable with respect to $\vct x_t$),
meaning that we assume the customer arriving at time $t$ is either 1) a malicious third-party who is therefore not interested in its own protected information,
or 2) a customer trusting that her sensitive information will \emph{not} be released to later customers directly or indirectly through prices. We refer to Figure \ref{fig:cdp} as a graphic illustration of $\varepsilon$-CDP.

\begin{figure}
	\centering
	\includegraphics[scale=0.33]{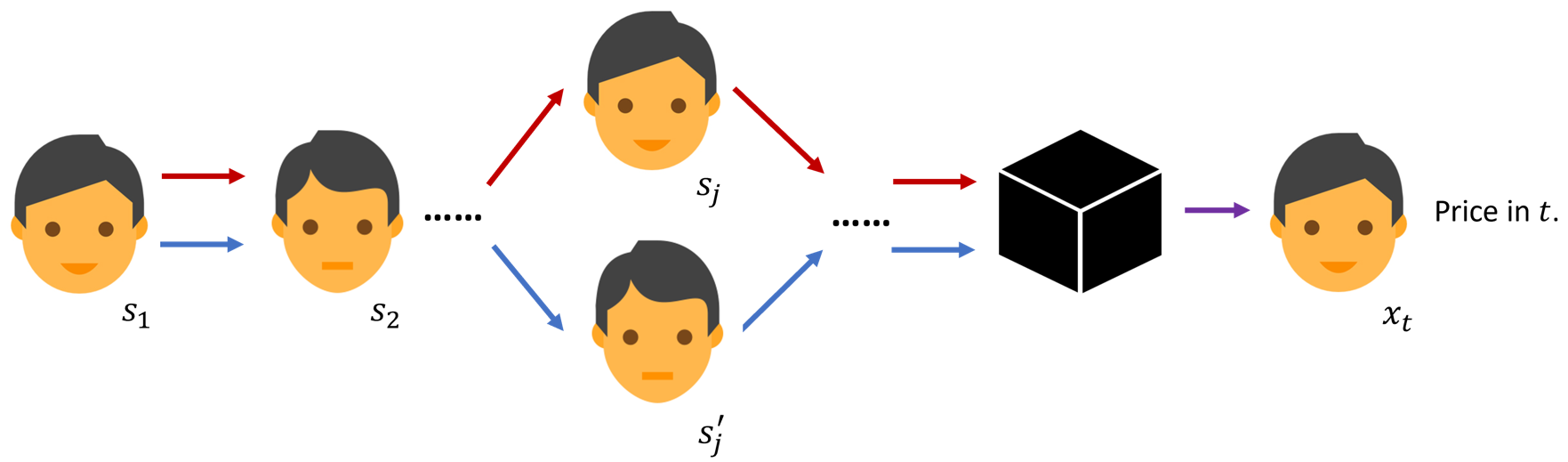}
	\caption{Graphic representation of $\varepsilon$-CDP. The red and blue arrows mean that two neighboring sequences of data (with the only difference in $s_j$ and $s_j'$) are not likely to be distinguished after privatization.}
	\label{fig:cdp}
\end{figure}

\subsection{Local Differential Privacy}

\begin{figure}
		\centering
		\includegraphics[scale=0.27]{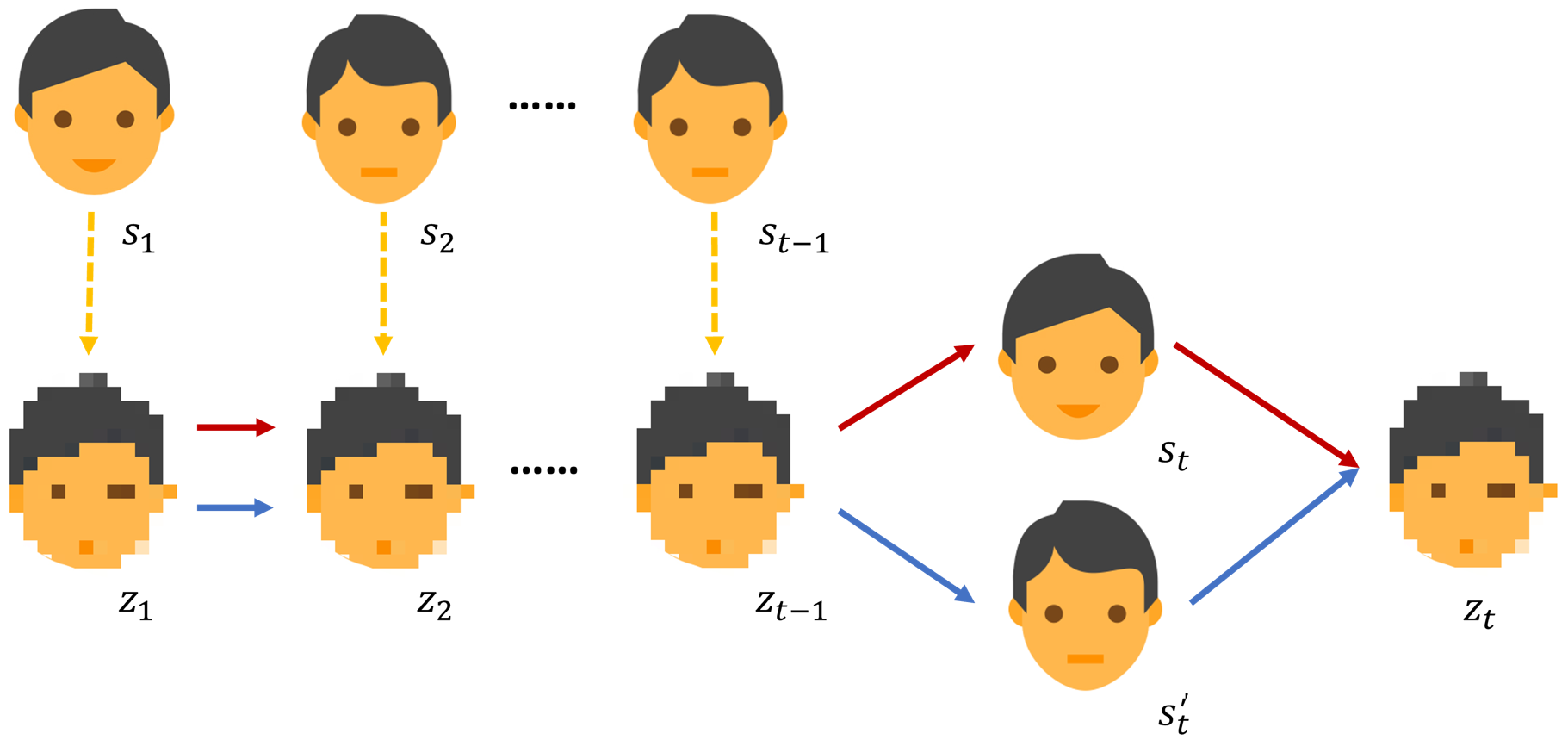}
		\caption{Graphic representation of statistics recorder $Q_t$ of $\varepsilon$-LDP. The red and blue arrows mean that privatizing different $s_t$ and $s_t'$ with the sequence of historical privatized data is not likely to be distinguished.}
		\label{fig:ldp_stat_recorder}
\end{figure}

\begin{figure}
	\centering
	\includegraphics[scale=0.25]{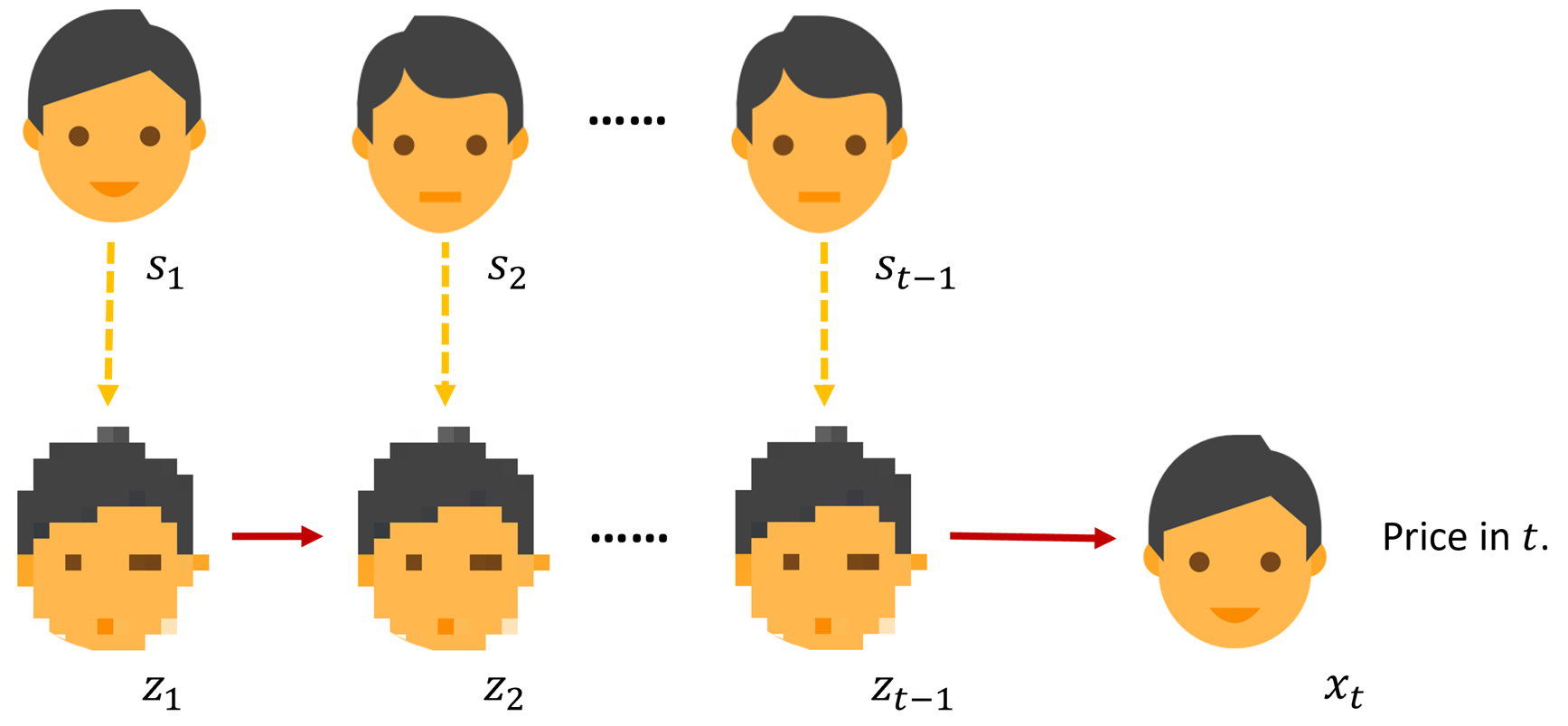}
	\caption{Graphic representation of pricing strategy $A_t$ of $\varepsilon$-LDP. This graph shows that the platform only uses the privatized historical data (instead of the real historical data) for pricing.}
	\label{fig:ldp_pricing}
\end{figure}

One fundamental assumption being made in the central differential privacy notions (including both Definitions \ref{defn:classical-dp} and \ref{defn:anticipating-dp}) is that the customers trust the \emph{platform} to protect their sensitive information,
and are only worried about malicious third parties pretending to be customers illegally extracting data indirectly through offered prices.
{ This is clear from the fact that the pricing policy $A_t$ is conditioned on the true data $\{s_{j}=(\vct x_j,y_j,p_j):j<t\}$ for all customers arriving before $t$. That is, the platform can use the actual data of its customers to make pricing decision in each time period.}
On the other hand, in local differential privacy formulations, it is essential to constrain the platform (i.e., its pricing algorithm/policy) so that the platform never stores the actual sensitive data of customers and instead make pricing decisions based on anonymized information.
More specifically, let $z_t\in\mathcal Z$ be the privatized/anonymized statistics the platform records at time $t$,
and abbreviate $\vct z_{<t}=(z_1,\cdots,z_{t-1})$ as the anonymized statistics of all customers arriving prior to time $t$.
The firm's (locally private) dynamic personalized pricing strategy can be parameterized by two (sequences of) conditional distributions:
the \emph{statistics recorder} $Q_t$, and the pricing strategy $A_t$, as follows:
\begin{eqnarray}
z_t &\sim& Q_t(\cdot|s_t, z_1,\cdots, z_{t-1}); \label{eq:defn-Qt}\\
p_t &\sim& A_t(\cdot|\vct x_t, z_1,\cdots,z_{t-1}).\label{eq:defn-At}
\end{eqnarray}
For notational convenience, in the rest of the paper we write $z_t=Q_t(s_t,\vct z_{<t})$ and $p_t=A_t(\vct x_t,\vct z_{<t})$ with the understanding
that both $Q_t$ and $A_t$ are randomized functions/procedures.
In non-private settings, one can simply let $z_t=s_t$ and then $p_t=A_t(\vct x_t,s_1,\cdots,s_{t-1})$ reduces to the standard personalized pricing policy.
With (local) privacy constraints, however, the statistics $\{z_t\}$ being recorded are constrained to be privatized statistics (i.e., do not leak
too much protected information $\{s_t\}$) and the pricing decisions $p_t$ are forced to be made upon the anonymized statistics $\{z_t\}$
instead of the sensitive data $\{s_t\}$.
It is also clear from Eq.~(\ref{eq:defn-Qt}) that $z_t$ and $s_j$ are independent conditioned on $s_t,\vct z_{<t}$, for all $j<t$. We refer to Figure \ref{fig:ldp_stat_recorder} and Figure \ref{fig:ldp_pricing} for graphic representations of $Q_t$ and $A_t$ respectively. 

%\xnote{I suggest adding a bit more explanations in the caption of the Figure? Otherwise, it might hard for readers to understand Figure 2-4?}

%In this paper we consider two versions of local differential privacy: the \emph{weak} local differential privacy (WLDP), which assumes
%the malicious third-party eavesdropper is not more powerful than the firm,
%and the \emph{strong} local differential privacy (SLDP) applicable to eavesdroppers more powerful than the firm as well.
%Below are detailed definitions and discussion.
%
%In the notion of weak local differential privacy (WLDP), we assume the malicious third-party is not more powerful than the firm who carries out the pricing decisions
%and can only see statistics already anonymized by the evfimievski2003limiting.
%Below is the formal definition:

The formal definition of local differential privacy, following the seminal works of \cite{evfimievski2003limiting,duchi2018minimax} in the literature, is given below:
\begin{definition}[local differential privacy for personalized pricing]
For any $\varepsilon>0$, a personalized pricing policy $\pi=\{Q_t,A_t\}_{t=1}^T$ satisfies $\varepsilon$-local differential privacy ($\varepsilon$-LDP)
if for every $t$, $\vct z_{<t}$ and $s_t\neq s_t'$, it holds for every measurable $Z_t\subseteq\mathcal Z$ that
$$
Q_t(Z_t|s_t,\vct z_{<t}) \leq e^{\varepsilon}\cdot Q_t(Z_t|s_t',\vct z_{<t}).
$$
\label{defn:ldp}
\end{definition}

{ We provide some additional notes for the definition of LDP. First, LDP guarantees the privacy of customer's data by the statistics recorder $Q_t$ instead of directly by $A_t$ as in CDP. More specifically, by privatizing $s_t$ through $Q_t$, the pricing policy $A_t$ can only use privatized data for decision making. As a result, even if the adversary in time $t$ is able to infer any $z_j$ where $j<t$ or hack the whole dataset of the platform,  the $j$-th customer's personal data is still protected as none of the true data $s_j$ is stored in the system. Second, in LDP the platform also makes pricing decision $A_t$ conditioned on customer's raw data $\vct x_t$ as in CDP. Again this can happen when customer in $t$ is the adversary, who  will not  hack his/her own data. In the case of a normal customer, statistics recorder $Q_t$ guarantees that $\vct x_t$ is unlikely to be leaked to others; thus customer $t$ can still trust the platform to use his/her personal data. 	
%\bf Some explanations. Hacking the system might be a better scenario here.
}

While local and central differential privacy are not necessarily comparable, in the special case of the ``non-interactive regime''
where $\{Q_t\}$ are independent distributions, the following proposition shows that local differential privacy is stronger than its centralized counterpart.
\begin{proposition}
Let $\pi=\{Q_t,A_t\}_{t=1}^T$ be a personalized pricing policy satisfying $\varepsilon$-LDP.
Suppose also that $Q_t(\cdot)$ is independent of $\vct z_{<t}$ for all $t$.
Then $\pi$ satisfies $\varepsilon$-CDP.
\label{prop:lpd-adp}
\end{proposition}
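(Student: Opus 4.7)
The plan is to first identify what the \emph{CDP-view} distribution of $p_t$ given the sensitive history $s_1,\dots,s_{t-1}$ should be when the policy is originally specified in the LDP framework, and then finish with a one-step integration against the LDP inequality. In the LDP formulation of \eqref{eq:defn-Qt}--\eqref{eq:defn-At}, the primitive is $A_t(\cdot\mid \vct x_t,\vct z_{<t})$ rather than a distribution conditioned directly on $s_1,\dots,s_{t-1}$, so I would \emph{define} the latter by marginalizing over the conditional law of $\vct z_{<t}$ given the true customer data:
\begin{equation*}
\bar A_t(U\mid \vct x_t, s_1,\dots,s_{t-1}) \;:=\; \int A_t(U\mid \vct x_t,\vct z_{<t})\,\prod_{k=1}^{t-1} Q_k(dz_k\mid s_k).
\end{equation*}
Here I have used the hypothesis that $Q_k(\cdot)$ is independent of $\vct z_{<k}$, which causes the joint law of $\vct z_{<t}$ given $s_1,\dots,s_{t-1}$ to factorize into a product measure. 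This factorization is the crucial structural simplification: $\bar A_t$ is, by construction, the marginal distribution of $p_t$ conditional on $\vct x_t$ and the sensitive history, which is precisely the object appearing in Definition~\ref{defn:anticipating-dp}.

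Next, I would fix $j<t$ together with two alternatives $s_j\neq s_j'$, holding all other $s_k$ ($k\neq j,\,k<t$) fixed, and apply Fubini--Tonelli to the display above to isolate the $j$-th factor:
\begin{equation*}
\bar A_t(U\mid \vct x_t, s_1,\dots,s_j,\dots,s_{t-1}) \;=\; \int g(z_j)\, Q_j(dz_j\mid s_j),
\end{equation*}
where $g(z_j):=\int A_t(U\mid \vct x_t,\vct z_{<t})\prod_{k\neq j}Q_k(dz_k\mid s_k)$ is a $[0,1]$-valued measurable function of $z_j$ alone (the $s_k$ for $k\neq j$ being held fixed). Because $Q_j$ is independent of $\vct z_{<j}$, the LDP guarantee of Definition~\ref{defn:ldp} collapses to $Q_j(E\mid s_j)\le e^{\varepsilon} Q_j(E\mid s_j')$ for every measurable $E$. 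Approximating the bounded measurable $g$ from below by simple functions and applying this pointwise bound term by term lifts the inequality to $\int g\, dQ_j(\cdot\mid s_j)\le e^{\varepsilon}\int g\, dQ_j(\cdot\mid s_j')$, which gives
\begin{equation*}
\bar A_t(U\mid \vct x_t, s_1,\dots,s_j,\dots,s_{t-1}) \;\le\; e^{\varepsilon}\, \bar A_t(U\mid \vct x_t, s_1,\dots,s_j',\dots,s_{t-1}),
\end{equation*}
which is exactly the $\varepsilon$-CDP condition in Definition~\ref{defn:anticipating-dp}.

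The calculation itself is essentially a single integration, so the genuine (modest) difficulty is conceptual rather than computational: one must first justify that $\bar A_t$ is the correct object to test against Definition~\ref{defn:anticipating-dp}, since the LDP model never explicitly provides a distribution of $p_t$ conditioned on the raw data $s_1,\dots,s_{t-1}$. The independence hypothesis is likewise essential: without it, swapping $s_j\mapsto s_j'$ would perturb not only the $j$-th factor $Q_j(dz_j\mid s_j,\vct z_{<j})$ but also the conditional laws of every subsequent $z_k$ with $k>j$, so that extracting a single factor of $e^{\varepsilon}$ would require a much more delicate iterated-conditioning argument and might fail without further structural assumptions.
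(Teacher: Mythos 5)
Your proof is correct and follows essentially the same route as the paper's: marginalize the pricing kernel $A_t$ over the conditional law of $\vct z_{<t}$, use the independence hypothesis to factorize that law into $\prod_{k<t} Q_k(\cdot\mid s_k)$, and apply the LDP bound to the $j$-th factor alone. Your write-up is in fact slightly more careful than the paper's, which displays the product of kernels without making the integration over $\vct z_{<t}$ explicit or noting the simple-function step needed to lift the measure-level inequality $Q_j(E\mid s_j)\le e^{\varepsilon}Q_j(E\mid s_j')$ to integrals of bounded nonnegative functions.
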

\begin{proof}{Proof of Proposition \ref{prop:lpd-adp}.}
	This is true because
	\[
	\begin{split}
	&A_t(U|\vct x_t,s_1,\ldots,s_j,\ldots,s_{t-1})\\
	=&A_t(U|\vct x_t,z_1,\ldots,z_j,\ldots,z_{t-1})\mathbb{P}(z_1,\ldots,z_j,\ldots,z_{t-1}|s_1,\ldots,s_j,\ldots,s_{t-1})\\
	=&A_t(U|\vct x_t,z_1,\ldots,z_j,\ldots,z_{t-1})\prod_{s\in[t]\backslash\{j\}}Q_s(z_s|s_s)Q_j(z_j|s_j)\\
	\leq& e^{\varepsilon}A_t(U|\vct x_t,z_1,\ldots,z_j,\ldots,z_{t-1})\prod_{s\in[t]\backslash\{j\}}Q_s(z_s|s_s)Q_j(z_j|s_j')\\
	=&A_t(U|\vct x_t,s_1,\ldots,s_j',\ldots,s_{t-1})
	\end{split}
	\]
	where the second equality is because $Q_t(\cdot)$ is independent of $\vct z_{<t}$ for all $t$, and the inequality is from the definition of LDP, which states that $Q_j(z_j|s_j)\leq e^\epsilon Q_j(z_j|s_j')$ for any $j<t$ and $s_j,s_j'\in\mathcal{S}$. $\square$
	
\end{proof}

\section{The \textsc{Centralized-Private-Parallel-Quadrisection} (CPPQ) Algorithm}\label{sec:cdp}

In this section we describe a personalized pricing algorithm that satisfies the $\varepsilon$-CDP as defined in Definition \ref{defn:anticipating-dp}.
The proposed algorithm is named \textsc{Centralized-Private-Parallel-Quadrisection} (CPPQ) and its pseudocode description is given in Algorithm \ref{alg:cppt}. There are several important techniques in CPPQ which will be used as building blocks for algorithms with $\varepsilon$-LDP (see Section \ref{sec:ldp}).
% {\bf [Explain the intuitions here.]}

\begin{algorithm}[t]
\caption{The \textsc{Centralized-Private-Parallel-Quadrisection} (CPPQ) algorithm}
\label{alg:cppt}
\begin{algorithmic}[1]
\State \textbf{Input}: time horizon $T$, privacy parameter $\varepsilon$, algorithm parameters $J,c_1,c_1',c_2$.
\State Initialize: partition $[0,1]^d$ into $J$ equally-sized hypercubes (each side's length being $h=J^{-1/d}$);
for each hypercube $B_j$, $j\in[J]$, let $\vct\rho_j=(\underline p,\frac{1}{4}\underline p+\frac{3}{4}\overline p, \frac{1}{2}\underline p+\frac{1}{2}\overline p,\frac{3}{4}\underline p+\frac{1}{4}\overline p,\overline p)\in[\underline p,\overline p]^5$; $r_{j,k}(0)=\mu_{j,k}(0)=0$ for $k\in\{1,2,3,4,5\}$;
%initialize tree-based statistics ${\vct r}_j=(0,0,0,0)$, ${\vct\mu}_j=(0,0,0,0)$; 
$\varsigma_j\gets 0$;
\For{$t=1,2,\cdots,T$}
	\State Receive $\vct x_t\in[0,1]^d$ and let $j_t\in[J]$ be the index such that $\vct x_t\in B_{j_t}$;
	\State Offer price $p_t=\rho_{j_t,k_t}$ where $k_t\equiv t\mod 5$, and receive $y_t\in[0,1]$;
	\For{$j=1,2,\cdots,J$}
		\State Let $u_{t,j,k}=\vct 1\{j_t=j\wedge k_t=k\}y_tp_t$ and $v_{t,j,k}=\vct 1\{j_t=j\wedge k_t=k\}$ for $j\in[J]$ and $k\in[5]$;
		\State Update $r_{j,k_t}(t)\gets\textsc{TreeBasedAggregation}(\{u_{\tau,j,k_t}\}_{\tau<t},t,u_{t,j,k_t},\varepsilon/2,T)$;
		\State Update $\mu_{j,k_t}(t)\gets\textsc{TreeBasedAggregation}(\{v_{\tau,j,k_t}\}_{\tau<t},t,v_{t,j,k_t},\varepsilon/2,T)$;
		\State Let $r_{j,k}(t)\gets r_{j,k}(t-1)$ and $\mu_{j,k}(t)\gets \mu_{j,k}(t-1)$ for $k\neq k_t$;
		%\State {\bf Need to revise here: cannot reset $\vct\varsigma$ or $\vct\mu$.}
		\State For $k\in[5]$ compute $\hat r_{jk}=r_{j,k}(t)-r_{j,k}(\varsigma_j)$ and $\hat\mu_{jk}=\mu_{j,k}(t)-\mu_{j,k}(\varsigma_j)$;
		%let $n_j=t-\varsigma_j$;
		%\xnote{the parameter $n_j$ has not been used in this algorithm.}
		\State Let $\underline\mu_{1\to 3} = \min\{\hat\mu_{j1},\hat\mu_{j2},\hat\mu_{j3}\}$ and $\underline \mu_{3\to 5}=\min\{\hat\mu_{j3},\hat\mu_{j4},\hat\mu_{j5}\}$;
		\If{$\underline\mu_{1\to 3}\geq c_2$ and $\min\{\frac{\hat r_{j3}}{\hat\mu_{j3}}-\frac{\hat r_{j2}}{\hat\mu
		_{j2}},\frac{\hat r_{j2}}{\hat\mu_{j2}}-\frac{\hat r_{j1}}{\hat\mu
		_{j1}}\}> \frac{3c_1}{\sqrt{\underline\mu_{1\to 3}}}+\frac{3c_1'}{\underline\mu_{1\to 3}}$} \label{line:cppt-update1}
			\State $\vct\rho_j \gets (\rho_{j2},\frac{1}{4}\rho_{j2}+\frac{3}{4}\rho_{j5},\frac{1}{2}\rho_{j2}+\frac{1}{2}\rho_{j5}, \frac{3}{4}\rho_{j2}+\frac{1}{4}\rho_{j5},\rho_{j5})$, 
			$\varsigma_j\gets t$;
		\ElsIf{$\underline\mu_{3\to 5}\geq c_2$ and $\min\{\frac{\hat r_{j3}}{\hat\mu_{j3}}-\frac{\hat r_{j4}}{\hat\mu
		_{j4}},\frac{\hat r_{j4}}{\hat\mu_{j4}}-\frac{\hat r_{j5}}{\hat\mu
		_{j5}}\}> \frac{3c_1}{\sqrt{\underline\mu_{3\to 5}}}+\frac{3c_1'}{\underline\mu_{3\to 5}}$ \label{line:cppt-update2}} 
			\State $\vct\rho_j \gets (\rho_{j1},\frac{1}{4}\rho_{j1}+\frac{3}{4}\rho_{j4},\frac{1}{2}\rho_{j1}+\frac{1}{2}\rho_{j4}, \frac{3}{4}\rho_{j1}+\frac{1}{4}\rho_{j4},\rho_{j4})$, 
			$\varsigma_j\gets t$;
		\EndIf
	\EndFor
\EndFor
\end{algorithmic}
\end{algorithm}

We first explain the intuitions and design principles behind Algorithm \ref{alg:cppt} without data privacy considerations.
Algorithm \ref{alg:cppt} utilizes two main ideas to carry out nonparametric personalized pricing with demand learning.
The first idea is to partition the space of contextual vectors $\{\vct x_t\}_{t=1}^T\subseteq\mathcal X=[0,1]^d$ into $J$ small hypercubes (denoted by $B_j$ for $j\in[J]$) with equal volume.
The algorithm then treats customers whose context vectors belonging to the same hypercube $B_j$ the same.
The effectiveness of this ``localized'' strategy is justified by the Lipschitz continuity of the expected reward function $f$ (Assumption (A2)) and similar conditions
in Assumption (A3), implying that customers with similar context vectors have similar demand.
Clearly, with more hypercubes (i.e., larger $J$) we are able to approximate customers' demands more accurately as hypercubes become smaller.
On the other hand, we would suffer from the insufficiency of sample size for each hypercube
%\xnote{I replace the word ``severe learning penalties'' by ``the insufficiency of sample size''. It is a bit unclear about the wording of learning penalties.} 
as we are learning the localized demands of more hypercubes.
An appropriate selection of the number of hypercubes ($J$) is vital for the good performance of the CPPQ policy, which we give more details in Theorem \ref{thm:cppt-regret}, where $J$ is set to be $\lceil {T}^{d/(d+4)}\rceil$.

The second idea in Algorithm \ref{alg:cppt} is to use \emph{quadrisection search} to find the optimal price $p^*(B_j)$ for all customers belonging to the same hypercube $B_j$.
The quadrisection search procedure is justified by Assumption (A3-a), which asserts that the expected reward of all customers belonging to hypercube $B_j$ 
is \emph{strongly concave} with respect to the offered price $p$. While similar bisection or trisection methods have been adopted in the dynamic pricing literature for concave objectives \citep{wang2014close,lei2014near}, the previous work does not require dividing price intervals into four equally sized pieces.
The reason we need to use a quadrisection search approach is because of the following:
when dividing the price interval into three equally sized pieces, the (noisy) comparison of objective values between the two mid-points in this trisection search
approach may be non-conclusive {(see e.g., Figure \ref{fig:trisection}).}
Under normal circumstances, the price interval could still be updated and shrunk in case of non-conclusive comparison 
once a pre-allocated sample budget is consumed.
However, when the pricing platform is subject to privacy constraints (especially the local privacy constraints)
it is difficult for the algorithm to maintain accurate sample counts for each hypercube.
On the other hand, by increasing the number of mid-points to explore, we can ensure that 
at least one sets of conditions in Lines \ref{line:cppt-update1}, \ref{line:cppt-update2} of Algorithm \ref{alg:cppt} are automatically satisfied
when a hypercube receives enough samples.

{In the left panel of Figure \ref{fig:quadrisection}, without noise, if we have $f_j(\rho_{j1})\leq f_j(\rho_{j2})\leq f_j(\rho_{j3})$, this must imply that $p^*(B_j)\geq \rho_{j2}$ by concavity of $f_j(\cdot)$ (i.e., Assumption (A3-a)). Therefore, by shrinking the price range from $[\rho_{j1},\rho_{j5}]$ to $[\rho_{j2},\rho_{j5}]$, we still have $p^*(B_j)$ contained in the new price range. Similarly, the right panel of Figure \ref{fig:quadrisection} shows the case that $f_j(\rho_{j5})\leq f_j(\rho_{j4})\leq f_j(\rho_{j3})$ implies $p^*(B_j)\leq \rho_{j4}$; thus we can shrink $[\rho_{j1},\rho_{j5}]$ to $[\rho_{j1},\rho_{j4}]$ without losing $p^*(B_j)$. When there is noise (from both demand realization and privacy), we still have the same conclusion (with high probability) given the data samples of of all prices are large enough. 
} 

\begin{figure}[t]
	\centering
	\includegraphics[width=0.45\textwidth]{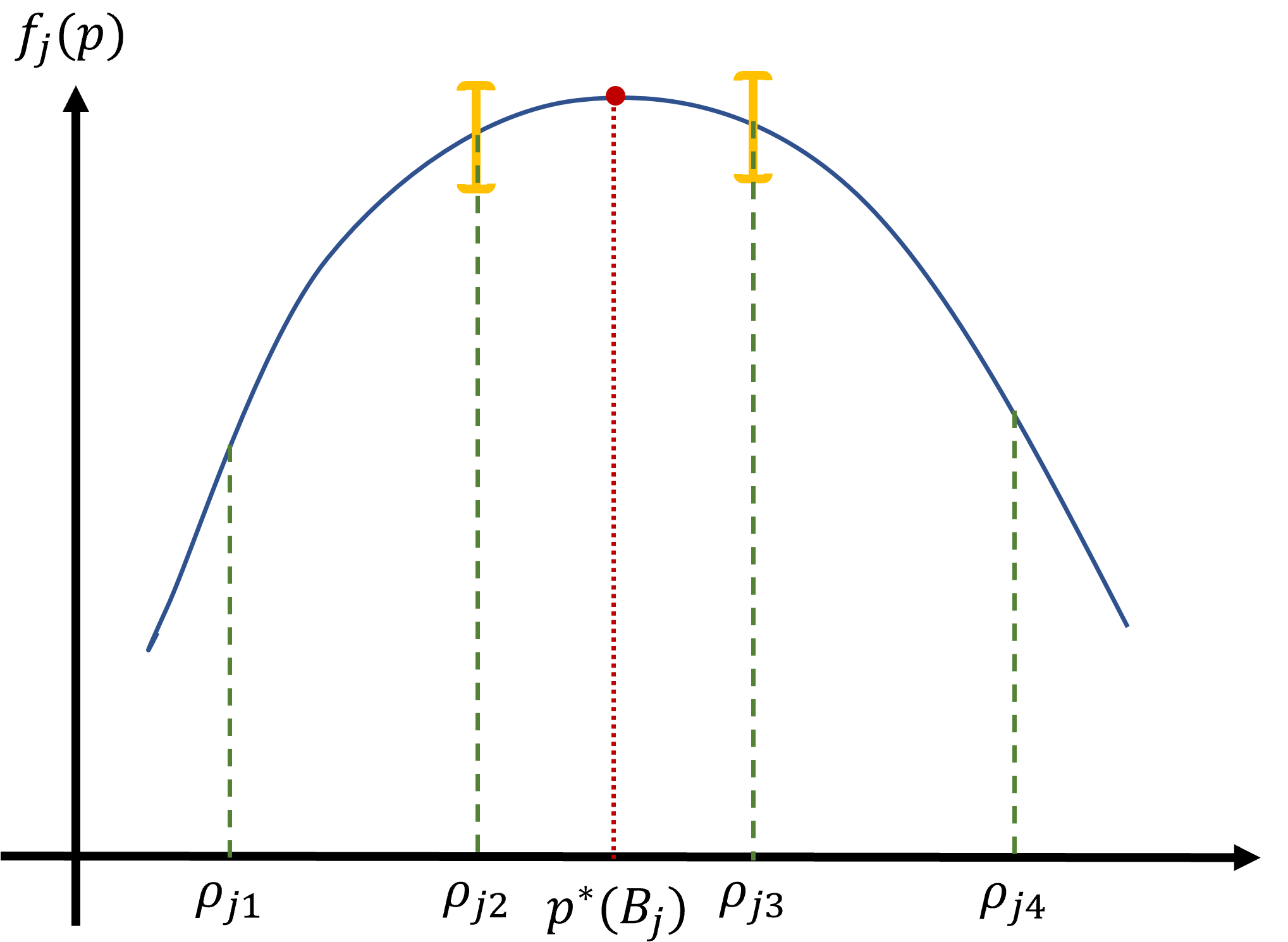}
	\caption{An illustration of trisection search. In this case, we observe similar revenue of two mid-points; hence it is difficult to decide whether it is better to shrink from left or right.}
	\label{fig:trisection}
\end{figure}

\begin{figure}[t]
	\centering
	\includegraphics[width=0.9\textwidth]{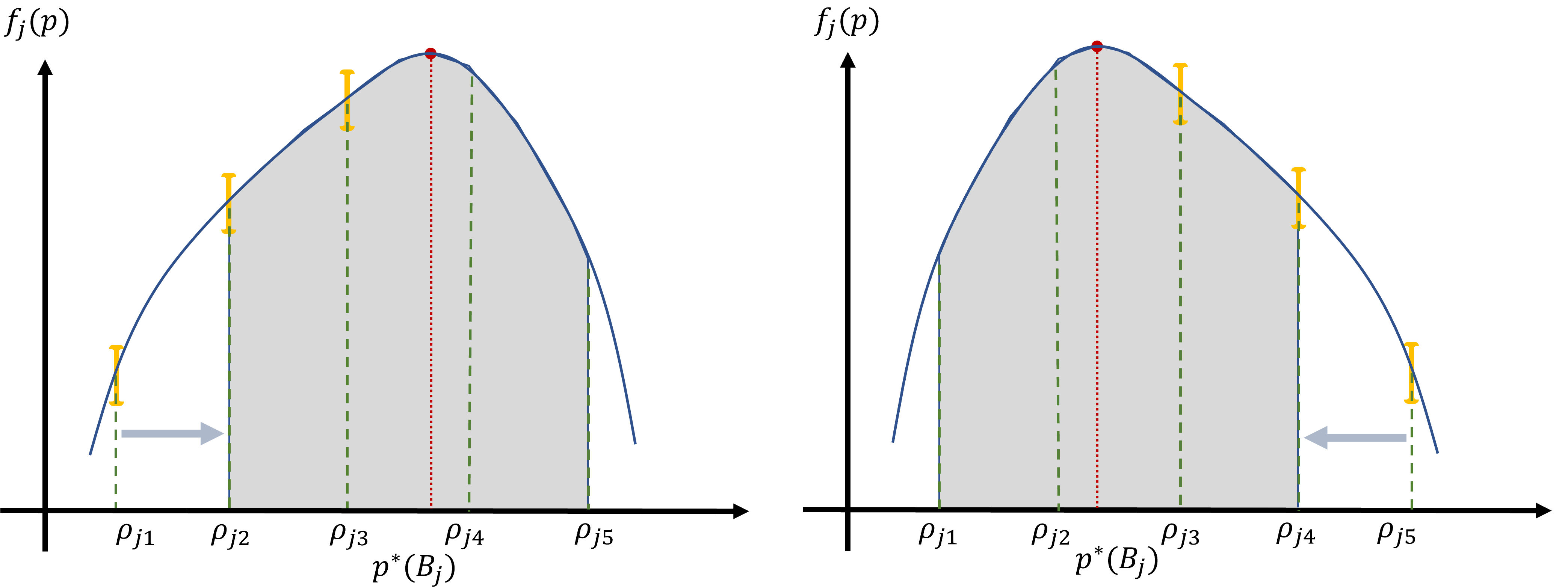}
	\caption{An illustration of quadrisection search. The left/right panel satisfies condition in Line \ref{line:cppt-update1}/Line \ref{line:cppt-update2} of Algorithm \ref{alg:cppt}.}
	\label{fig:quadrisection}
\end{figure}

Note that in the work of \cite{chen2021nonparametric} an alternative multi-armed bandit formulation was adopted, which is asymptotically optimal under non-private settings.
It is however unclear how to  extend such approaches to private settings. As the number of arms in the multi-armed bandit formulation must scale polynomially with time horizon $T$, privatizing all arms will lead to a sub-optimal regret.

It is worth explaining in more details of the construction of the $r_{j,k}(t)$ and $\mu_{j,k}(t)$ values in Algorithm \ref{alg:cppt},
and how they can be used to guide the parallel quadrisection updates of $\vct\rho_j$ in different hypercubes $B_j$.
To gain more intuitions, imagine for now that Algorithm \ref{alg:cppt} is no longer subject to any privacy constraints.
In this ideal scenario, we have $\varepsilon\to \infty$ and as a result $r_{j,k}(t)=\sum_{\tau\leq t}u_{\tau,j,k}$
and $\mu_{j,k}(t) = \sum_{\tau\leq t}v_{\tau,j,k_t}$.
Hence, $r_{j,k}(t)$ corresponds to the cumulative reward received for customers in hypercube $B_j$ to whom price $\rho_{jk}$ is offered,
and $\mu_{j,k}(t)$ is the total number of customers in hypercube $B_j$ to whom price $\rho_{jk}$ is offered.
Since $\vct\rho_j=(\rho_{jk})_{k=1}^r$ is constantly updated in Algorithm \ref{alg:cppt},
the $\varsigma_j$ serves as a ``pointer'' for hypercube $B_j$, meaning that the time periods after $\varsigma_j$ correspond to rewards and customer counts
for the current version of the $\vct\rho_j$ parameter.
With $\hat r_{jk}=r_{jk}(t)-r_{jk}(\varsigma_j)$ and $\hat \mu_{jk}=\mu_{jk}(t)-\mu_{jk}(\varsigma_j)$,
the ratio $\hat r_{jk}/\hat\mu_{jk}$ would then approximate $f_{B_j}(\rho_{jk})$ by the law of large numbers.

Finally, we explain how Algorithm \ref{alg:cppt} protects customers' data privacy in a centralized manner.
The main idea is to calibrate artificial Laplace noise into the cumulative reward $r_{j,k}(t)$ and customer counts $\mu_{j,k}(t)$,
so that the price and realized demand of a single customer will not affect much of the final reward/customer counts.
It is worth noting that the hypercube \emph{index} $j_t$ also reveals sensitive information of $\vct x_t$. {To prevent an adversary from identifying the hypercube $j_t$ that $\vct x_t$ belongs to}, we need to calibrate artificial noise into \emph{all} hypercubes $j=1,\ldots, J$ at each time period $t$, regardless of whether $\vct x_t$ belongs to a particular hypercube or not.   
In addition, to alleviate composition of central differential privacy,
our proposed CPPQ algorithm utilizes a \emph{tree-based aggregation} framework  \citep{chan2011private,dwork2014analyze}. For completeness purposes we provide the pseudo-code description of this aggregation framework 
in Algorithm \ref{alg:tree-based-aggregation}.
The main objective of this procedure is to release sequential private data with provable central privacy guarantees.
More specifically, the $r_{j,k_t}(t)$ statistics constructed in Algorithm \ref{alg:tree-based-aggregation} is a (centralized) privatized version
of the cumulative statistic $\sum_{\tau\leq t}u_{\tau,j,k_t}$, and similarly $\mu_{j,k_t}(t)$ is a privatized version
of the cumulative statistic $\sum_{\tau\leq t}v_{\tau,j,k_t}$.
{We refer the readers to the works of \cite{chan2011private,dwork2014analyze} as well as the recent work of \cite{chen2020privacy} for details
and motivations of this procedure and its analysis.}

\begin{algorithm}[t]
\label{alg:tree-based-aggregation}
\caption{The tree-based aggregation procedure \citep{chan2011private,dwork2014analyze}}
\begin{algorithmic}[1]
\Function{TreeBasedAggregation}{$\{u_\tau\}_{\tau<t}$, $t$, $s_t$, $\varepsilon$, $T$}
	\State ({Initialization}: $\alpha_\ell=\hat\alpha_\ell=0$ for $\ell=0,1,\cdots,\lfloor\log_2 T\rfloor$ when initialized;)
	\State Let $\{\alpha_\ell,\hat\alpha_\ell\}_{\ell=0}^L$ be associated with $\{u_\tau\}_{\tau<t}$, where $L=\lfloor\log_2 T\rfloor$ and $\varepsilon'=\varepsilon/(L+1)$;
	\State Let $t=\sum_{\ell=0}^L b_\ell(t) 2^\ell$ with $b_\ell(t)\in\{0,1\}$ be the binary expression of $t$;
	\State $\ell_{\min}(t):=\min\{\ell: b_\ell(t)=1\}$; 
	\State Update $\alpha_{\ell_{\min}(t)}\gets\sum_{\ell<\ell_{\min}(t)}\alpha_\ell + u_t$
	and $\alpha_\ell=\hat\alpha_\ell=0$ for all $\ell<\ell_{\min}(t)$;
	\State Calibrate noise $\hat\alpha_{\ell_{\min}}(t)\gets\hat\alpha_{\ell_{\min}}(t) + w_t$, where $w_t\sim \mathrm{Lap}(2/\varepsilon')$;
	\State \textbf{return} $\hat U(t)=\sum_{\ell=0}^L b_\ell(t)\hat\alpha_\ell(t)$;
\EndFunction
\end{algorithmic}
\end{algorithm}

Our next theorem proves that the proposed CPPQ policy satisfies $\varepsilon$-CDP, and also establishes an upper bound on the regret incurred by the algorithm.

\begin{theorem}
The CPPQ policy described in Algorithm \ref{alg:cppt} satisfies $\varepsilon$-CDP as defined in Definition \ref{defn:anticipating-dp}.
Furthermore, if Algorithm \ref{alg:cppt} is executed with $J=\lceil {T}^{d/(d+4)}\rceil$, $c_1=\sqrt{\ln(2T^3)}$, $c_1'=4c_2$ and $c_2=76\varepsilon^{-1}\ln^2(2T^3)$,
then the regret of Algorithm \ref{alg:cppt} can be upper bounded by
$$
 \mathbb E\left[\sum_{t=1}^T f(p^*(\vct x_t),\vct x_t)-f(p_t,\vct x_t)\right]\leq \bar C_1\times T^{(d+2)/(d+4)} + \bar C_1'\times \varepsilon^{-1}T^{d/(d+4)} + O(1),
$$
where $\bar C_1,\bar C_1'$ are constants satisfying $\bar C_1\leq \const\times C_H^2(\sigma_H^{-4}+C_H\sqrt{C_X})\ln^2(2C_H^2T^3)+C_H^2C_p^2d/2$ and $\bar C_1' \leq \const\times C_H^2\sigma_H^{-2}\ln^3(2C_H^2T^3)$, where $\const$ are numerical constants that do not depend on any problem parameters.
\label{thm:cppt-regret}
\end{theorem}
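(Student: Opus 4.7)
The plan is to establish the two claims of Theorem~\ref{thm:cppt-regret} --- the $\varepsilon$-CDP guarantee and the regret bound --- in turn.

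\textbf{Privacy.} For the CDP guarantee, I would observe that the only dependence of the released price $p_t$ on past sensitive data $\{s_j\}_{j<t}$ is through the sequences $\{r_{j,k}(\tau)\}$ and $\{\mu_{j,k}(\tau)\}$ for $\tau<t$, each produced by one call to \textsc{TreeBasedAggregation} with privacy budget $\varepsilon/2$. Invoking the standard guarantee for tree-based aggregation (\citealt{chan2011private,dwork2014analyze}) that each released sequence is $(\varepsilon/2)$-differentially private, sequential composition over the two sequences yields $\varepsilon$-CDP for the joint release, and hence for $p_t$ which is a measurable function of these privatized statistics and the current $\vct x_t$ alone. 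Note that we privatize across \emph{all} hypercubes $j\in[J]$ at each time step, which is what protects the index $j_t$ (and therefore $\vct x_t$) of earlier customers.

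\textbf{Regret decomposition.} For the regret, I would split the per-customer regret as
\[
f(p^*(\vct x_t),\vct x_t) - f(p_t,\vct x_t) = [f(p^*(\vct x_t),\vct x_t) - f(p^*(B_{j_t}),\vct x_t)] + [f(p^*(B_{j_t}),\vct x_t) - f(p_t,\vct x_t)].
\]
Conditioning on $\vct x_t\in B_{j_t}$ reduces the expectation of the second bracket to $f_{B_{j_t}}(p^*(B_{j_t})) - f_{B_{j_t}}(p_t)$. For the first bracket, Assumptions (A3-b), (A3-c) imply $|p^*(\vct x)-p^*(B_j)|\leq C_p\sqrt{d}/J^{1/d}$, and (A3-a) applied to degenerate cubes transfers strong concavity to $f(\cdot,\vct x)$ with $-f''(p,\vct x)\leq C_H^2$, so a second-order Taylor expansion at $p^*(\vct x)$ upgrades this Lipschitz estimate to at most $C_H^2 C_p^2 d/(2J^{2/d})$ per customer, contributing $\tfrac{1}{2}C_H^2 C_p^2 d\cdot T J^{-2/d}$ to the total regret --- the source of the $C_H^2 C_p^2 d/2$ term in $\bar C_1$.

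\textbf{Quadrisection analysis (the main obstacle).} The hard work lies in bounding $\sum_t[f_{B_{j_t}}(p^*(B_{j_t}))-f_{B_{j_t}}(p_t)]$. I would define a high-probability event $\mathcal E$ on which
\[
\Bigl|\frac{\hat r_{jk}}{\hat\mu_{jk}}-f_{B_j}(\rho_{jk})\Bigr|\leq \frac{c_1}{\sqrt{\hat\mu_{jk}}}+\frac{c_1'}{\hat\mu_{jk}}
\]
holds for every $(j,k,t)$, by combining a Hoeffding-type bound for the stochastic demand with a tail bound on the Laplace noise injected by the $L=\lfloor\log_2 T\rfloor$ tree nodes. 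A union bound with $c_1=\sqrt{\ln(2T^3)}$ and $c_1'=4c_2=\Theta(\varepsilon^{-1}\ln^2 T)$ gives $\Pr(\mathcal E^c)=O(1/T)$, contributing only $O(1)$ to the expected regret. Under $\mathcal E$ I would then establish (i) \emph{correctness}: whenever the condition at Line~\ref{line:cppt-update1} or~\ref{line:cppt-update2} fires, the new interval still contains $p^*(B_j)$, since empirical monotonicity of three consecutive midpoints transfers (via strong concavity of $f_{B_j}$) to true monotonicity and hence to $p^*(B_j)\in[\rho_{j2},\rho_{j5}]$ or $[\rho_{j1},\rho_{j4}]$; and (ii) \emph{bounded phase length}: inside a phase with interval width $w$, strong concavity forces the true gap between the extreme and adjacent midpoint on the correct side to be $\Omega(\sigma_H^2 w)$, so once each of the three relevant arms accumulates $\tilde N_w=\Omega(\sigma_H^{-4}w^{-2}\ln T+\sigma_H^{-2}w^{-1}\ln^2 T/\varepsilon)$ samples, one of the two shrinkage conditions must fire. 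The quadrisection (rather than trisection) is essential here: with four sub-intervals, one of the two symmetric conditions is \emph{always} triggered regardless of whether $p^*(B_j)$ lies left or right of the center, removing the need for the algorithm to maintain a noiseless sample counter to decide when to declare a phase complete.

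\textbf{Assembly.} Combining correctness and bounded phase length, phase $k$ (width $w_k=w_0(3/4)^k$) inside a hypercube contributes at most $5\tilde N_{w_k}\cdot(C_H^2/2)w_k^2 = O(C_H^2\sigma_H^{-4}\ln T)+O(C_H^2\sigma_H^{-2}w_k\ln^2 T/\varepsilon)$ to exploration regret. Summing the geometric series over $k=0,\ldots,O(\ln T)$ phases gives per-hypercube exploration regret $O(C_H^2\sigma_H^{-4}\ln^2 T+C_H^2\sigma_H^{-2}\ln^2 T/\varepsilon)$, and a separate bound on the last un-shrunk phase using the fact that its width decays geometrically in the number of completed shrinkages contributes only lower-order terms. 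Multiplying by $J=\lceil T^{d/(d+4)}\rceil$ hypercubes (with an $O(C_H\sqrt{C_X})$ factor arising from Chernoff concentration of the random counts $N_j$ around $T\chi(B_j)$) and adding the approximation contribution yields the regret bound with $\bar C_1, \bar C_1'$ of the stated form once one verifies that the choice $J=\lceil T^{d/(d+4)}\rceil$ balances the $TJ^{-2/d}$ approximation term against the $J\cdot\ln^2 T$ exploration term.
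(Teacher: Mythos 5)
Your overall architecture (tree-based aggregation plus post-processing for the $\varepsilon$-CDP claim; discretization bias plus an epoch-by-epoch quadrisection analysis under a uniform concentration event of the form $|\hat r_{jk}/\hat\mu_{jk}-f_{B_j}(\rho_{jk})|\leq c_1/\sqrt{\hat\mu_{jk}}+c_1'/\hat\mu_{jk}$) is the same as the paper's, and the privacy argument and the $C_H^2C_p^2d\,h^2/2$ bias term are correct. However, there is a genuine quantitative error at the heart of your quadrisection step. Under Assumption (A3-a), when $p^*(B_j)\geq\rho_{j3}$ the gap between adjacent quadrisection points satisfies $f_{B_j}(\rho_{j,k+1})-f_{B_j}(\rho_{jk})\geq\frac{\sigma_H^2}{2}(\delta_\tau/4)^2=\Omega(\sigma_H^2 w^2)$, i.e., it is \emph{quadratic} in the interval width $w=\delta_\tau$, not $\Omega(\sigma_H^2 w)$ as you assert (take $p^*(B_j)=\rho_{j3}$: then $f_{B_j}(\rho_{j3})-f_{B_j}(\rho_{j2})\leq\frac{C_H^2}{2}(w/4)^2$, so no linear lower bound can hold). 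Consequently the sample budget needed to trigger a shrinkage is $\tilde N_w=\Omega(\sigma_H^{-4}w^{-4}\ln T+\sigma_H^{-2}w^{-2}\varepsilon^{-1}\ln^2T)$, not the $w^{-2}$ and $w^{-1}$ powers you wrote, and the per-phase regret $\tilde N_{w_k}\cdot\frac{C_H^2}{2}w_k^2$ is of order $C_H^2\sigma_H^{-4}w_k^{-2}\ln T$, which \emph{grows} geometrically as the interval shrinks rather than being $O(\ln T)$ per phase. Your conclusion that the per-hypercube exploration regret is polylogarithmic, hence $\tilde O(J)=\tilde O(T^{d/(d+4)})$ in total for the non-private part, cannot be right: it would beat the $\Omega(T^{(d+2)/(d+4)})$ lower bound of \cite{chen2021nonparametric} that holds even without privacy.

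The repair is exactly where the paper does its real work: since the stopping condition has fired by the end of epoch $\tau$, one has $\max_k\tilde\mu_{jk}\gtrsim c_1^2/(C_H^2\delta_\tau^4)$, hence $\delta_\tau^{-2}\lesssim C_H\sqrt{\max_k\tilde\mu_{jk}}/c_1$, so the problematic term $C_H^2c_1^2\sigma_H^{-4}\delta_\tau^{-2}$ is bounded by $\const\cdot c_1C_H^3\sigma_H^{-4}\sqrt{\max_k\tilde\mu_{jk}}$; summing $\sqrt{\tilde\mu_{jk}}$ over the $O(J\ln T)$ epoch--hypercube pairs via Cauchy--Schwarz (using $\sum_{j,k,\tau}\tilde\mu_{jk}\leq T$) gives an exploration contribution of order $\sqrt{JT}\,\mathrm{polylog}(T)=\tilde O(T^{(d+2)/(d+4)})$, which is precisely the origin of the $\sigma_H^{-4}$ factor inside $\bar C_1$ --- a factor your accounting of the constants omits entirely. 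The $\varepsilon$-dependent part of your bookkeeping survives (after correcting the gap, $\delta_\tau^2\cdot c_1'\sigma_H^{-2}\delta_\tau^{-2}=c_1'\sigma_H^{-2}$ is indeed $O(\varepsilon^{-1}\ln^2T)$ per epoch, giving $\tilde O(\varepsilon^{-1}J)=\tilde O(\varepsilon^{-1}T^{d/(d+4)})$ in total), but as written your proof does not establish the stated bound.
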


\begin{remark}
The $\varepsilon$-CDP privacy guarantee of Algorithm \ref{alg:cppt} holds for \emph{any} values of algorithm parameters $J,c_1,c_1'$ and $c_2$.
This gives practitioners more flexibility in tuning numerical constants in these algorithm parameters for better empirical performances.
\end{remark}
%\xnote{I am wondering should we hide $2.1 \times 10^4$ and $6.6 \times 10^5$? The referee might feel it is too large.}

{Since the primary focus of the paper lies on the more practical local privacy  setting,}
we relegate the proof of Theorem \ref{thm:cppt-regret} to the supplementary material.
It is however interesting to discuss the regret upper bound obtained in Theorem \ref{thm:cppt-regret} and contrast it with existing results of \cite{chen2021nonparametric}
under non-privacy settings.
The dominating term (as $T\to\infty$) in Theorem \ref{thm:cppt-regret} is $\widetilde O(T^{(d+2)/(d+4)})$ with the coefficient $\bar C_1$ being independent from
the privacy parameter $\varepsilon$. This term matches the upper and lower bound in \cite{chen2021nonparametric}.
The cost of performance arising from protecting customers' data privacy is reflected in the $\widetilde O(\varepsilon^{-1}T^{d/(d+4)})$ term,
with smaller $\varepsilon$ corresponding to stronger privacy guarantees and therefore larger regret.
However, note that for this term the $T^{d/(d+4)}$ regret will be asymptotically dominated by the $T^{(d+2)/(d+4)}$ regret in the other term,
showing that the impact of privacy constraints will diminish as more customers and their data are available to the platform.
This is a unique feature of the central privacy regime for which the platform has centralized control over the release of sensitive information,
which is also observed in the work of \cite{chen2020privacy} for parametric demand models.
The situation will be completely different for locally private settings, as we shall see in the next section and Theorem \ref{thm:regret-lppt}.

\section{The \textsc{Locally-Private-Parallel-Quadrisection} (LPPQ) Algorithm}\label{sec:ldp}

In this section we describe a personalized pricing algorithm that satisfies $\varepsilon$-LDP.
The proposed algorithm is named \textsc{Locally-Private-Parallel-Quadrisection} (LPPQ) and its pseudocode description is given in Algorithm \ref{alg:lppt}.
%{\bf [Explain the intuitions here. Also explain the differences from \citep{chen2021nonparametric}]}

\begin{algorithm}[t]
\caption{The \textsc{Locally-Private-Parallel-Quadrisection} (LPPQ) algorithm}
\label{alg:lppt}
\begin{algorithmic}[1]
\State \textbf{Input}: time horizon $T$, privacy parameter $\varepsilon$, algorithm parameters $J,\kappa_1,\kappa_2$.
\State Initialize: partition $[0,1]^d$ into $J$ equally-sized hypercubes (each side's length being $h=J^{-1/d}$);
for each hypercube $B_j$, $j\in[J]$, let $\vct\rho_j=(\underline p,\frac{1}{4}\underline p+\frac{3}{4}\overline p, \frac{1}{2}\underline p+\frac{1}{2}\overline p,\frac{3}{4}\underline p+\frac{1}{4}\overline p,\overline p)\in[\underline p,\overline p]^5$, $\vct r_j(0)=(0,0,0,0,0)$, $\varsigma_j=0$;
\For{$t=1,2,\cdots,T$}
	\State Receive $\vct x_t\in[0,1]^d$ and let $j_t\in[J]$ be an integer such that $\vct x_t\in B_{j_t}$;
	\State Offer price $p_t=\rho_{j_tk_t}$ where $k_t\equiv t\mod 5$, and receive $y_t\in[0,1]$;
	\For{$j=1,2,\cdots,J$}
		\State $r_{j,k_t}(t)\gets r_{j,k_t}(t-1) + \vct 1\{j=j_t\}p_ty_t + w_{j,t}$, $w_{j,t} \overset{i.i.d }{\sim}\mathrm{Lap}(2/\varepsilon)$; %\Comment{$w_{j,t}$ are \emph{i.i.d.} for all $j$ and $t$}
		\State $r_{j,k}(t)\gets r_{j,k}(t-1)$ for $k\neq k_t$;
		\State For $k\in[5]$ compute $\hat r_{jk}=r_{j,k}(t)-r_{j,k}(\varsigma_j)$; let $n_j\gets t- \varsigma_j$;
		\If{$n_j\geq \kappa_2$ and $\min\{\hat r_{j2}-\hat r_{j1},\hat r_{j3}-\hat r_{j2}\}/(5h^dn_j) > 3\kappa_1/(\varepsilon h^d\sqrt{n_j})$\label{line:lppt-update1}}
			\State $\vct\rho_j \gets (\rho_{j2},\frac{1}{4}\rho_{j2}+\frac{3}{4}\rho_{j5},\frac{1}{2}\rho_{j2}+\frac{1}{2}\rho_{j5}, \frac{3}{4}\rho_{j2}+\frac{1}{4}\rho_{j5},\rho_{j5})$, 
			$\varsigma_j\gets t$;
		\ElsIf{$n_j\geq \kappa_2$ and $\min\{\hat r_{j3}-\hat r_{j4},\hat r_{j4}-\hat r_{j5}\}/(5h^dn_j) > 3\kappa_1/(\varepsilon h^d\sqrt{n_j})$ \label{line:lppt-update2}}
			\State  $\vct\rho_j \gets (\rho_{j1},\frac{1}{4}\rho_{j1}+\frac{3}{4}\rho_{j4},\frac{1}{2}\rho_{j1}+\frac{1}{2}\rho_{j4}, \frac{3}{4}\rho_{j1}+\frac{1}{4}\rho_{j4},\rho_{j4})$, 
			$\varsigma_j\gets t$;;
		\EndIf
	\EndFor
\EndFor
\end{algorithmic}
\end{algorithm}

The main ideas and principles of the LPPQ algorithm are the same as the CPPQ algorithm presented in the previous section:
the algorithm first partitions the contextual vector space $\mathcal X=[0,1]^d$ into $J$ equally sized small hypercubes\footnote{The choice of $J$ is, however, different from the CPPQ algorithm. See Theorems \ref{thm:regret-lppt} and \ref{thm:cppt-regret} for more details.}
and then uses quadrisection search to localize the optimal price $p^*(B_j)$ (see \eqref{eq:opt_cube}) for all customers whose contextual vectors belong to hypercube $B_j$.
The major difference between CPPQ and LPPQ lies in how the privatized statistics $\hat r_{jk},\hat\mu_{jk}$ are constructed,
as central and local privacy impose different constraints on the platform's side.
Below we explain the difference in details:
\begin{enumerate}
\item In the central privacy setting, the per-period instantaneous statistics $u_{j,t,k}=\vct 1\{\vct x_t\in B_j\wedge k_t=k\}p_ty_t$ and $v_{j,t,k}=\vct 1\{\vct x_t\in B_j\wedge k_t=k\}$ involving sensitive data are in complete possession of the pricing platform.
The platform calibrates noise whenever it needs to release statistics $r_{j,k}(t)$ or $\mu_{j,k}(t)$ for pricing or model update purposes.
In this way, the pricing platform has full knowledge of each customer's sensitive data, but third-party malicious agents would not be able to recover
these sensitive data through interactions with the platform.
It can also be shown (see, e.g., Lemma \ref{lem:cppt-concentration} in the supplementary material) that the magnitude of noise calibrated into $r_{j,k}(t)$ and $\mu_{j,k}(t)$
is on the order of $O(\varepsilon^{-1}\ln^2 T)$, a relatively small level since the cumulative statistics $r_{j,k}(t)$ and $\mu_{j,k}(t)$ could be as large as $n_j$.

On the other hand, in the local privacy setting, the per-period instantaneous statistics $\vct 1\{\vct x_t\in B_j\wedge k_t=k\}p_ty_t$ is directly privatized by a Laplace noise 
and then communicated to the pricing platform. In this way, the platform keeps no copy of any customer's sensitive data and the system/protocol
is therefore more secure compared to CPPQ.
The downside is that the noise calibrated to the privatized statistics is on the order of $\widetilde O(\varepsilon^{-1}\sqrt{n_j})$ (see e.g. Lemma \ref{lem:ucb-validity} in the coming sections), significantly larger than
$O(\varepsilon^{-1}\ln^2 T)$ achievable for central privatizing mechanisms.

\item Because the calibrated noise magnitude is too large in local privacy settings, it is no longer feasible to keep track of the customer count statistics $\hat\mu_{jk}$ like CPPQ does as the signals in the customer counts are too weak. Instead, in the LPPQ algorithm we directly use $n_j=t-\varsigma_j$ {(i.e., the total number of time periods after the previous pointer $\varsigma_j$)} to construct confidence intervals.
This difference is more explicit if one compares the conditions of Lines \ref{line:cppt-update1}, \ref{line:cppt-update2} in Algorithm \ref{alg:cppt}
with those in Algorithm \ref{alg:lppt}, as the confidence intervals in the former conditions are constructed using privatized customer counts $\hat\mu_{jk}$
while the confidence intervals in the conditions of Lines \ref{line:lppt-update1}, \ref{line:lppt-update2} in Algorithm \ref{alg:lppt}
are built directly using the total time period counts $n_j$.
\end{enumerate}

As local privacy is the main focus of this paper, we present detailed analysis of the privacy and regret performance guarantees of the proposed LPPQ policy
in the next two sections. We also present an information theoretical lower bound in Section~\ref{sec:lower-bound} that nicely complements
our regret upper bound in Theorem \ref{thm:regret-lppt}.

\subsection{Privacy Analysis}

%{\bf [Need to revise all proofs using new notation]}

To see the LPPQ policy satisfies $\varepsilon$-LDP, we first explain how the policy can be parameterized as $\{Q_t,A_t\}_{t=1}^T$
as defined in Eqs.~(\ref{eq:defn-Qt},\ref{eq:defn-At}).
For each time period $t$, the intermediate variable $\vct z_t\in\mathbb R^J$ is produced as $z_{tj}=\vct 1\{j=j_t\}p_ty_t + w_{j,t}$,
where $w_{j,t}\overset{i.i.d}{\sim}\mathrm{Lap}(2/\varepsilon)$.
Clearly, the distribution of $\vct z_t$ is measurable conditioned on $s_t=(\vct x_t,y_t,p_t)$, satisfying Eq.~(\ref{eq:defn-Qt}).
With $\{\vct z_1,\cdots,\vct z_{t-1}\}$, the algorithm can compute the values of $\{\vct\rho_j,\vct r_j,\varsigma_j,n_j\}_{j=1}^J$ without accessing
any of $s_1,\cdots,s_{t-1}$.
The offered price $p_t$ only depends on $k_t$, $\vct x_t$ and $\{\vct\rho_j\}_{j=1}^J$.
Hence, $A_t$ is measurable conditioned on $\vct x_t$ and $\vct z_{<t}$, satisfying Eq.~(\ref{eq:defn-At}).
The following proposition then follows immediately:
\begin{proposition}
The LPPQ policy described in Algorithm \ref{alg:lppt} satisfies $\varepsilon$-LDP.
\label{prop:lppt-privacy}
\end{proposition}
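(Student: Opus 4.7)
The plan is to verify Definition \ref{defn:ldp} directly via a one-shot application of the Laplace mechanism, using the parametrization of $Q_t$ and $A_t$ already exhibited in the paragraph preceding the proposition. First I would fix an arbitrary time $t$, an arbitrary prior output sequence $\vct z_{<t}$, and two alternative data points $s_t, s_t' \in \mathcal S$. Conditioned on $(s_t,\vct z_{<t})$, the update rule in line 7 of Algorithm \ref{alg:lppt} shows that $\vct z_t \in \mathbb{R}^J$ can be written as $\vct z_t = \vct v(s_t) + \vct w_t$, where $\vct v(s_t) := p_ty_t\cdot \vct e_{j_t}$ is the $J$-dimensional vector whose only nonzero coordinate is $p_ty_t$, located at the position $j_t$ of the hypercube containing $\vct x_t$, and $\vct w_t$ is a vector of $J$ i.i.d.\ $\mathrm{Lap}(2/\varepsilon)$ draws. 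Because $\vct w_t$ is drawn fresh at time $t$ independently of everything in the past, the conditioning on $\vct z_{<t}$ only pins down the public state $(\vct\rho_j,\varsigma_j,n_j,\vct r_j(t-1))_{j\in[J]}$ but does not influence the noise law; it will therefore drop out of the likelihood ratio I compute below.

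Next I would bound the $\ell_1$ sensitivity of the signal map $s \mapsto \vct v(s)$. Assumption (A1) gives $\mathcal Y\subseteq[0,1]$ and $[\underline p,\overline p]\subseteq[0,1]$, so $p_ty_t,\,p_t'y_t' \in [0,1]$. Since each of $\vct v(s_t)$ and $\vct v(s_t')$ has at most one nonzero coordinate, they can disagree in at most two positions, yielding
\[
\|\vct v(s_t) - \vct v(s_t')\|_1 \;\leq\; p_ty_t + p_t'y_t' \;\leq\; 2.
\]
The conditional Lebesgue density of $\vct z_t$ given $(s_t,\vct z_{<t})$ is the product Laplace density $q(\vct z_t\mid s_t) = \prod_{j=1}^J (\varepsilon/4)\exp(-(\varepsilon/2)|z_{tj}-v_j(s_t)|)$, so the coordinatewise reverse triangle inequality $|a-c|-|a-b|\leq |b-c|$ gives
\[
\frac{q(\vct z_t\mid s_t)}{q(\vct z_t\mid s_t')} \;=\; \exp\!\left(\tfrac{\varepsilon}{2}\sum_{j=1}^J\bigl(|z_{tj}-v_j(s_t')| - |z_{tj}-v_j(s_t)|\bigr)\right) \;\leq\; \exp\!\left(\tfrac{\varepsilon}{2}\|\vct v(s_t)-\vct v(s_t')\|_1\right) \;\leq\; e^{\varepsilon}.
\]
Integrating this pointwise ratio bound over any measurable $Z_t\subseteq\mathbb R^J$ yields $Q_t(Z_t\mid s_t,\vct z_{<t}) \leq e^{\varepsilon} Q_t(Z_t\mid s_t',\vct z_{<t})$, which is exactly Definition \ref{defn:ldp}.

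There is no serious obstacle in this argument; it reduces to a direct invocation of the Laplace mechanism. The one subtlety worth flagging is that although the algorithm injects Laplace noise into \emph{all} $J$ coordinates of $\vct z_t$ at every round (so that an adversary cannot read off the hypercube index $j_t$ from the support pattern of $\vct z_t$), the relevant $\ell_1$ sensitivity remains the universal constant $2$, independent of $J$, because each signal vector $\vct v(s_t)$ is $1$-sparse. This is precisely why the per-coordinate noise scale $2/\varepsilon$ suffices for $\varepsilon$-LDP without any dependence on the number of hypercubes, and it is what makes the $\ell_1$-based Laplace mechanism the natural tool here rather than e.g.\ Gaussian perturbation.
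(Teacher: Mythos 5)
Your proof is correct and follows essentially the same route as the paper's: bound the $\ell_1$-sensitivity of the one-sparse signal vector $y_tp_t\vct e_{j_t}$ by $2$ and invoke the Laplace mechanism with scale $2/\varepsilon$. You simply spell out the density-ratio calculation that the paper delegates to the citation of \cite{dwork2014algorithmic}.
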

\begin{proof}{Proof of Proposition \ref{prop:lppt-privacy}.}
Fix arbitrary $t$. Because $y_t\in\mathcal Y\subseteq[0,1]$ and $p_t\in[\underline p,\overline p]\subseteq[0,1]$,
the $\ell_1$-sensitivity of $y_tp_t\vct e_{j_t}\in\mathbb R^J$ is upper bounded by 2 (that is, $\sup_{s_t,s_t'}\|y_tp_t\vct e_{j_t}-y_t'p_t'\vct e_{j_t'}\|_1\leq 2$).
Hence, the $\varepsilon$-LDP of $z_{tj}=\vct 1\{j=j_t\}p_ty_t + w_{j,t}$, $j\in[J]$, $w_{j,t}\overset{i.i.d.}{\sim}\Lap(2/\varepsilon)$
is guaranteed by the Laplace mechanism \citep{dwork2014algorithmic}. $\square$
\end{proof}

\subsection{Regret Analysis}

The main objective of this section is to establish the following theorem upper bounding the cumulative regret of LPPQ
when algorithm parameters are carefully chosen.
\begin{theorem}
Suppose Assumptions (A1) to (A3) hold, and Algorithm \ref{alg:lppt} is executed with $J=\lceil (\varepsilon\sqrt{T})^{d/(d+2)}\rceil$,
$\kappa_1=1.7\sqrt{\ln(2T)}$ and $\kappa_2=31\ln T$. Then the regret of Algorithm \ref{alg:lppt} can be upper bounded by
\begin{align*}
\mathbb E\left[\sum_{t=1}^T f(p^*(\vct x_t),\vct x_t)-f(p_t,\vct x_t)\right]
&\leq \overline C_2\times \varepsilon^{-2/(d+2)} T^{(d+1)/(d+2)},
%&= \widetilde O(\varepsilon^{-2/(d+2)}T^{(d+1)/(d+2)}),
\end{align*}
where $\overline C_2:= \sigma_H^{-1}{(160\kappa_1+C_X\sqrt{\kappa_2})\sqrt{2\ln(2C_XC_L T)}}+0.5C_H^2C_p^2C_X d$.
\label{thm:regret-lppt}
\end{theorem}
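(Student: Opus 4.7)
The plan is to mirror the CPPQ regret analysis of Theorem~\ref{thm:cppt-regret} but with confidence radii sized to the \emph{local} Laplace noise. Using the decomposition
\begin{equation*}
\sum_{t=1}^T\bigl[f(p^*(\vct x_t),\vct x_t)-f(p_t,\vct x_t)\bigr]
= \sum_t \bigl[f(p^*(\vct x_t),\vct x_t)-f(p^*(B_{j_t}),\vct x_t)\bigr]
+ \sum_t \bigl[f(p^*(B_{j_t}),\vct x_t)-f(p_t,\vct x_t)\bigr],
\end{equation*}
I would bound the first (discretization) term by $\tfrac12 C_H^2 C_p^2 C_X d\cdot Th^2$: (A3-c) gives $|p^*(\vct x_t)-p^*(B_{j_t})|\leq C_p\sqrt d\,h$ inside each hypercube, which combined with the curvature bound in (A3-a) produces a quadratic $O(h^2)$ gap per period (the same argument as in the CPPQ proof). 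With $h=J^{-1/d}$ and $J=\lceil(\varepsilon\sqrt T)^{d/(d+2)}\rceil$, one has $Th^2=\varepsilon^{-2/(d+2)}T^{(d+1)/(d+2)}$, already matching the target rate and accounting for the $\tfrac12 C_H^2 C_p^2 C_X d$ piece of $\overline C_2$.

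For the second (pricing) term, the first step is a concentration lemma (this is what Lemma~\ref{lem:ucb-validity} will assert) showing that on a good event $\mathcal E$ of probability $\geq 1-T^{-1}$, for every $(t,j,k)$ with $n_j=t-\varsigma_j\geq\kappa_2$,
\begin{equation*}
\bigl|\hat r_{jk}-(n_j/5)\,P_X(B_j)\,f_{B_j}(\rho_{jk})\bigr|\leq \kappa_1\sqrt{n_j}/\varepsilon.
\end{equation*}
The radius combines a Bernstein/Chernoff tail for the Binomial assignment of contexts to $B_j$ (enabled by $n_j\geq\kappa_2=31\ln T$) with a sum-of-Laplaces tail for the $n_j$ i.i.d.\ noises $w_{j,t}\sim\Lap(2/\varepsilon)$, producing the characteristic $\sqrt{n_j}/\varepsilon$ scale. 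This is exactly where LPPQ's analysis departs from CPPQ's polylogarithmic tree-aggregation noise and is the structural reason that the algorithm must work with the wall-clock $n_j$ instead of true per-hypercube customer counts. A union bound over $O(JT)$ events together with $\kappa_1=1.7\sqrt{\ln(2T)}$ yields the probability, and the complement of $\mathcal E$ contributes only $O(1)$ to the expected regret.

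Conditioning on $\mathcal E$, I would argue inductively over the updates that $p^*(B_j)$ always lies in the current $[\rho_{j1},\rho_{j5}]$: whenever Line~\ref{line:lppt-update1} (resp.\ Line~\ref{line:lppt-update2}) triggers, the noise-corrected gaps certify $p^*(B_j)\geq\rho_{j2}$ (resp.\ $\leq\rho_{j4}$) using the strong concavity in (A3-a). Inside a phase of width $w=\rho_{j5}-\rho_{j1}$, (A3-a) caps the per-period pricing loss in $B_j$ by $(C_H^2/2)w^2$. Using $-f_{B_j}''\geq\sigma_H^2$ to lower-bound the smaller of the two candidate gaps $f_{B_j}(\rho_{jk+1})-f_{B_j}(\rho_{jk})$ by $\Omega(\sigma_H^2 w^2)$ (worst case: $p^*(B_j)\approx\rho_{j3}$), the update condition must trigger once $n_j\approx \kappa_1^2/(\sigma_H^4 h^{2d}w^4\varepsilon^2)$.

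Because widths shrink by $3/4$ at every update, the algorithm's phases terminate (or get stuck in the last one) at the width $w^*$ for which per-phase trigger time equals the remaining horizon, namely $w^{*2}\approx \kappa_1/(\sigma_H^2 h^d\sqrt T\,\varepsilon)$. The cumulative $\int_0^T w_t^2\,dt$ for one hypercube is geometric-dominated by $w^{*2}\cdot T\approx \kappa_1\sqrt T/(\sigma_H^2 h^d\varepsilon)$, and the per-hypercube pricing regret is
\begin{equation*}
\leq (C_H^2 C_X/2)\,h^d \int_0^T w_t^2\,dt = \widetilde O\!\bigl(C_H^2 C_X\sqrt T/(\sigma_H^2\varepsilon)\bigr)
\end{equation*}
(the $h^d$ from the customer-density conveniently cancels the $1/h^d$ from $w^{*2}$). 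Summing over the $J$ hypercubes gives $\widetilde O(J\sqrt T/\varepsilon)=\widetilde O(\varepsilon^{-2/(d+2)}T^{(d+1)/(d+2)})$, matching the discretization term and producing the $\sigma_H^{-1}$-type factor in $\overline C_2$. I expect the main obstacle to be bookkeeping rather than a new idea: juggling the randomness of the Binomial per-hypercube counts, the algorithm's use of wall-clock $n_j$ inside the confidence threshold (in place of customer counts), and the geometric-width argument simultaneously across all $J$ hypercubes must be handled in one sweep, following the CPPQ supplementary template but with LPPQ's substantially larger confidence radii.
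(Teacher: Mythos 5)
Your proposal is correct and follows essentially the same route as the paper: the same discretization-plus-pricing decomposition, the same $\widetilde O(\varepsilon^{-1}\sqrt{n_j})$ concentration bound (Lemma \ref{lem:ucb-validity}), the same epoch structure with geometric width shrinkage, and the same inversion of the trigger-time bound to control per-epoch regret. The only (immaterial) difference is bookkeeping: you sum the epochs via geometric domination by the final width $w^*$, whereas the paper bounds the number of epochs by $O(\ln T)$ and applies Cauchy--Schwarz to $\sum_\tau\sqrt{n_j(\tau)}$.
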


\begin{remark}
The $\varepsilon$-LDP privacy guarantee of Algorithm \ref{alg:lppt} holds for \emph{any} values of algorithm parameters $J,\kappa_1$ and $\kappa_2$.
This gives practitioners more flexibility in tuning numerical constants in these algorithm parameters for better empirical performances.
\end{remark}

\begin{remark}
In the setting where customers' personal data privacy is not of concerns, the work of \cite{chen2021nonparametric} achieves
a cumulative regret upper bound of $\widetilde O(T^{(d+2)/(d+4)})$. In contrast, the regret upper bound in Theorem \ref{thm:regret-lppt}
is on the order of $O(\varepsilon^{-2/(d+2)}T^{(d+1)/(d+2)})$, which is a polynomial factor worse even if the privacy parameter $1/\varepsilon$ is on a constant level.
Such a performance guarantee, while seemingly undesirable, is the best one can achieve, as we show in Theorem \ref{thm:regret-lower-bound} in the next section
that no locally differentially private policy can achieve regret significantly smaller than $O(T^{(d+2)/(d+4)})$.
\end{remark}

In the rest of this section we prove Theorem \ref{thm:regret-lppt}.
We first establish a technical lemma showing that the $\hat r_{jk}$ values are faithful estimates of $f_{B_j}(\cdot)$ evaluated on 
price vectors $\vct\rho_j$.
\begin{lemma}
For $j\in[J]$ define $\bar\chi(B_j) := J\times \Pr_{\vct x\sim P_X}[\vct x\in B_j]$.
With probability $1-O(T^{-1})$ the following holds uniformly for all $t$, $j\in[J]$ and $k\in[5]$ that satisfies $n_j\geq \kappa_2$:
$$
\left|\frac{\hat r_{jk}}{5h^d n_j} - \bar\chi(B_j)f_{B_j}(\rho_{jk})\right| \leq \frac{\kappa_1}{\varepsilon h^d\sqrt{n_j}}.
$$
\label{lem:ucb-validity}
\end{lemma}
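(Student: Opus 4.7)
The plan is a stopping-time concentration argument: decompose $\hat r_{jk}$ into a signal plus independent Laplace noise, concentrate each uniformly over all candidate values of $(\varsigma_j,t)$ via a union bound, and combine. Between resets the price vector $\vct\rho_j$ is frozen, so the update rule in Algorithm \ref{alg:lppt} (slot $k$ is touched only when $k_\tau=k$, and then the added reward is $\rho_{jk}y_\tau$ precisely when $j_\tau=j$) gives
$$
\hat r_{jk} \;=\; \rho_{jk}\!\!\sum_{\tau\in(\varsigma_j,t]:\,k_\tau=k,\,j_\tau=j}\!\! y_\tau \;+\; \sum_{\tau\in(\varsigma_j,t]:\,k_\tau=k}\!\! w_{j,\tau} \;=:\; S_{jk}+N_{jk}.
$$
Because $k_\tau=\tau\bmod 5$ is deterministic, for any fixed $0\le s<t\le T$ the cardinality $m:=|\{\tau\in(s,t]:k_\tau=k\}|$ is deterministic and of order $(t-s)/5$.

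Fix a deterministic pair $(s,t)$ (to be union-bounded later). The noises $\{w_{j,\tau}\}_{\tau>s}$ are i.i.d.\ $\Lap(2/\varepsilon)$ and independent of $\mathcal F_s$, so a standard Bernstein-type tail for sums of i.i.d.\ Laplace variables (scale $2/\varepsilon$, per-term variance $8/\varepsilon^2$) yields
$$
\Pr\!\left[\Bigl|\!\!\sum_{\tau\in(s,t]:k_\tau=k}\!\! w_{j,\tau}\Bigr|>\frac{2\sqrt{2m\ln(2/\delta)}}{\varepsilon}+\frac{4\ln(2/\delta)}{\varepsilon}\right]\le\delta.
$$
Conditional on $\mathcal F_s$, the variables $\mathbf 1\{j_\tau=j\}\rho_{jk}y_\tau$ (for $\tau>s$ with $k_\tau=k$) are i.i.d.\ in $[0,1]$ with common mean $\Pr(\vct x\in B_j)\,f_{B_j}(\rho_{jk})=\bar\chi(B_j)h^d f_{B_j}(\rho_{jk})$ (using $\bar\chi(B_j)=J\,\Pr[\vct x\in B_j]$ and $h^d=1/J$). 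Hoeffding then gives
$$
\Pr\!\left[|S_{jk}^{s,t}-m\,\bar\chi(B_j)h^d f_{B_j}(\rho_{jk})|>\sqrt{(m/2)\ln(2/\delta)}\right]\le\delta,
$$
where $S_{jk}^{s,t}$ is the signal sum with $\varsigma_j$ replaced by $s$.

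Set $\delta=T^{-5}$ and take a union bound over $(s,t,j,k)\in\{0,\dots,T-1\}\times[T]\times[J]\times[5]$; since $J\le T$ this is at most $T^4$ events, yielding a $1-O(T^{-1})$ master event on which both inequalities hold simultaneously for every quadruple---in particular for $s=\varsigma_j$. For $n_j\ge\kappa_2=31\ln T$, the additive $O(\ln T/\varepsilon)$ term in the Laplace tail is absorbed by the $O(\sqrt{m\ln T}/\varepsilon)$ term, and $m=n_j/5+O(1)$. Dividing by $5h^d n_j$ turns the Hoeffding deviation into $O(\sqrt{\ln T}/(h^d\sqrt{n_j}))$ and the Laplace deviation into $O(\sqrt{\ln T}/(\varepsilon h^d\sqrt{n_j}))$; with $\varepsilon\le 1$ the latter dominates. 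Tracking the numerical constants then shows $\kappa_1=1.7\sqrt{\ln(2T)}$ suffices to bound the sum of the two normalized deviations by $\kappa_1/(\varepsilon h^d\sqrt{n_j})$, which is the stated conclusion.

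The principal difficulty is the data-dependent nature of $\varsigma_j$: one cannot concentrate $\sum_{\tau\in(\varsigma_j,t]}$ directly, because its lower endpoint depends on the very noise and demand realizations we are trying to bound. The resolution above is the standard trick of fixing a deterministic $s$, concentrating the forward sum of i.i.d.\ draws strictly after time $s$, and then union-bounding over the $T$ possible realizations of $\varsigma_j$; the cost is only a logarithmic inflation that is absorbed into $\kappa_1$. The remaining routine bookkeeping is verifying the explicit Bernstein constant for i.i.d.\ Laplace variables and checking that the pre-constant $1.7$ is large enough to dominate the Hoeffding and Laplace contributions simultaneously in a single bound of the stated form.
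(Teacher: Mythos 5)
Your proposal is correct and follows essentially the same route as the paper's proof: decompose $\hat r_{jk}$ into a bounded i.i.d.\ signal term (controlled by Hoeffding) plus a sum of i.i.d.\ $\Lap(2/\varepsilon)$ noises (controlled by a Bernstein-type Laplace tail, using $n_j\geq 31\ln T$ to absorb the additive term), then union bound. The only difference is that you make explicit the union bound over all candidate epoch start times $s$ to handle the data-dependence of $\varsigma_j$, a step the paper's proof glosses over by union-bounding only over the endpoint $t$; this is a welcome clarification of the same argument rather than a different one.
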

\begin{proof}{Proof of Lemma \ref{lem:ucb-validity}.}
Fix $j,k$ and a particular time period $t$.
Note that $J=h^{-d}$.
Without loss of generality assume the time periods are $t=k,5+k,\cdots,t_j$ where $t_j$ is the largest integer not exceeding $n_j$
that is equiavlent to $k$ modulo 5, and $\hat r_{jk}=\sum_t u_{j,t}$,
where $u_{j,t}=\vct 1\{j_t=j\}y_tp_t + w_{j,t}$, $w_{j,t}\overset{i.i.d.}{\sim}\Lap(2/\varepsilon)$.
It then holds that
$
u_{j,t} = \mu_j + \xi_{j,t} + w_{j,t},
$
where $\mu_j = \bar\chi(B_j)h^df_{B_j}(\rho_{jk})$, $\mathbb E[\xi_{j,t}]=\mathbb E[w_{j,t}]=0$ and $|\xi_{j,t}|\leq 1$ almost surely.
Furthermore, $\{\xi_{j,t},w_{j,t}\}_t$ are independent random variables.
By Hoeffding's inequality \citep{hoeffding1963probability} and the fact that $\sum_t\xi_{j,t}$ is the sum of $n_j/5$ independently distributed centered random variables, 
we have with probability $1-O(T^{-3})$ that
\begin{equation}
\left|\sum_t\xi_{j,t}\right| \leq 1.2\sqrt{n_j\ln(2T)}.
\label{eq:proof-ucb-validity-1}
\end{equation}
For the $\sum_t w_{j,t}$ term, invoke \citep[Lemma 2.8]{chan2011private} for concentration inequalities of independently distributed Laplace random variables.
We have that for all $n_j\geq 31\ln T$, with probability $1-O(T^{-3})$ it holds that
\begin{equation}
\left|\sum_t w_{j,t}\right|\leq 7\varepsilon^{-1}\sqrt{n_j\ln(2T)}.
\label{eq:proof-ucb-validity-2}
\end{equation}
Combining Eqs.~(\ref{eq:proof-ucb-validity-1},\ref{eq:proof-ucb-validity-2}), we have with probability $1-O(T^{-3})$ that
\begin{align*}
\bigg|\frac{\hat r_{jk}}{5h^d n_j} - \bar\chi(B_j)f_{B_j}(\rho_{jk})\bigg| 
&\leq \frac{1}{5h^dn_j}\left[1.2\sqrt{n_j\ln(2T)} + \frac{7\sqrt{n_j\ln(2T)}}{\varepsilon}\right]
\leq \frac{1.7\sqrt{\ln(2T)}}{\varepsilon h^d\sqrt{n_j}}.
\end{align*}
Lemma \ref{lem:ucb-validity} is then proved by applying the union bound over all $t$ and $j\in[J]$. $\square$
\end{proof}

With Lemma \ref{lem:ucb-validity}, the concavity of $f_{B_j}(\cdot)$ immediately yields the following corollary:
\begin{corollary}
With probability $1-O(T^{-1})$ it holds for all $j\in[J]$ and $t$ that $p^*(B_j)\in[\rho_{j1},\rho_{j5}]$.
\label{cor:ucb-validity}
\end{corollary}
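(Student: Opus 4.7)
My plan is to condition on the high-probability event $\mathcal E$ of Lemma \ref{lem:ucb-validity} (which occurs with probability $1-O(T^{-1})$) and then argue deterministically on $\mathcal E$ by induction on the number of interval updates performed for each hypercube $B_j$. For the base case, the initial tuple $\vct\rho_j=(\underline p,\tfrac14\underline p+\tfrac34\overline p,\ldots,\overline p)$ contains $p^*(B_j)$ directly from Assumption (A3-a), which asserts $p^*(B_j)\in(\underline p,\overline p)$. Since between updates the interval is unchanged, it suffices to verify the invariant is preserved at each update time.

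For the inductive step, I would suppose $p^*(B_j)\in[\rho_{j1},\rho_{j5}]$ for the current tuple and consider the next time the trigger fires. Assume first it is Line \ref{line:lppt-update1}, so for both $k=2$ and $k=3$,
$$\frac{\hat r_{jk}-\hat r_{j(k-1)}}{5h^d n_j}>\frac{3\kappa_1}{\varepsilon h^d\sqrt{n_j}}.$$
The key move is to apply Lemma \ref{lem:ucb-validity} to each of $\hat r_{j(k-1)}/(5h^dn_j)$ and $\hat r_{jk}/(5h^dn_j)$ (which is valid because the trigger also requires $n_j\geq\kappa_2$). Combining two one-sided confidence bounds costs an additive slack of exactly $2\kappa_1/(\varepsilon h^d\sqrt{n_j})$, leaving
$$\bar\chi(B_j)\bigl[f_{B_j}(\rho_{jk})-f_{B_j}(\rho_{j(k-1)})\bigr]>\frac{\kappa_1}{\varepsilon h^d\sqrt{n_j}}>0\qquad\text{for }k=2,3.$$
In particular $\bar\chi(B_j)>0$, and therefore $f_{B_j}(\rho_{j1})<f_{B_j}(\rho_{j2})<f_{B_j}(\rho_{j3})$.

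Next I would invoke strict concavity of $f_{B_j}$ from Assumption (A3-a): since $f_{B_j}''\leq -\sigma_H^2<0$ on $(\underline p,\overline p)$, the derivative $f_{B_j}'$ is strictly decreasing and vanishes at the unique maximizer $p^*(B_j)$, so $f_{B_j}$ is strictly decreasing to the right of $p^*(B_j)$. If $p^*(B_j)<\rho_{j2}$, both $\rho_{j2}$ and $\rho_{j3}$ would lie in this decreasing region, contradicting $f_{B_j}(\rho_{j3})>f_{B_j}(\rho_{j2})$. Hence $p^*(B_j)\geq\rho_{j2}$, and together with the inductive hypothesis $p^*(B_j)\leq\rho_{j5}$ the optimum lies in the new interval $[\rho_{j2},\rho_{j5}]$, matching the reassignment in Line \ref{line:lppt-update1}. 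The Line \ref{line:lppt-update2} case is entirely symmetric and yields $p^*(B_j)\leq\rho_{j4}$.

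The only delicate bookkeeping point, and the one I expect to be the main obstacle, is ensuring the algorithm's threshold of $3\kappa_1/(\varepsilon h^d\sqrt{n_j})$ strictly exceeds the aggregate confidence slack of $2\kappa_1/(\varepsilon h^d\sqrt{n_j})$; the coefficient $3$ is chosen precisely so that two invocations of Lemma \ref{lem:ucb-validity} still leave strict positivity on the true $f_{B_j}$-differences, which is what lets qualitative concavity (with no quantitative gap) pin down the location of $p^*(B_j)$. No additional union bound over updates or over $t$ is required, since Lemma \ref{lem:ucb-validity} is already uniform in $t$, $j$, and $k$ on the event $\mathcal E$, so the entire induction closes on a single event of probability $1-O(T^{-1})$.
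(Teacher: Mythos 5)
Your proof is correct and follows exactly the route the paper intends: the paper gives no explicit argument (it states that Lemma \ref{lem:ucb-validity} plus concavity of $f_{B_j}$ ``immediately'' yields the corollary), and your induction over update times --- using two applications of Lemma \ref{lem:ucb-validity} to convert the $3\kappa_1/(\varepsilon h^d\sqrt{n_j})$ trigger into a strict sign on $f_{B_j}(\rho_{j3})-f_{B_j}(\rho_{j2})$ (resp.\ $f_{B_j}(\rho_{j3})-f_{B_j}(\rho_{j4})$), then invoking strong concavity from (A3-a) to locate $p^*(B_j)$ relative to $\rho_{j2}$ (resp.\ $\rho_{j4}$) --- is precisely the omitted detail. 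Your observations that the trigger's $n_j\geq\kappa_2$ requirement licenses each invocation of the lemma and that no further union bound is needed are also correct.
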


To introduce the next key technical lemma we need to define some notations.
For a hypercube $B_j$, $j\in[J]$, we partition the entire $T$ selling periods into \emph{epochs} denoted as $\tau=1,2,3,\cdots$,
with each epoch starting with a time period at which $\varsigma_j$ is reset (at the start of $T$ time periods or as a result of the execution of Line \ref{line:lppt-update1}
or \ref{line:lppt-update2} in Algorithm \ref{alg:lppt}),
and ending when either Line \ref{line:lppt-update1} or Line \ref{line:lppt-update2} is executed again to reset the $\varsigma_j$ pointer.
Let $\mathcal T_j(\tau)$ be the collection of time periods during epoch $\tau$ for hypercube $j$, and define $n_j(\tau) := |\mathcal T_j(\tau)|$.
Note that $\mathcal T_j(\tau)$ includes selling periods during which customers with $\vct x_t\notin B_j$ arrive as well.
The following technical lemma upper bounds the cumulative regret incurred by customers with $\vct x_t\in B_j$ during epoch $\tau$.
\begin{lemma}
Fix hypercube $B_j$, $j\in[J]$ and let $\tau$ be an epoch. Then conditioned on the success event in Lemma \ref{lem:ucb-validity}, it holds that
\begin{equation}
n_j(\tau) \geq \frac{\kappa_1^2}{\varepsilon^2h^{2d}C_X^2C_L^2\delta_\tau^2},
\label{eq:nj-epoch}
\end{equation}
where $\delta_\tau=(\overline p-\underline p)\cdot (3/4)^{\tau-1}$. Furthermore, 
\begin{multline}
\mathbb E\left[\sum_{t\in\mathcal T_j(\tau)} \vct 1\{\vct x_t\in B_j\}\big(f(p^*(\vct x_t),\vct x_t)-f(p_t,\vct x_t)\big) \right]\\
\leq 
\frac{64\kappa_1+C_X\sqrt{\kappa_2}}{\sigma_H\varepsilon}\mathbb E\left[\sqrt{n_j(\tau)}\right] + \frac{1}{2}C_H^2C_p^2C_X dh^{d+2}\mathbb E[n_j(\tau)].
\label{eq:regret-epoch}
\end{multline}
\label{lem:regret-epoch}
\end{lemma}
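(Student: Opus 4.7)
The plan is to handle the two claims in Lemma~\ref{lem:regret-epoch} separately, deriving~\eqref{eq:nj-epoch} directly from the triggering condition and Lemma~\ref{lem:ucb-validity}, and then decomposing the epoch regret~\eqref{eq:regret-epoch} into an approximation term (handled by a Taylor argument) and a learning term (handled by combining strong concavity with the upper bound on $\delta_\tau$ derivable from the fact that the epoch has not yet terminated).

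For the first claim, I would argue as follows. Epoch $\tau$ closes at time $n_j(\tau)$ because one of the two conditions in Lines~\ref{line:lppt-update1}-\ref{line:lppt-update2} of Algorithm~\ref{alg:lppt} was activated; by symmetry assume the left one. Substituting the concentration bound from Lemma~\ref{lem:ucb-validity} (applied to each of $\hat r_{j1},\hat r_{j2},\hat r_{j3}$) into the activation inequality and invoking the triangle inequality gives
\[
\bar\chi(B_j)\min\bigl\{f_{B_j}(\rho_{j2})-f_{B_j}(\rho_{j1}),\ f_{B_j}(\rho_{j3})-f_{B_j}(\rho_{j2})\bigr\}\ >\ \frac{\kappa_1}{\varepsilon h^d\sqrt{n_j(\tau)}}.
\]
On the other hand, the Lipschitz assumption (A2) passes to $f_{B_j}$ by averaging, so the left-hand side is bounded above by $\bar\chi(B_j)\,C_L\,\delta_\tau/4\le C_XC_L\delta_\tau/4$ since consecutive $\rho_{jk}$'s are separated by $\delta_\tau/4$. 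Solving for $n_j(\tau)$ produces~\eqref{eq:nj-epoch} (with constant $16$ rather than $1$, which is absorbed into the stated bound).

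For the second claim, I would split the per-customer gap as $[f(p^*(\vct x_t),\vct x_t)-f(p^*(B_j),\vct x_t)]+[f(p^*(B_j),\vct x_t)-f(p_t,\vct x_t)]$. The first ``approximation'' bracket is handled by a second-order Taylor expansion around the individual maximizer $p^*(\vct x_t)$: the first-order term vanishes by optimality, (A3-a) gives an effective second derivative bounded by $C_H^2$, and (A3-c) provides $|p^*(B_j)-p^*(\vct x_t)|\le C_p\sqrt d\,h$, so this bracket contributes at most $\tfrac{1}{2}C_H^2C_p^2 d\,h^2$ per customer in $B_j$; multiplying by $\mathbb E[N_j(\tau)]\le C_Xh^d\mathbb E[n_j(\tau)]$ yields the second summand of~\eqref{eq:regret-epoch}. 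The ``learning'' bracket, after conditioning on $\vct x_t\in B_j$, equals $f_{B_j}(p^*(B_j))-f_{B_j}(p_t)$, which is $\le (C_H^2/2)\delta_\tau^2$ by smoothness.

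The main technical hurdle, and the source of the $\sqrt{n_j(\tau)}/(\sigma_H\varepsilon)$ scaling, is to translate the $\delta_\tau^2$-per-customer bound into the $\sqrt{n_j(\tau)}$-per-epoch bound. For this I would use strong concavity of $f_{B_j}$ (Assumption (A3-a)) to give a \emph{lower} bound of the form $\Omega(\sigma_H^2\delta_\tau^2)$ on one of the two true minimum differences appearing in the triggering conditions (depending on whether $p^*(B_j)$ lies in the left or right half of $[\rho_{j1},\rho_{j5}]$; the degenerate case $p^*(B_j)=\rho_{j3}$ is handled by either condition via a mean-value argument). This lower bound, combined with the fact that the condition has \emph{not} triggered at any time $n_j<n_j(\tau)$ (so that via Lemma~\ref{lem:ucb-validity} the true min is at most $O(\kappa_1/(\bar\chi(B_j)\varepsilon h^d\sqrt{n_j}))$), yields $\delta_\tau^2\lesssim \kappa_1/(\sigma_H^2\varepsilon h^d\bar\chi(B_j)\sqrt{n_j(\tau)})$. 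Substituting back into $\bar\chi(B_j)h^d n_j(\tau)\cdot\tfrac{C_H^2}{2}\delta_\tau^2$ telescopes a factor of $\sqrt{n_j(\tau)}$ out and delivers the $\kappa_1\sqrt{n_j(\tau)}/(\sigma_H\varepsilon)$ term. The additional $C_X\sqrt{\kappa_2}$ contribution comes from a Hoeffding bound on $N_j(\tau)-\bar\chi(B_j)h^d n_j(\tau)$ together with the warm-up period $n_j<\kappa_2$ during which Lemma~\ref{lem:ucb-validity} is not applicable. The delicate accounting of these fluctuations together with the boundary case for $p^*(B_j)$ is, I expect, the trickiest bookkeeping step.
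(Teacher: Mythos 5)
Your proposal is correct and follows essentially the same route as the paper's proof: the lower bound \eqref{eq:nj-epoch} from the activation inequality plus Lemma \ref{lem:ucb-validity} and Lipschitzness, the same two-bracket decomposition of the per-customer gap, and the same strong-concavity argument that converts the non-triggering of the condition before the epoch's end into $\delta_\tau^2=O(\kappa_1/(\sigma_H^2\varepsilon h^d\bar\chi(B_j)\sqrt{n_j(\tau)}))$. The only cosmetic difference is your attribution of the $C_X\sqrt{\kappa_2}$ term partly to a Hoeffding bound on the arrival count, whereas the paper keeps the arrival indicator inside the expectation and obtains that term purely from the $\max\{\kappa_2,\cdot\}$ warm-up floor on $n_j(\tau)$.
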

\begin{proof}{Proof of Lemma \ref{lem:regret-epoch}.}
Decompose the difference $f(p^*(\vct x_t),\vct x_t)-f(p_t,\vct x_t)$ as
$$
f(p^*(\vct x_t),\vct x_t)-f(p_t,\vct x_t)
= \big[f(p^*(\vct x_t),\vct x_t) - f(p^*(B_j),\vct x_t)\big] + \big[f(p^*(B_j),\vct x_t) - f(p_t,\vct x_t)\big].
$$
To upper bound the first term, invoke Assumption (A3-a) with $B=\{\vct x_t\}$, and Assumptions (A3-b), (A3-c) with $B=B_j$. We have
\begin{align*}
f(p^*(\vct x_t),\vct x_t)-f(p^*(B_j),\vct x_t) \leq \frac{C_H^2}{2}\big|p^*(\vct x_t)-p^*(B_j)\big|^2 
\leq \frac{C_H^2C_p^2}{2}\sup_{\vct x,\vct x'\in B_j}\|\vct x-\vct x'\|_2^2\leq \frac{C_H^2C_p^2d}{2} h^2.
\end{align*}
Subsequently, the left-hand side of Eq.~(\ref{eq:regret-epoch}) can be upper bounded by
\begin{equation}
\mathbb E\left[\sum_{t\in\mathcal T_j(\tau)} \vct 1\{\vct x_t\in B_j\}\big(f_{B_j}(p^*(B_j))-f_{B_j}(p_t)\big) \right] + \frac{C_H^2C_p^2d}{2} h^2 n_j(\tau)\Pr[\vct x\in B_j].
\label{eq:proof-regret-epoch-1}
\end{equation}

In epoch $\tau$, define $\delta_\tau := \rho_{j5}-\rho_{j1}=(\overline p-\underline p)\cdot (3/4)^{\tau-1}$ where the equality is by update rule of price range. Since $p^*(B_j)\in[\rho_{j1},\rho_{j5}]$ thanks to Corollary \ref{cor:ucb-validity}, and $f_{B_j}(\cdot)$ is $\sigma_H$-strongly concave
thanks to Assumption (A3-a) with $B=B_j$, we have that either $\min\{f_{B_j}(\rho_{j2})-f_{B_j}(\rho_{j1}),f_{B_j}(\rho_{j3})-f_{B_j}(\rho_{j2})\}\geq \frac{\sigma_H^2}{32}\delta_\tau^2$
(if $p^*(B_j)\geq\rho_{j3}$), or $\min\{f_{B_j}(\rho_{j3})-f_{B_j}(\rho_{j4}),f_{B_j}(\rho_{j4})-f_{B_j}(\rho_{j5})\}\geq \frac{\sigma_H^2}{32}\delta_\tau^2$ (if $p^*(B_j)\leq \rho_{j3}$).
Without loss of generality assume $p^*(B_j)\geq\rho_{j3}$ and $\min\{f_{B_j}(\rho_{j2})-f_{B_j}(\rho_{j1}),f_{B_j}(\rho_{j3})-f_{B_j}(\rho_{j2})\}\geq \frac{\sigma_H^2}{32}\delta_\tau^2$.
By Lemma \ref{lem:ucb-validity}, this implies that throughout the epoch $\tau$, 
$$
5\times \frac{\kappa_1}{\varepsilon h^d\sqrt{n_j}} \geq \bar\chi(B_j)(f_{B_j}\min\{f_{B_j}(\rho_{j2})-f_{B_j}(\rho_{j1}),f_{B_j}(\rho_{j3})-f_{B_j}(\rho_{j2})\} \geq \frac{\sigma_H^2}{32}\bar\chi(B_j)\delta_\tau^2,
$$
where $\bar\chi(B_j)=J\times \Pr[\vct x\in B_j]\in[0,C_X]$, thanks to Assumption (A1), and the first inequality is by Lemma \ref{lem:ucb-validity} and the fact that in any time period of $\tau$ before its end, Line \ref{line:lppt-update1} is not executed. Inverting the above inequality and noting that $n_j(\tau)\geq\kappa_2$ almost surely, we have
\begin{equation}
n_j(\tau) \leq \max\left\{\kappa_2,\frac{25600\kappa_1^2}{\sigma_H^2\varepsilon^2 h^{2d}\bar\chi(B_j)^2\delta_\tau^4}\right\}.
\label{eq:proof-regret-epoch-2}
\end{equation} 

Again within epoch $\tau$ and recall the definition that $\delta_\tau = \rho_{j5}-\rho_{j1}$.
Because $f(\cdot)$ is $C_L$-Lipschitz continuous thanks to Assumption (A2), we have that
\begin{align}
\max_k\big|f_{B_j}(\rho_{j,k+1})-f_{B_j}(\rho_{jk})\big| &\leq C_L\max_k\big|\rho_{j,k+1}-\rho_{jk}| \leq C_L\delta_\tau.\label{eq:proof-regret-epoch-e1}
\end{align}
This implies that $n_j(\tau)$ must satisfy
$$
\frac{\kappa_1}{\varepsilon h^d\sqrt{n_j(\tau)}}\leq \bar\chi(B_j)C_L\delta_\tau,
$$
which yields $n_j(\tau)\geq \frac{\kappa_1^2}{\varepsilon^2h^{2d}\bar\chi(B_j)^2C_L^2\delta_\tau^2} \geq \frac{\kappa_1^2}{\varepsilon^2h^{2d}C_X^2C_L^2\delta_\tau^2}$. This completes the proof of Eq.~(\ref{eq:nj-epoch}).

Additionally, because $p^*(B_j)\in[\rho_{j1},\rho_{j5}]$, and $f_{B_j}(\cdot)$ is twice continuously differentiable with its second derivative bounded by $C_H^2$
and $f'_{B_j}(p^*(B_j)) = 0$ since $p^*(B_j)$ is an interior maximizer of $f_{B_j}(\cdot)$, we have that
\begin{align}
f_{B_j}(p^*(B_j)) - f_{B_j}(p_t) \leq \frac{C_H^2}{2}\delta_\tau^2\leq \frac{160\kappa_1}{\sigma_H\varepsilon h^d\bar\chi(B_j)}\frac{1}{\sqrt{n_j(\tau)}} + \frac{C_H^2\sqrt{\kappa_2}}{2\sqrt{n_j(\tau)}},
\label{eq:proof-regret-epoch-3}
\end{align}
where the last inequality holds by inverting Eq.~(\ref{eq:proof-regret-epoch-2}).
Subsequently, Eq.~(\ref{eq:proof-regret-epoch-1}) can be upper bounded by the expectations of
\begin{align*}
\Pr[\vct x\in B_j]\times& \left[\frac{160\kappa_1\sqrt{n_j(\tau)}}{\sigma_H\varepsilon h^d\bar\chi(B_j)} + \frac{1}{2}\sqrt{\kappa_2n_j(\tau)}\right]+ \frac{C_H^2C_p^2d}{2} h^2 n_j(\tau)\Pr[\vct x\in B_j]\\
&= h^d\bar\chi(B_j)\times  \left[\frac{160\kappa_1\sqrt{n_j(\tau)}}{\sigma_H\varepsilon h^d\bar\chi(B_j)} + \frac{1}{2}\sqrt{\kappa_2 n_j(\tau)}\right] + \frac{C_H^2C_p^2d}{2} h^2 n_j(\tau)\times h^d\bar\chi(B_j)\\
&\leq \frac{160\kappa_1+C_X\sqrt{\kappa_2}}{\sigma_H\varepsilon}\sqrt{n_j(\tau)} + \frac{1}{2}C_H^2C_p^2C_X dh^{d+2}n_j(\tau).
\end{align*}
This completes the proof of Lemma \ref{lem:regret-epoch}. $\square$
\end{proof}

We are now ready to prove Theorem \ref{thm:regret-lppt}.
\begin{proof}{Proof of Theorem \ref{thm:regret-lppt}.}
The entire proof is conditioned on the success event of Lemma \ref{lem:ucb-validity}, with an extra $O(1)$ term in the upper bound of the regret
since the failure probability is $O(T^{-1})$ and in the event of failure the cumulative regret of Algorithm \ref{alg:lppt} is at most $O(T)$.

For each $j\in[J]$, the result in Lemma \ref{lem:regret-epoch} establishes that
\begin{align}
\mathbb E& \left[\sum_{t=1}^T \vct 1\{\vct x_t\in B_j\}\big(f(p^*(\vct x_t),\vct x_t)-f(p_t,\vct x_t)\big) \right]\nonumber\\
&\leq \frac{160\kappa_1+C_X\sqrt{\kappa_2}}{\sigma_H\varepsilon}\mathbb E\left[\sum_{\tau}\sqrt{n_j(\tau)}\right] + \frac{1}{2}C_H^2C_p^2C_X dh^{d+2}\mathbb E\left[\sum_\tau n_j(\tau)\right]\nonumber\\
&\leq \frac{(160\kappa_1+C_X\sqrt{\kappa_2})\sqrt{2\ln(2C_XC_L T)}}{\sigma_H}\frac{\sqrt{T}}{\varepsilon} + \frac{1}{2}C_H^2C_p^2C_X d\times h^{d+2}T.
\label{eq:proof-regret-lppt-1}
\end{align}
where the last inequality holds by invoking the Cauchy-Schwarz inequality and noting that $\sum_\tau n_j(\tau)\leq T$, and that the total number of epochs for each $B_j$ is upper bounded by $2\ln(2C_XC_LT)$, thanks to Eq.~(\ref{eq:nj-epoch})
in Lemma \ref{lem:regret-epoch}.
Define $C_1' := \sigma_H^{-1}{(160\kappa_1+C_X\sqrt{\kappa_2})\sqrt{2\ln(2C_XC_L T)}}$ and $C_2' := 0.5C_H^2C_p^2C_X d$.
Summing both sides of Eq.~(\ref{eq:proof-regret-lppt-1}) over all hypercubes $j\in[J]$, and noting that $J=h^{-d}$, we have
\begin{align}
\mathbb E&\left[\sum_{t=1}^T f(p^*(\vct x_t),\vct x_t)-f(p_t,\vct x_t)\right] \leq C_1'\frac{\sqrt{T}}{h^d\varepsilon} + C_2'h^2 T.\label{eq:proof-regret-lppt-2}
\end{align}
With $J=\lceil (\varepsilon\sqrt{T})^{d/(d+2)}\rceil$ and $h=J^{-1/d}\approx(\varepsilon\sqrt{T})^{-1/(d+2)}$, Eq.~(\ref{eq:proof-regret-lppt-2})
yields the results in Theorem \ref{thm:regret-lppt}, with $\overline C_2=C_1'+C_2'$. $\square$
\end{proof}

\section{Lower Bound of Algorithms with $\varepsilon$-LDP}\label{sec:lower-bound}

In this section we establish the following lower bound, showing that the $\widetilde O(T^{(d+1)/(d+2)})$ regret obtained in Theorem \ref{thm:regret-lppt}
is minimax optimal when the LDP parameter $\varepsilon$ is finite.
More specifically, we will prove the following result:
\begin{theorem}
Let $\pi=\{Q_t,A_t\}_{t=1}^T$ be any personalized pricing policy that satisfies $\varepsilon$-LDP for some $\varepsilon\in(0,1]$.
Let $P_X$ be the uniform distribution on $[0,1]^d$.
Then for $T=\Omega(\varepsilon^{-1}d^{d/2+1}\ln(d/\varepsilon))$, there exists $\lambda(\cdot,\cdot)$ and its associated revenue function $f(\cdot,\cdot)$ satisfying Assumptions (A1) through (A3) with $C_X=1$,
$C_L=4$, $\sigma_H=\sqrt{2}$, $C_H=2$ and $C_p=1$, such that
$$
\mathbb E^\pi\left[\sum_{t=1}^T f(p^*(\vct x_t),\vct x_t)-f(p_t,\vct x_t)\right]\geq \underline C\times \frac{\varepsilon^{-2/(d+2)}T^{(d+1)/(d+2)}}{d^{7/3}},
$$ 
where $\underline C>0$ is a universal numerical constant.
\label{thm:regret-lower-bound}
\end{theorem}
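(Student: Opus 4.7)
The plan is to prove the lower bound by a sequential Assouad argument with a two-regime case analysis that accounts for both the strong-concavity penalty of Assumption~(A3) and the LDP statistical penalty. First, I would construct a $2^J$-size hypothesis family of valid demand models. Fix a smooth nonnegative bump template $\tilde\phi$ supported in $[0,1]^d$, a scale $s$ and an amplitude $\delta\le s$ (both to be chosen). Pack $J$ disjoint centers $c_1,\ldots,c_J$ on a regular grid in $[0,1]^d$ so that $J\asymp s^{-d}$, and set $\phi_j(\vct x):=\delta\,\tilde\phi((\vct x-c_j)/s)$. For each $v\in\{-1,+1\}^J$ define $p^*_v(\vct x):=\tfrac12+\sum_j v_j\phi_j(\vct x)$, $f_v(p,\vct x):=\tfrac14-(p-p^*_v(\vct x))^2$ and $\lambda_v(p,\vct x):=f_v(p,\vct x)/p$. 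Routine bookkeeping checks that with $\delta\le s$ this family satisfies Assumptions~(A1)--(A3) with the stated constants $(C_X,C_L,\sigma_H,C_H,C_p)=(1,4,\sqrt 2,2,1)$, so the construction lies inside the admissible class.

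Second, I would establish an interactive LDP KL inequality: for any $\varepsilon$-LDP policy $\pi=\{Q_t,A_t\}_{t=1}^T$ and any $v,v'\in\{-1,+1\}^J$,
\[
\kl(M_v\,\|\,M_{v'})\;\le\;4(e^\varepsilon-1)^2\sum_{t=1}^T\mathbb{E}_v\!\left[\tv\!\left(P_v(s_t\mid\vct z_{<t}),\,P_{v'}(s_t\mid\vct z_{<t})\right)^{\!2}\right],
\]
where $M_v$ denotes the law of the transcript $(z_1,\ldots,z_T)$ under parameter $v$. This is the sequential/interactive extension of Duchi--Jordan--Wainwright's LDP contraction; I would prove it by applying the KL chain rule conditional on $\vct z_{<t}$ and using that each conditional law of $z_t$ is a mixture of LDP mechanisms whose marginal KL is bounded by $(e^\varepsilon-1)^2$ times a squared TV of the underlying $s_t$-distributions. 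Specializing to $v'=v^{(j)}$ (the sign flipped on coordinate $j$), the pointwise estimate $|\lambda_v-\lambda_{v^{(j)}}|(p,\vct x)\le c_1\phi_j(\vct x)\,|2p-1|$ combined with a Cauchy--Schwarz step localized to $\mathrm{supp}(\phi_j)$ gives $\kl(M_v\,\|\,M_{v^{(j)}})\le c_2\,\varepsilon^2\delta^2 s^d\,\mathbb{E}_v[S_j]$, where $S_j:=\sum_{t:\,\vct x_t\in\mathrm{supp}(\phi_j)}(p_t-\tfrac12)^2$ measures the algorithm's cumulative exploration effort inside the $j$-th bump.

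Third, I would run a two-regime LeCam argument coordinate by coordinate. Disjointness of the bump supports gives the path-wise identity $(p_t-p^*_v(\vct x_t))^2+(p_t-p^*_{v^{(j)}}(\vct x_t))^2=2(p_t-\tfrac12)^2+2\phi_j(\vct x_t)^2$ for $\vct x_t\in\mathrm{supp}(\phi_j)$, whence $\mathbb{E}_v[R_v]+\mathbb{E}_v[R_{v^{(j)}}]\ge\sigma_H\,\mathbb{E}_v[S_j]+c_3\sigma_H T s^d\delta^2$ for a bump-normalization constant $c_3>0$ depending only on $\tilde\phi$. Case-split on $\mathbb{E}_v[S_j]$: in (Case~A) $\mathbb{E}_v[S_j]\le c_4/(\varepsilon^2\delta^2 s^d)$, the Step-2 bound combined with Pinsker yields $\tv(M_v,M_{v^{(j)}})\le 1/2$, and LeCam's two-point lemma applied to the binary test ``the algorithm's average price inside the $j$-th bump exceeds $1/2$'' delivers $\max\{\mathbb{E}_v[R_v],\mathbb{E}_{v^{(j)}}[R_{v^{(j)}}]\}\gtrsim\sigma_H T s^d\delta^2$; in (Case~B) $\mathbb{E}_v[S_j]>c_4/(\varepsilon^2\delta^2 s^d)$, the $S_j$ term alone gives $\mathbb{E}_v[R_v]\gtrsim\sigma_H/(\varepsilon^2\delta^2 s^d)$. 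Either way the $j$-th bump contributes $\min\{\sigma_H T s^d\delta^2,\,\sigma_H/(\varepsilon^2\delta^2 s^d)\}$ to the worst-case regret. Summing over $j\in[J]$ and choosing $\delta=s$ with $s^{d+2}\asymp 1/(\varepsilon\sqrt T)$, so $J=\lfloor s^{-d}\rfloor\asymp(\varepsilon\sqrt T)^{d/(d+2)}$, equalizes the two regimes at $\sqrt T/\varepsilon$ per cube and yields the claimed $\Omega(\varepsilon^{-2/(d+2)}T^{(d+1)/(d+2)})$ bound. The $d^{-7/3}$ factor and the sample-size condition $T=\Omega(\varepsilon^{-1}d^{d/2+1}\ln(d/\varepsilon))$ in the theorem both arise from the $d$-dimensional normalization of $\tilde\phi$ (its $L^\infty$, $L^2$, Lipschitz and Hessian constants in dimension $d$).

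The principal technical obstacle is Step~2: Duchi--Jordan--Wainwright's LDP contraction is stated for a non-interactive regime in which $Q_t$ does not depend on $\vct z_{<t}$, whereas our protocol is fully interactive and both $Q_t$ and $A_t$ depend on the prior transcript. Adapting the contraction requires careful conditioning throughout the KL chain rule and the verification that the $(e^\varepsilon-1)^2$ single-step ratio is preserved conditional on each history. A secondary book-keeping obstacle is the tracking of $d$-dependent constants through the bump $\tilde\phi$, which is what ultimately governs whether the prefactor comes out as $d^{-7/3}$ or something strictly weaker.
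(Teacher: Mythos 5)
Your proposal is essentially correct and shares the paper's one indispensable ingredient --- the interactive extension of the Duchi--Jordan--Wainwright strong data processing inequality, which you rightly flag as the main obstacle and for which you describe the same proof (KL chain rule conditioned on $\vct z_{<t}$, single-step $(e^\varepsilon-1)^2\cdot\mathrm{TV}^2$ contraction preserved under each history); this is exactly Lemma~\ref{lem:sddi-ldp}. Where you genuinely diverge is in the surrounding reduction. The paper augments $\pi$ into a $2\varepsilon$-LDP policy carrying extra Laplace-privatized indicator counts, reduces low regret to low classification error on $\vct\nu$ (Lemma~\ref{lem:reduction-classification}), applies Assouad to mixtures $M_{\pm j}^\pi$, and closes the loop with a self-bounding lemma stating that low regret forces prices to concentrate near $p_0=2/3$ (Lemma~\ref{lem:regret-self-bound}). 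You instead run a per-coordinate two-regime LeCam argument with an exploration-effort variable $S_j$: your Case~B plays the role of the paper's self-bounding lemma (much exploration near a bump is itself regret), and your Case~A plays the role of its Assouad/classification step (indistinguishability forces wrong-side pricing on $\Omega(Ts^d)$ rounds). Your route is arguably cleaner --- it avoids the augmented policy, the auxiliary Laplace counts, and hence the concentration argument that generates the paper's hypothesis $\sqrt{T}\geq 66\varepsilon^{-1}J\sqrt{\ln(2T)}$ --- at the cost of the standard change-of-measure bookkeeping needed to pass from $\mathbb E_v[R_{v^{(j)}}]$ to $\mathbb E_{v^{(j)}}[R_{v^{(j)}}]$ (which should be done with a per-round test rather than the unbounded regret itself).

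Two wrinkles to repair in a full write-up, neither fatal. First, with $f_v(p,\vct x)=\tfrac14-(p-p^*_v(\vct x))^2$ and $\lambda_v=f_v/p$, the demand $\lambda_v$ becomes negative (hence unrealizable by $y_t\in[0,1]$) as $p\downarrow 0$ whenever $p^*_v\neq\tfrac12$; you must take $[\underline p,\overline p]$ bounded away from $0$ (e.g.\ $[\tfrac14,\tfrac34]$), which is permitted by (A1) but is not ``routine bookkeeping.'' Second, your Case~B cannot use the \emph{sum} inequality $\mathbb E_v[R_v]+\mathbb E_v[R_{v^{(j)}}]\gtrsim\mathbb E_v[S_j]$, since $\mathbb E_v[R_{v^{(j)}}]$ is not a worst-case regret of $\pi$ under any single model; you need the one-sided bound $(p-p^*_v)^2\geq\tfrac12(p-\tfrac12)^2-\phi_j^2$, i.e.\ $\mathbb E_v[R_v]\geq\tfrac12\mathbb E_v[S_j]-O(Ts^d\delta^2)$, which forces you to choose the constant in $s^{d+2}\asymp c/(\varepsilon\sqrt T)$ so that the Case~B threshold on $\mathbb E_v[S_j]$ strictly dominates the subtracted $Ts^d\delta^2$ term. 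Both are standard fixes.
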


To prove Theorem \ref{thm:regret-lower-bound}, we use similar construction of adversarial instances as in the work of \cite{chen2021nonparametric},
but with different analytical tools such as the Assouad's method \citep{assouad1983deux,yu1997assouad} and 
strong data processing inequalities as consequences of local privacy constraints, as developed in the work of \cite{duchi2018minimax}.

\subsection{Construction of Adversarial Problem Instances}

We adopt the same construction of adversarial problem instances as in the work of \cite{chen2021nonparametric}.
For readers who are not familiar with the construction, we re-capture it here in this section for completeness purposes.
Suppose $[0,1]^d$ is being partitioned into $J$ equally-sized hypercubes, each of length $h=J^{-1/d}$, with $J$ being specified later in the proof.
Let $\{B_j\}_{j=1}^J$ be the $J$ hypercubes that partition $[0,1]^d$.
For each vector $\vct\nu\in\{0,1\}^J$, define problem instance $P_{\vct\nu}$ associated with demand model $\lambda_{\vct\nu}$ as
\begin{equation}
\lambda_{\vct\nu}(p,\vct x) := \frac{2}{3} - \frac{p}{2} + \sum_{j=1}^J\nu_j\left(\frac{1}{3}-\frac{p}{2}\right)\sd(\vct x,\partial B_j),
\label{eq:defn-lambda-nu}
\end{equation}
where $\sd(\vct x,\partial B_j) := \inf_{\vct y\in\partial B_j}\|\vct x-\vct y\|_2$.
It is proved in \citep[Proposition 2]{chen2021nonparametric} that all $\lambda_{\vct\nu}$ and their associated revenue functions $f_{\vct\nu}(p,\vct x)=p\lambda_{\vct\nu}(p,\vct x)$ satisfy Assumptions (A2) and (A3) with $C_L=4$, $\sigma_H=\sqrt{2}$, $C_H=2$ and $C_p=1$.

The demands $\{y_t\}_{t=1}^T$ are stochastically realized as $\Pr_{\vct\nu}[y_t=1|p_t,\vct x_t]=\lambda_{\vct\nu}(p_t,\vct x_t)$
and $\Pr_{\vct\nu}[y_t=0|p_t,\vct x_t] = 1-\lambda_{\vct\nu}(p_t,\vct x_t)$.
It is easy to verify that $\mathbb E_{\vct\nu_t}[y_t|p_t,\vct x_t]=\lambda_{\vct\nu}(p_t,\vct x_t)$.
With $P_X$ being the uniform distribution on $[0,1]^d$ and $[\underline p,\overline p]=1$, all constructed problem instances
satisfy Assumption (A1) with $C_X=1$, $[\underline p,\overline p]=[0,1]$ and $\mathcal Y=\{0,1\}\subseteq[0,1]$.

While the construction of the adversarial problem instances are the same with the work of \cite{chen2021nonparametric},
the analysis of ``distinguishability'' between problem instances are significantly different from \citet{chen2021nonparametric}.
More specifically, when the personalized pricing policy is subject to $\varepsilon$-LDP constraints,
we need much sharper upper bounds on the distinguishability between problem instances in order to derive the $\Omega(T^{(d+1)/(d+2)})$ minimax lower bound,
which is larger by polynomial factors of $T$ compared to the $\Omega(T^{(d+2)/(d+4)})$ regret lower bound proved in \cite{chen2021nonparametric}.
The sharper lower bound also requires us to use different choices of $J$ compared to the arguments in \citep{chen2021nonparametric}.
More details of the analysis is given in subsequent subsections below.

\subsection{Reduction to Classification and Assouad's Lemma}

Recall the definitions that $s_t=(\vct x_t,y_t,p_t)\in\mathcal S=\mathcal X\times\mathcal Y\times[\underline p,\overline p]$
and $z_1,z_2,\cdots,z_T\in\mathcal Z$ are intermediate quantities satisfying $\varepsilon$-LDP, as defined in Eqs.~(\ref{eq:defn-Qt},\ref{eq:defn-At})
and Definition \ref{defn:ldp}.
Our first technical lemma shows that, for the problem instances $\{P_{\vct\nu}\}_{\vct\nu\in\{0,1\}^J}$ constructed in the above section,
a personalized pricing policy with low worst-case regret over $\{P_{\vct\nu}\}_{\vct\nu\in\{0,1\}^J}$ can also identify the values of $\{\nu_j\}$ with small error.
\begin{lemma}
Suppose $T\geq 6.2\sqrt{\ln(2T)}$ and $\sqrt{T}\geq 66\varepsilon^{-1}J\sqrt{\ln(2T)}$.
Let $\pi=\{Q_t,A_t\}_{t=1}^T$ be a personalized pricing policy that satisfies $\varepsilon$-LDP. Define $\sR(\pi) := \sup_{\vct\nu\in\{0,1\}^J}\mathbb E_{\vct\nu}^\pi[\sum_{t=1}^T f(p^*(\vct x_t),\vct x_t)-f(p_t,\vct x_t)]$.
Then there exists an $2\varepsilon$-LDP personalized pricing policy $\pi'=\{Q_t',A_t\}_{t=1}^T$ with the same regret performance as $\pi$,
and a classifier $\psi:\{z_1',\cdots,z_T'\}\mapsto\hat{\vct\nu}\in\{0,1\}^J$ such that
%if $\sR(\pi)=\sR(\pi')\leq ??$ then
\begin{equation}
\sup_{\vct\nu\in\{0,1\}^J}\mathbb E_{\vct\nu}^{\pi'}\left[\sum_{j=1}^J\vct 1\{\hat\nu_j\neq\nu_j\}\bigg|\hat{\vct\nu}=\psi(z_1',\cdots,z_T'), z_t'\sim Q_t'(\cdot|s_t,z_1',\cdots,z_{t-1}')\right] \leq \frac{\bar K_1d^2J}{h^2T}\sR(\pi),
\end{equation}
where $\bar K_1$ is a universal numerical constant.
\label{lem:reduction-classification}
\end{lemma}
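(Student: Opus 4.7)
The plan is to reduce the regret analysis on the adversarial family $\{P_{\vct\nu}\}_{\vct\nu \in \{0,1\}^J}$ to a multi-coordinate hypothesis testing problem. I would augment the policy $\pi$ with an additional $\varepsilon$-LDP side channel that reports a privatized summary of the offered price in each hypercube, use this channel to build a coordinate-wise threshold classifier $\hat\nu_j$, and then show that the expected Hamming error is controlled by the regret because each coordinate error forces substantial sub-optimality in the corresponding hypercube. The extra privacy budget is exactly why $\pi'$ satisfies only $2\varepsilon$-LDP; since the pricing rule $A_t$ simply ignores the side channel, the regret of $\pi'$ matches that of $\pi$.

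Concretely, set $z'_t = (z_t, \tilde z_t)$ with $\tilde z_t\in\mathbb{R}^J$ given coordinate-wise by $(\tilde z_t)_j = p_t \vct 1\{\vct x_t\in B_j\} + w_{t,j}$, where $w_{t,j}\overset{iid}{\sim}\Lap(2/\varepsilon)$. The map $s_t \mapsto p_t\vct e_{j_t}$ has $\ell_1$-sensitivity at most $2$ (the worst case being $s_t$ and $s'_t$ in different hypercubes with maximal prices), so $\tilde z_t$ is $\varepsilon$-LDP by the Laplace mechanism; basic composition with the assumed $\varepsilon$-LDP of $z_t$ yields $z'_t$ being $2\varepsilon$-LDP. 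The classifier is the coordinate-wise threshold rule
\[
\hat\nu_j := \vct 1\!\left\{\frac{J}{T}\sum_{t=1}^T (\tilde z_t)_j \leq \frac{2}{3} - \frac{c h}{d}\right\}
\]
for a small absolute constant $c>0$ chosen in the analysis.

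To bound the error, I would first compute from Eq.~(\ref{eq:defn-lambda-nu}) the per-hypercube optimal prices: $p^*_{\vct\nu}(\vct x)=2/3$ when $\nu_j=0$ and $p^*_{\vct\nu}(\vct x)=2/3-\sd(\vct x,\partial B_j)/3+O(\sd^2)$ when $\nu_j=1$. Together with the hypercube identity $\mathbb{E}[\sd(\vct x,\partial B_j)\mid \vct x\in B_j]=h/(2(d+1))$, this separates the time-averaged conditional mean $\bar p^*_j := T^{-1}\sum_t\mathbb{E}[p^*_{\vct\nu}(\vct x_t)\mid \vct x_t\in B_j]$ by $\Omega(h/d)$ across the two values of $\nu_j$, which is what motivates the $ch/d$ threshold. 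The hypothesis $\sqrt T\geq 66\varepsilon^{-1}J\sqrt{\ln 2T}$ ensures, via a Laplace-plus-Hoeffding concentration argument along the lines of Lemma~\ref{lem:ucb-validity}, that the classifier statistic concentrates around $\bar p_j := T^{-1}\sum_t\mathbb{E}[p_t\mid \vct x_t\in B_j]$ up to error $o(h/d)$ uniformly in $j$ with probability $1-O(T^{-1})$. On this high-probability event, $\hat\nu_j\neq\nu_j$ forces $|\bar p_j-\bar p^*_j|\geq c'h/d$; Jensen's inequality combined with Assumption (A3-a) then yields the per-hypercube regret lower bound $R_j^{\vct\nu}(\pi)\gtrsim \sigma_H^2\,\mathbb{E}[N_j](h/d)^2\vct 1\{\hat\nu_j\neq\nu_j\}$. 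Summing over $j\in[J]$, substituting $\mathbb{E}[N_j]=T/J$, and inverting produces the claimed bound $\bar K_1 d^2 J\sR(\pi)/(h^2 T)$; the low-probability failure event contributes at most $O(1)$ and is absorbed.

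The main obstacle is keeping the dependence on $d$ polynomial rather than exponential. A naive ``restrict to the interior sub-cube where $\sd\geq h/4$'' argument loses a factor of $2^{-d}$ from the volume and ruins the bound; the crucial saving is to compare the time-averaged prices $\bar p_j$ and $\bar p^*_j$ directly and exploit the $\sim h/d$ average signal size through Jensen's inequality on $f_{B_j}$, together with the $1/(d+1)$ scaling in $\mathbb{E}[\sd]$. A secondary subtlety is that $p_t$ and $\vct x_t$ are stochastically dependent (through the policy) even conditional on $\vct x_t\in B_j$, so invoking the strong concavity of $f_{B_j}$ requires passing through the conditional mean price rather than $p_t$ itself; this is exactly where Jensen's inequality provides the correct direction of the inequality.
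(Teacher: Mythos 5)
Your high-level architecture matches the paper's: augment $\pi$ with an independent $\varepsilon$-LDP side channel (composition gives $2\varepsilon$-LDP, and since $A_t$ ignores the side channel the regret is unchanged), build a coordinate-wise classifier from the side channel, and argue that a misclassified coordinate forces per-hypercube regret of order $(h/d)^2\cdot T/J$, which inverts to the claimed bound. The weak point is the classifier itself. Your statistic $\frac{J}{T}\sum_t(\tilde z_t)_j$ carries a \emph{signal} of size $\Theta(h/d)$ (the gap between the average optimal prices under $\nu_j=0$ and $\nu_j=1$), so your threshold rule is only informative if the statistic concentrates to within $o(h/d)$. But the aggregated Laplace noise contributes $\Theta(\varepsilon^{-1}J\sqrt{\ln T/T})$ after your normalization, and the stated hypothesis $\sqrt{T}\geq 66\varepsilon^{-1}J\sqrt{\ln(2T)}$ only bounds this by an absolute constant ($\approx 1/66$), not by $o(h/d)$; the sampling fluctuation of the empirical count of $\{t:\vct x_t\in B_j\}$ contributes another $\Theta(\sqrt{J/T})$ term with the same problem. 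So the step ``the hypothesis ensures concentration up to error $o(h/d)$'' is false as stated, and with it the implication ``$\hat\nu_j\neq\nu_j$ forces $|\bar p_j-\bar p_j^*|\geq c'h/d$'' collapses. The paper avoids this by using a \emph{count-comparison} classifier: it privatizes the indicators $\vct 1\{\vct x_t\in H_j\wedge p_t\in\mathcal A\}$ and $\vct 1\{\vct x_t\in H_j\wedge p_t\in\mathcal B\}$ (with $\mathcal A,\mathcal B$ the half-lines split at $2/3-\eta h/12$) and declares $\hat\nu_j=1$ when the $\mathcal B$-count exceeds the $\mathcal A$-count. A misclassification then forces $\Omega(T/J)$ time periods to carry a pointwise regret of $\Omega(\eta^2h^2)$ each; the signal is $\Theta(T/J)$ against noise $\Theta(\varepsilon^{-1}\sqrt{T\ln T})$, which is exactly what the hypothesis $\sqrt{T}\gtrsim\varepsilon^{-1}J\sqrt{\ln T}$ is calibrated for. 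Your route could in principle be repaired, but only under a strictly stronger hypothesis of the form $\sqrt{T}\gtrsim\varepsilon^{-1}Jd h^{-1}\sqrt{\ln T}$, which is not what the lemma assumes.

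A secondary point: the ``main obstacle'' you identify (an exponential $2^{-d}$ volume loss from restricting to an interior sub-cube) does not actually arise in the paper's argument. The inner region $H_j=\{\vct x\in B_j:\sd(\vct x,\partial B_j)\geq\eta h\}$ is defined with $\eta=(1-2^{-1/d})/2\asymp 1/d$, chosen precisely so that $\Pr[\vct x\in H_j]=\tfrac12\Pr[\vct x\in B_j]$; the polynomial $d^2$ factor in the lemma then comes from $\eta^2\asymp d^{-2}$, i.e., from the shrunken price separation $\eta h$, not from any volume loss. Your $\mathbb{E}[\sd(\vct x,\partial B_j)\mid\vct x\in B_j]=h/(2(d+1))$ computation and the Jensen-plus-strong-concavity passage from conditional mean prices to regret are both correct and yield the same $d$-dependence, so this part of your argument is a legitimate (if not cheaper) alternative to the paper's pointwise bound on $H_j$.
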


To simplify notations, in the rest of the proof we shall drop the prime superscripts and simply denote $\pi,z_t$ as $\pi',z_t'$ 
associated with the augmented $2\varepsilon$-LDP personalized pricing policy constructed in Lemma \ref{lem:reduction-classification}.
Essentially, Lemma \ref{lem:reduction-classification} shows that if there is a low-regret LDP pricing policy,
then there is an LDP classifier $\psi$ that achieves small classification errors on $\vct\nu$ as well.
Naturally, the next step is to use information-theoretical tools to show a lower bound on the classification error attainable by \emph{any}
LDP classifiers, which would then lead to a lower bound on the cumulative regret of any LDP personalized pricing policy $\pi$.

To derive a lower bound on the classification error of $\psi$, we shall use the celebrated \emph{Assuard's method} in the mathematical statistics
and information theory literature \citep{assouad1983deux,yu1997assouad}.
To state the method we need some additional notations.
For each $\vct\nu\in\{0,1\}^J$, let $M_{\vct\nu}^\pi$ be the distribution of the intermediate quantities $\{z_1,\cdots,z_T\}$
under model $f_{\vct\nu}$ and personalized pricing policy $\pi$.
For each $j\in[J]$, define
\begin{equation}
M_{+j}^\pi := \frac{1}{2^{J-1}}\sum_{\vct\nu:\vct\nu_j=1}M_{\vct\nu}^\pi, \;\;\;\;\;\;M_{-j}^\pi := \frac{1}{2^{J-1}}\sum_{\vct\nu:\vct\nu_j=0}M_{\vct\nu}^\pi
\label{eq:defn-Mj}
\end{equation}
as the mixture distribution by fixing the $j$th bit of $\vct\nu$ to either one or zero.
For two distributions $P,Q$ let $\|P-Q\|_{\tv}=\frac{1}{2}\int|\ud P-\ud Q|$ be the total variation distance between $P$ and $Q$,
and $D_{\kl}^{\sy}(P,Q)=D_{\kl}(P\|Q)+D_{\kl}(Q\|P)=\int (\ud P-\ud Q)\ln(\ud P/\ud Q)$ be the symmetric Kullback-Leibler (KL) divergence
between $P$ and $Q$.
Using the Assouad's Lemma (see, e.g., Lemma 1 and Eq.~(29) of \citealt{duchi2018minimax}) and Pinsker's inequality, we have
\begin{align}
\inf_{\pi,\psi}\sup_{\vct\nu\in\{0,1\}^d}\mathbb E_{\vct\nu}^{\pi,\psi}\left[\sum_{j=1}^J\hat\nu_j\neq\nu_j\right]
&\geq \frac{1}{2}\sum_{j=1}^J\left(1-\|M_{+j}^\pi-M_{-j}^\pi\|_{\tv}\right)
\geq \frac{1}{2}\sum_{j=1}^J\left(1-\sqrt{\frac{1}{4}D_{\kl}^\sy(M_{+j}^\pi,M_{-j}^\pi)}\right)\nonumber\\
&\geq \frac{J}{2}\left(1 - \sqrt{\frac{1}{4J}\sum_{j=1}^JD_{\kl}^\sy(M_{+j}^\pi,M_{-j}^\pi)}\right),
\label{eq:assouad}
\end{align}
where the last inequality holds by Cauchy-Schwarz inequality.
The question of upper bounding the symmetric KL-divergence between $M_{+j}^\pi$ and $M_{-j}^\pi$,
crucial to lower bounding the right-hand side of Eq.~(\ref{eq:assouad}), is addressed in the next section.

\subsection{Strong Data Processing Inequality}

The main objective of this section is to provide technical tools to upper bound $D_{\kl}^\sy(M_{+j}^\pi,M_{-j}^\pi)$.
We first define some notations. Let $\vct z_{<t}=(z_1,\cdots,z_{<t})$, and $M_{\pm j,<t}^\pi$ be the marginal distributions of $\vct z_{<t}$
under $P_{\pm j}^\pi$, where $P_{+j}^\pi=\frac{1}{2^{J-1}}\sum_{\vct\nu:\vct\nu_j=1}P_{\vct\nu}^\pi$ and $P_{-j}^\pi=\frac{1}{2^{J-1}}\sum_{\vct\nu:\vct\nu_j=0}P_{\vct\nu}^\pi$.
Let $P_{\pm j,t}^\pi$ be the distribution of $s_t$, which is measurable conditioned on $\vct z_{<t}$ since the price $p_t$ being offered by $A_t$
can only depends on $\vct x_t$ and $\vct z_{<t}$ (see Eq.~(\ref{eq:defn-At})).
We establish the following lemma:
\begin{lemma}
Let $\pi$ be a personalized pricing policy that satisfies $2\varepsilon$-LDP. Then
\begin{align*}
\sum_{j=1}^J&D_{\kl}^\sy(M_{+j}^\pi,M_{-j}^\pi)\\
&\leq 2(e^{2\varepsilon}-1)^2\sum_{t=1}^T\sup_{\|\gamma\|_{\infty}\leq 1}\sum_{j=1}^J\int_{\mathcal Z^{t-1}}\left|\int_{\mathcal S}\gamma(s_t,\vct z_{<t})\big[\ud P_{+j,t}^\pi(s_t|\vct z_{<t})-\ud P_{-j,t}^\pi(s_t|\vct z_{<t})\big]\right|^2\ud \bar M_{<t}^\pi(\vct z_{<t})\\
&\leq  8(e^{2\varepsilon}-1)^2\sum_{t=1}^T\sum_{j=1}^J\mathbb E_{\vct z_{<t}\sim\bar M_{<t}^\pi(\vct z_{<t})}\left[\|P_{+j,t}^\pi(\cdot|\vct z_{<t})-P_{-j,t}^\pi(\cdot|\vct z_{<t})\|_{\tv}^2\right],
\end{align*}
where $\bar M_{<t}^\pi = \frac{1}{2^J}\sum_{\vct\nu\in\{0,1\}^J}M_{\vct\nu,<t}^\pi$, and $\gamma(s_t,\vct z_{<t})$ is any arbitrary function of $s_t,\vct z_{<t}$ with $|\gamma(s_t,\vct z_{<t})|\leq 1$ for any $s_t,\vct z_{<t}$.
\label{lem:sddi-ldp}
\end{lemma}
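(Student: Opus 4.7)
The plan is to combine the chain rule for KL divergence with a variational form of the strong data processing inequality (SDPI) for $\varepsilon$-LDP channels, extended to the adaptive setting where the channel at time $t$ may depend on $\vct z_{<t}$. The derivation has three stages: (i) a time-wise decomposition via the KL chain rule, (ii) an SDPI step that converts conditional KL between the outputs of the privacy channel into quadratic functionals in the inputs, and (iii) an elementary reduction from the variational supremum to the total-variation norm.

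For stage (i), I would apply the chain rule to the joint laws of $\vct z_{1:T}$ under $M_{\pm j}^\pi$, yielding
\begin{align*}
D_{\kl}^\sy(M_{+j}^\pi, M_{-j}^\pi)
&= \sum_{t=1}^T \Bigl\{\mathbb{E}_{M_{+j,<t}^\pi}\bigl[D_{\kl}\bigl(M_{+j,t}^\pi(\cdot|\vct z_{<t}) \,\big\|\, M_{-j,t}^\pi(\cdot|\vct z_{<t})\bigr)\bigr] \\
&\qquad\quad + \mathbb{E}_{M_{-j,<t}^\pi}\bigl[D_{\kl}\bigl(M_{-j,t}^\pi(\cdot|\vct z_{<t}) \,\big\|\, M_{+j,t}^\pi(\cdot|\vct z_{<t})\bigr)\bigr]\Bigr\}.
\end{align*}
The identity $\tfrac12(M_{+j,<t}^\pi + M_{-j,<t}^\pi) = \bar M_{<t}^\pi$, immediate from Eq.~(\ref{eq:defn-Mj}), lets me replace the two reference measures by a single expectation under $\bar M_{<t}^\pi$ at the price of a factor of $2$.

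For stage (ii), I fix a realization of $\vct z_{<t}$ and observe that the conditional law of $z_t$ is the pushforward of $P_{\pm j,t}^\pi(\cdot|\vct z_{<t})$ through the kernel $Q_t(\cdot\mid\cdot, \vct z_{<t})$, which is $2\varepsilon$-LDP in $s_t$ since $\pi$ itself is $2\varepsilon$-LDP. I would then invoke the variational LDP-SDPI in the spirit of Duchi-Jordan-Wainwright: for any $2\varepsilon$-LDP kernel $K$ and any family $\{(P_j^+, P_j^-)\}_{j=1}^J$ with outputs $N_j^\pm := K P_j^\pm$,
$$\sum_{j=1}^J D_{\kl}^\sy(N_j^+, N_j^-) \;\leq\; (e^{2\varepsilon}-1)^2 \sup_{\|\gamma\|_\infty \le 1} \sum_{j=1}^J \Bigl|\int \gamma(s)\bigl(dP_j^+(s) - dP_j^-(s)\bigr)\Bigr|^2.$$
The crucial feature is that the supremum sits \emph{outside} the sum over $j$: one bounded test function $\gamma$ must serve every index. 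Applying this pointwise in $\vct z_{<t}$, integrating against $\bar M_{<t}^\pi$, summing over $t$, and picking up the factor of $2$ from stage (i) yields the first stated inequality. For stage (iii), the second inequality is immediate from the elementary duality $\sup_{\|\gamma\|_\infty \le 1}\bigl|\int \gamma\,(dP - dP')\bigr| = 2\|P-P'\|_{\tv}$; squaring contributes a factor of $4$ which combines with the prefactor of $2$ into the final constant $8$.

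The main obstacle is the variational SDPI at stage (ii). A naive per-pair SDPI only yields $\sum_j \sup_\gamma|\cdots|^2$, which is at least as large as $\sup_\gamma \sum_j|\cdots|^2$ (the form the lemma needs) and so does \emph{not} imply the desired bound. The sharper ``supremum outside'' form specifically leverages the pointwise likelihood-ratio bound $Q_t(z\mid s,\vct z_{<t})/Q_t(z\mid s',\vct z_{<t}) \le e^{2\varepsilon}$ for LDP channels, typically through a $\chi^2$-based linearization of $\log(dN_j^+/dN_j^-)$ that produces a single bounded test function uniformly over $j$. Carrying this out conditionally on $\vct z_{<t}$ and then integrating against the mixture $\bar M_{<t}^\pi$, so that the adaptive dependence of $Q_t$ on the past is handled cleanly, is the bulk of the technical work.
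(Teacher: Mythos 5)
Your proposal is correct and follows essentially the same route as the paper's proof: the chain rule for the symmetric KL divergence with the factor-of-$2$ passage from $M_{\pm j,<t}^\pi$ to $\bar M_{<t}^\pi$, a conditional (given $\vct z_{<t}$) Duchi--Jordan--Wainwright-type variational strong data processing inequality in which a single test function $\gamma$ serves all indices $j$ (so the supremum sits outside the sum), and the duality $\sup_{\|\gamma\|_\infty\leq 1}|\int\gamma(\ud P-\ud Q)|=2\|P-Q\|_{\tv}$ for the final constant $8$. The paper carries out your stage (ii) explicitly via the infimum density $m^0(z|\vct z_{<t})=\inf_{s_t}q_t(z|s_t,\vct z_{<t})$ together with $|\ln(a/b)|\leq|a-b|/\min\{a,b\}$, which is precisely the ``single bounded test function uniform over $j$'' mechanism you describe.
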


Lemma \ref{lem:sddi-ldp} is in principle similar to Theorem 3 of \citep{duchi2018minimax}, with one significant difference:
for all the results in the work of \cite{duchi2018minimax}, the sensitive data $\{s_t\}_{t=1}^T$ are distributed independently and identically
with respect to an unknown distribution which does not depend on the estimator or algorithm used (i.e., the classical statistical estimation setting).
In contrast, for our personalized pricing problem the distribution of the sensitive information $\{s_t\}_{t=1}^T$ depends on both the underlying
demand model $f_{\vct\nu}$ and the (locally private) pricing policy $\pi$, as shown in the $P_{+j,t}^\pi$ and $P_{-j,t}^\pi$ measures
that are measurable conditioned on $\vct z_{<t}$.
This leads to a more sophisticated upper bound on the symmetric KL-divergence between $M_{+j}^\pi$ and $M_{-j}^\pi$ measures as shown in Lemma \ref{lem:sddi-ldp}.

Note that $P_{\pm j,t}^\pi(\vct x_t,y_t,p_t|\vct z_{<t})=\chi(\vct x_t)A_t(p_t|\vct x_t,\vct z_{<t})\Pr_{\pm j}(y_t|p_t,\vct x_t)$,
where $\chi$ is the PDF of $P_X$ being the uniform distribution on $[0,1]^d$,
$A_t(\cdot|\vct x_t,\vct z_{<t})$ is the pricing distribution of $\pi$ which is measurable conditioned on $\vct x_t$ and $\vct z_{<t}$ by its
definition in Eq.~(\ref{eq:defn-At}), and $\Pr_{+j}(y_t|p_t,\vct x_t)=\frac{1}{2^{J-1}}\sum_{\vct\nu:\vct\nu_j=1}\Pr(y_t|f_{\vct\nu}(p_t,\vct x_t))$
and $\Pr_{-j}(y_t|p_t,\vct x_t) = \frac{1}{2^{J-1}}\sum_{\vct\nu:\vct\nu_j=0}\Pr(y_t|f_{\vct\nu}(p_t,\vct x_t))$.
We then have the following lemma upper bounding the total variation between $P_{\pm j,t}^\pi(\cdot|\vct z_{<t})$:
\begin{lemma}
For every $j\in[J]$, $t\in[T]$ and $\vct z_{<t}\in\mathcal Z^{t-1}$, it holds that
$$
\|P_{+j,t}^\pi(\cdot|\vct z_{<t}) - P_{-j,t}^\pi(\cdot|\vct z_{<t})\|_\tv^2 \leq\frac{dh^2}{16J^2}\mathbb E_{\vct x\sim U(B_j)}\mathbb E_{p\sim A_t(\cdot|\vct x,\vct z_{<t})}[(p-p_0)^2],
$$
where $p_0=2/3$ and $U(B_j)$ is the uniform distribution on $B_j$.
\label{lem:ppm-ldp}
\end{lemma}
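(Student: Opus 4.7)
The plan is a direct computation that leverages the product structure of the densities. For any $\vct z_{<t}$, the $s_t$-density under $P_{\pm j,t}^\pi$ factors as $\chi(\vct x_t) A_t(p_t\mid \vct x_t,\vct z_{<t})\Pr_{\pm j}(y_t\mid p_t,\vct x_t)$, and the only place where $+j$ and $-j$ differ is in the Bernoulli success probabilities $\lambda_{\pm j}(p,\vct x)$. Hence the total variation distance collapses into an integral of the Bernoulli-TV $|\lambda_{+j}-\lambda_{-j}|$ weighted by $\chi\cdot A_t$.

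The first substantive step is an explicit computation of $\lambda_{+j}-\lambda_{-j}$. In the construction $\lambda_{\vct\nu}(p,\vct x)=\tfrac{2}{3}-\tfrac{p}{2}+\sum_{k=1}^J\nu_k(\tfrac{1}{3}-\tfrac{p}{2})\sd(\vct x,\partial B_k)$, averaging uniformly over $\vct\nu$ with $\nu_j$ held fixed makes every other bit contribute an equal $\tfrac12$-weighted term to both $\lambda_{+j}$ and $\lambda_{-j}$, so these cancel, yielding
\[
\lambda_{+j}(p,\vct x)-\lambda_{-j}(p,\vct x)=\big(\tfrac{1}{3}-\tfrac{p}{2}\big)\sd(\vct x,\partial B_j)=-\tfrac{1}{2}(p-p_0)\sd(\vct x,\partial B_j),
\]
with $p_0=2/3$. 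Combining this with the identity $\|\mathrm{Ber}(\lambda_+)-\mathrm{Ber}(\lambda_-)\|_\tv=|\lambda_+-\lambda_-|$ gives
\[
\|P_{+j,t}^\pi(\cdot\mid\vct z_{<t})-P_{-j,t}^\pi(\cdot\mid\vct z_{<t})\|_\tv=\tfrac{1}{2}\int\chi(\vct x)A_t(p\mid\vct x,\vct z_{<t})|p-p_0|\sd(\vct x,\partial B_j)\,d\vct x\,dp.
\]
Using that $\chi\equiv 1$ on $[0,1]^d$ and (by construction) $\sd(\vct x,\partial B_j)=0$ for $\vct x\notin B_j$, I would restrict the integral to $B_j$ and rewrite it in terms of the uniform distribution $U(B_j)$, picking up the volume factor $h^d=1/J$; thus the TV distance equals $\tfrac{h^d}{2}\mathbb E_{\vct x\sim U(B_j)}\mathbb E_{p\sim A_t(\cdot\mid\vct x,\vct z_{<t})}[|p-p_0|\sd(\vct x,\partial B_j)]$.

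Squaring and applying Cauchy--Schwarz over the joint $(\vct x,p)$-distribution separates the two factors,
\[
\|P_{+j,t}^\pi-P_{-j,t}^\pi\|_\tv^2\le\frac{h^{2d}}{4}\,\mathbb E[(p-p_0)^2]\,\mathbb E_{\vct x\sim U(B_j)}[\sd(\vct x,\partial B_j)^2],
\]
and using the diameter bound $\sd(\vct x,\partial B_j)\le h\sqrt{d}/2$ for $\vct x\in B_j$ gives $\mathbb E[\sd^2]\le dh^2/4$. With $h^{2d}=1/J^2$ this produces the claimed factor $\tfrac{dh^2}{16J^2}$ in front of $\mathbb E[(p-p_0)^2]$.

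The proof is essentially bookkeeping; the only step that needs genuine care is verifying the pairwise cancellation of the $\nu_k$ with $k\neq j$ so that the difference $\lambda_{+j}-\lambda_{-j}$ simplifies to the single clean term involving only $\sd(\vct x,\partial B_j)$. After that, everything is Bernoulli-TV plus Cauchy--Schwarz, with no need for any information-theoretic or concentration machinery. I would expect the main subtlety in writing to be making the conditioning on $\vct z_{<t}$ explicit in every expectation so that the dependence of $A_t$ on $\vct z_{<t}$ is carried through correctly.
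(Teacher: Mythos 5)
Your proposal is correct and follows essentially the same route as the paper: factor the conditional density as $\chi\cdot A_t\cdot\Pr_{\pm j}$, reduce the total variation to the Bernoulli mean difference $|\tfrac13-\tfrac p2|\,\sd(\vct x,\partial B_j)=\tfrac12|p-p_0|\sd(\vct x,\partial B_j)$ supported on $B_j$, pick up the factor $\Pr[\vct x\in B_j]=1/J$, and square via Jensen/Cauchy--Schwarz together with $\sd(\vct x,\partial B_j)\le\sqrt{d}h/2$. The only cosmetic differences are that you compute the mixture difference exactly by pairwise cancellation where the paper bounds it by the worst pair, and you keep $\sd$ inside the expectation before applying Cauchy--Schwarz where the paper bounds it pointwise first; both yield the identical constant $\tfrac{dh^2}{16J^2}$.
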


Combining Lemmas \ref{lem:sddi-ldp} and \ref{lem:ppm-ldp}, and noting that $\Pr[\vct x\in B_j]=\int_{B_j}\ud P_X(\vct x)=1/J$ for all $j\in[J]$, we arrive at the following corollary:
\begin{corollary}
Let $\pi$ be a personalized pricing policy that satisfies $2\varepsilon$-LDP, and $p_0=2/3$. Then
$$
\sum_{j=1}^JD_{\kl}^\sy(M_{+j}^\pi,M_{-j}^\pi)\leq \frac{dh^2(e^{2\varepsilon}-1)^2}{2J}\times \sum_{t=1}^T \mathbb E\left[(p-p_0)^2\bigg|p\sim A_t(\cdot|\vct x,\vct z_{<t}), \vct x\sim P_X, \vct z_{<t}\sim \bar M_{<t}^\pi\right].
$$
\label{cor:sddi-ldp}
\end{corollary}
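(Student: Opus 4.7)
The proof is essentially a direct substitution and bookkeeping, so my plan is to chain together Lemmas~\ref{lem:sddi-ldp} and~\ref{lem:ppm-ldp} and then collapse the sum over hypercubes using the uniformity of $P_X$.

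First, I would start from the second (simpler) inequality in Lemma~\ref{lem:sddi-ldp}, which controls $\sum_j D_{\kl}^{\sy}(M_{+j}^\pi,M_{-j}^\pi)$ by the sum $8(e^{2\varepsilon}-1)^2\sum_t\sum_j \mathbb E_{\vct z_{<t}\sim\bar M_{<t}^\pi}\|P_{+j,t}^\pi(\cdot|\vct z_{<t})-P_{-j,t}^\pi(\cdot|\vct z_{<t})\|_{\tv}^2$. Plugging in the per-$(j,t,\vct z_{<t})$ bound from Lemma~\ref{lem:ppm-ldp}, I pick up a factor of $dh^2/(16J^2)$ multiplied by the inner expectation of $(p-p_0)^2$ against $p\sim A_t(\cdot|\vct x,\vct z_{<t})$ and $\vct x\sim U(B_j)$. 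Combining the numerical prefactors gives the cleaner constant $dh^2(e^{2\varepsilon}-1)^2/(2J^2)$ in front of a triple sum/integral.

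The last ingredient is the reduction of $\sum_{j=1}^J \mathbb E_{\vct x\sim U(B_j)}[g(\vct x)]$ to $J\,\mathbb E_{\vct x\sim P_X}[g(\vct x)]$ for any integrable $g$. Since $P_X$ is uniform on $[0,1]^d$ and the hypercubes $\{B_j\}_{j=1}^J$ form an equal-volume partition with $\Pr[\vct x\in B_j]=1/J$, the conditional distribution of $\vct x$ given $\vct x\in B_j$ is exactly $U(B_j)$, so $\mathbb E_{U(B_j)}[g]=J\,\mathbb E_{P_X}[\vct 1\{\vct x\in B_j\}g(\vct x)]$ and summing over $j$ telescopes to $J\,\mathbb E_{P_X}[g]$. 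Applying this with $g(\vct x)=\mathbb E_{p\sim A_t(\cdot|\vct x,\vct z_{<t})}[(p-p_0)^2]$ converts the $\sum_j\mathbb E_{U(B_j)}$ into $J\,\mathbb E_{\vct x\sim P_X}$, which cancels one power of $J$ in the denominator and produces the announced constant $dh^2(e^{2\varepsilon}-1)^2/(2J)$.

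There is no real obstacle here beyond careful bookkeeping: the only thing to be mindful of is that Lemma~\ref{lem:ppm-ldp} averages $(p-p_0)^2$ against the \emph{conditional} distribution of $\vct x$ on $B_j$, whereas the corollary states the expectation against $P_X$ on the whole cube; the one-power-of-$J$ discrepancy between $U(B_j)$ and $P_X|_{B_j}$ is precisely what turns $1/J^2$ into $1/J$ after summing over the partition. Once this is noted, writing out the chain of inequalities and collecting constants finishes the proof in a few lines.
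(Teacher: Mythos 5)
Your proposal is correct and follows exactly the paper's (one-sentence) derivation: combine the second inequality of Lemma~\ref{lem:sddi-ldp} with Lemma~\ref{lem:ppm-ldp}, and use $\Pr[\vct x\in B_j]=1/J$ to convert $\sum_j\mathbb E_{\vct x\sim U(B_j)}$ into $J\,\mathbb E_{\vct x\sim P_X}$, which turns the $1/J^2$ into $1/J$. The constants check out ($8\cdot\frac{1}{16}=\frac{1}{2}$), so nothing is missing.
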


\subsection{Completing the Proof of Theorem \ref{thm:regret-lower-bound}}

We first establish the following technical lemma showing that, if a personalized pricing policy $\pi$ has small regret
then it must produce prices that are close to $p_0=2/3$.
\begin{lemma}
Let $\pi$ be a personalized pricing policy, $p_0=2/3$ and $\phi(x,\delta) := x\vct 1\{|x|\geq\delta\}$ be the hard-thresholding operator.
Then 
$$
\sR(\pi)\geq \frac{1}{4}\sum_{t=1}^T\mathbb E\left[\phi(|p_t-p_0|^2,h^2)\bigg|p_t\sim A_t(\cdot|\vct x_t,\vct z_{<t}),\vct x_t\sim P_X,\vct z_{<t}\sim\bar M_{<t}^\pi\right].
$$
\label{lem:regret-self-bound}
\end{lemma}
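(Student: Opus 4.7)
The plan is to reduce the bound to a pointwise regret inequality and then average it over the hard instances constructed in Eq.~(\ref{eq:defn-lambda-nu}). First, for each $\vct\nu\in\{0,1\}^J$, each $\vct x\in[0,1]^d$, and each $p\in[\underline p,\overline p]$, I would prove the deterministic bound
\[
f_{\vct\nu}(p^*_{\vct\nu}(\vct x),\vct x)-f_{\vct\nu}(p,\vct x) \;\geq\; \tfrac{1}{4}(p-p_0)^2\,\vct 1\{|p-p_0|\geq h\}.
\]
From the definition of $\lambda_{\vct\nu}$, a first-order calculation gives $p^*_{\vct\nu}(\vct x)=(2+s(\vct x))/(3(1+s(\vct x)))$, where $s(\vct x):=\sum_{j}\nu_j\sd(\vct x,\partial B_j)\in[0,h/2]$ by the localization of each bump to its own hypercube. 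Hence $|p^*_{\vct\nu}(\vct x)-p_0|\leq s(\vct x)/3\leq h/6$, and $f_{\vct\nu}(\cdot,\vct x)$ is $(1{+}s(\vct x))$-strongly concave with interior maximizer (since $p_0=2/3\in(0,1)$). A quadratic expansion yields $f_{\vct\nu}(p^*_{\vct\nu}(\vct x),\vct x)-f_{\vct\nu}(p,\vct x)\geq \tfrac{1}{2}(p-p^*_{\vct\nu}(\vct x))^2$. On the event $|p-p_0|\geq h$, the triangle inequality gives $|p-p^*_{\vct\nu}(\vct x)|\geq (5/6)|p-p_0|$, so the regret is at least $(25/72)(p-p_0)^2>\tfrac{1}{4}(p-p_0)^2$; on $|p-p_0|<h$ the indicator is zero and the bound is trivial.

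Second, I would lift the pointwise inequality to the claimed bound via a mixture argument. Because the supremum dominates the uniform average, $\sR(\pi)\geq 2^{-J}\sum_{\vct\nu\in\{0,1\}^J}\mathbb E_{\vct\nu}^\pi\bigl[\sum_{t=1}^T f_{\vct\nu}(p^*_{\vct\nu}(\vct x_t),\vct x_t)-f_{\vct\nu}(p_t,\vct x_t)\bigr]$. The key observation is that the conditional pricing kernel $A_t(\cdot|\vct x_t,\vct z_{<t})$ does not depend on $\vct\nu$, and $\vct x_t$ is drawn i.i.d.\ from $P_X$ (Assumption~(A1)) independently of past randomness; thus averaging over $\vct\nu$ uniformly turns the law of the triple $(\vct z_{<t},\vct x_t,p_t)$ into the product $\bar M_{<t}^\pi\otimes P_X\otimes A_t(\cdot|\vct x_t,\vct z_{<t})$ that appears on the right-hand side of the lemma. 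Integrating the pointwise regret bound under each $P_{\vct\nu}^\pi$, summing over $t$, and averaging over $\vct\nu$ yields exactly
\[
\sR(\pi)\;\geq\;\tfrac{1}{4}\sum_{t=1}^T\mathbb E\!\left[\phi(|p_t-p_0|^2,h^2)\,\bigg|\,p_t\sim A_t(\cdot|\vct x_t,\vct z_{<t}),\vct x_t\sim P_X,\vct z_{<t}\sim\bar M_{<t}^\pi\right].
\]

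The main technical obstacle is the first (pointwise) step: one must track the interplay between the bump magnitude $s(\vct x)\leq h/2$, the quadratic optimality gap from strong concavity, and the threshold $|p-p_0|\geq h$ so that the constants align to yield the $1/4$ factor (a looser handling of either the shift $|p^*_{\vct\nu}-p_0|$ or the triangle-inequality step would either inflate the threshold $h$ to some larger multiple or weaken the prefactor). The mixture step is largely bookkeeping, but it is essential that the regret inequality holds \emph{pointwise} in $\vct\nu$, so no quantitative loss occurs when passing from per-instance regret to the marginal law $\bar M_{<t}^\pi$.
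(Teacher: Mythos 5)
Your proposal is correct and follows essentially the same route as the paper: a pointwise bound $f_{\vct\nu}(p^*(\vct x),\vct x)-f_{\vct\nu}(p,\vct x)\geq\tfrac{1}{4}\phi(|p-p_0|^2,h^2)$ obtained from the explicit maximizer $p^*_{\vct\nu}(\vct x)=\tfrac{2}{3}-\tfrac{s}{3(1+s)}$ being within $O(h)$ of $p_0$ together with strong concavity, followed by bounding the supremum over $\vct\nu$ by the uniform mixture so that the law of $(\vct z_{<t},\vct x_t,p_t)$ becomes $\bar M_{<t}^\pi\otimes P_X\otimes A_t$. Your constant tracking ($|p^*-p_0|\leq h/6$, curvature factor $\tfrac{1}{2}$, yielding $\tfrac{25}{72}>\tfrac14$) is in fact slightly more careful than the paper's, which quotes $|p^*-p_0|\leq\eta h/6$ (a bound that only holds on $H_j$) and omits the $\tfrac12$ from the quadratic expansion, but both versions land on the same $\tfrac14$ prefactor.
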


We are now ready to prove Theorem \ref{thm:regret-lower-bound}.
\begin{proof}{Proof of Theorem \ref{thm:regret-lower-bound}.}
First, if $\sR(\pi)\geq \varepsilon^{-2/(d+2)}T^{(d+1)/(d+2)}$ we have already proved Theorem \ref{thm:regret-lower-bound}.
Hence, in the rest of the proof we assume that $\sR(\pi)\leq \varepsilon^{-2/(d+2)}T^{(d+1)/(d+2)}=:R$.
Note that $\phi(|p-p_0|^2,h^2) \geq (p-p_0)^2 - h^2$, and $e^{2\varepsilon}-1\leq 8\varepsilon$ for all $\varepsilon\in(0,1]$.
Note also that $J=h^{-d}$ by definition.
Corollary \ref{cor:sddi-ldp} and Lemma \ref{lem:regret-self-bound} together yield
\begin{equation}
\sum_{j=1}^JD_{\kl}^\sy(M_{+j}^\pi,M_{-j}^\pi)\leq 128d h^{2+d}\varepsilon^2(h^2T + R).
\label{eq:proof-lower-bound-1}
\end{equation}

Set hypercube size $h$ as
\begin{equation}
h := (8^{-1}\varepsilon\sqrt{dT})^{-1/(d+2)}.
\label{eq:proof-lower-bound-2}
\end{equation}
It is then easy to verify that $\frac{1}{4J}\sum_{j=1}^JD_{\kl}^\sy(M_{+j}^\pi,M_{-j}^\pi)\leq \frac{1}{2}$. Subsequently, Eq.~(\ref{eq:assouad}) yields
\begin{equation}
\inf_{\pi,\psi}\sup_{\vct\nu\in\{0,1\}^d}\mathbb E_{\vct\nu}^{\pi,\psi}\left[\sum_{j=1}^J\hat\nu_j\neq\nu_j\right]
\geq \frac{J}{2}\left(1-\frac{1}{\sqrt{2}}\right) \geq 0.15J.
\label{eq:proof-lower-bound-3}
\end{equation}
Combining Eq.~(\ref{eq:proof-lower-bound-3}) with Lemma \ref{lem:reduction-classification},
we obtain (in the $\Omega(\cdot)$ notation below we omit only universal numerical constants)
\begin{align*}
\sR(\pi) &\geq \frac{h^2 T}{\bar K_1 d^2 J}\times 0.15J \geq \Omega\left(\frac{h^2 T}{d^2}\right)\geq \Omega\left(\frac{\varepsilon^{-2/(d+2)}T^{(d+1)/(d+2)}}{d^{7/3}}\right),
\end{align*}
which proves the regret lower bound in Theorem \ref{thm:regret-lower-bound}.

Finally, we verify that the conditions $T\geq 6.2\sqrt{\ln(2T)}$ and $\sqrt{T}\geq 66\varepsilon^{-1}J\sqrt{\ln(2T)}$ are satisfied (which is required in Lemma \ref{lem:reduction-classification}).
To satisfy $T\geq 6.2\sqrt{\ln(2T)}$ we only need $T$ to be sufficiently large (larger than some absolute constant). 
The second condition, on the other hand, translates to $\sqrt{T/\ln(2T)}\geq\Omega(\varepsilon^{-1}h^{-d}) = \Omega(\varepsilon^{-1}(\varepsilon\sqrt{dT})^{d/(d+2)})=\Omega(\varepsilon^{-1/(d+2)}d^{d/2(d+2)}\sqrt{T}^{d/(d+2)})$.
Hence, with the asymptotic scaling of $T=\Omega(\varepsilon^{-1}d^{d/2+1}\ln(d/\varepsilon))$,
the condition $\sqrt{T}\geq 66\varepsilon^{-1}J\sqrt{\ln(2T)}$ is satisfied for sufficiently large $T$. $\square$
\end{proof}

\section{Numerical Experiments}\label{sec:num_exp}
In this section, we conduct some illustrative numerical experiments to show the effect of central and local differential privacy on the regret. To do that, we assume the dimension $d=2$ and the demand $y_t(p)$ is a linear demand, i.e., $y_t(p)=\theta_{0}+\theta_1 x_{t,1}+\theta_2 x_{t,2}+\theta_3 p+\nu_t$, where $\nu_t$ is an independent zero-mean noise. Since this example is for illustrative purpose, the value of $\vct\theta$ is taken as $(0.4,0.6,0.6,-0.2)$, $\nu_t\in [-0.1,0.1]$ is a uniform distribution, and $p\in [0.5,4.5]$. Customer's data $\vct x_t$ is taken uniformly from $[0,1]^2$. For the input parameters of the two algorithms (CPPQ and LPPQ), we simply fix $c_1=0.001\sqrt{\ln(T)},c_1'=0.01c_2,c_2=\varepsilon^{-1}\ln^2(T)$, and $\kappa_1 = 0.001\sqrt{\ln(T)},\kappa_2 = 0.1\ln(T)$ for LPPQ. Both algorithms have 30 independent runs with $T\in\{500,2500,12500,62500\}$ and $\varepsilon\in\{0.01,0.1,1,10\}$, and for each $T$ and $\varepsilon$, the average is taken as the output. 

%\xnote{I suggest delete $T=100$ columns}
%
%\xnote{Some concerns The  referee might think our demand model is too simple.}
%
%
%\xnote{(2) Maybe we should get some plots to observe how the cummulative regret depends on $T$, $d$, $\epsilon$. Please see my previous DP paper with Yining Wang.}

\begin{table}[!h]
	\vspace{.25in}
	\centering
	\begin{tabular}{l|cccc}
		\hline
		\hline
%		& $T=100$ & $T=500$ & $T=2500$ & $T=12500$ & $T=62500$  \\ \hline
%		Non-Private  & 30.24       & 15.79       & 7.40       & 3.33       & 1.76     \\
%		$\varepsilon=10$  & 34.41       & 26.77       & 20.68       & 12.65      & 8.68       \\
%		$\varepsilon=1$  & 35.27       & 34.61       & 31.48       & 25.89       & 21.04       \\
%		$\varepsilon=0.1$ & 35.32      & 34.81      & 33.06      & 29.89       & 26.72 \\
%		$\varepsilon=0.01$ & 35.34      & 34.70       & 33.63       & 30.51       & 27.21   \\
		&  $T=500$ & $T=2500$ & $T=12500$ & $T=62500$  \\ \hline
		Non-Private     & 15.79       & 7.40       & 3.33       & 1.76     \\
		$\varepsilon=10$       & 26.77       & 20.68       & 12.65      & 8.68       \\
		$\varepsilon=1$        & 34.61       & 31.48       & 25.89       & 21.04       \\
		$\varepsilon=0.1$    & 34.81      & 33.06      & 29.89       & 26.72 \\
		$\varepsilon=0.01$   & 34.70       & 33.63       & 30.51       & 27.21   \\
		\hline
		\hline
	\end{tabular}
	\caption{Percentage regret ($\%$) for CPPQ.}
	\label{tab:CPPQ}
\end{table}

\begin{table}[!h]
	\vspace{.25in}
	\centering
	\begin{tabular}{l|cccc}
		\hline
		\hline
%		& $T=100$ & $T=500$ & $T=2500$ & $T=12500$ & $T=62500$  \\ \hline
%%		Non-Private  & 28.87       & 19.13       & 13.68       & 9.63       & 7.75     \\
%		Non-Private  & 30.24       & 15.79       & 7.40       & 3.33       & 1.76     \\
%		$\varepsilon=10$  & 26.60       & 21.82       & 17.53      & 15.50       & 13.27       \\
%		$\varepsilon=1$  & 24.91       & 20.81       & 17.40       & 15.73       & 14.29       \\
%		$\varepsilon=0.1$ & 25.39      & 22.89      & 17.66      & 15.95       & 14.80  \\
%		$\varepsilon=0.01$ & 25.04      & 22.53       & 20.70       & 17.20       & 16.74   \\
		&  $T=500$ & $T=2500$ & $T=12500$ & $T=62500$  \\ \hline
		%		Non-Private  & 28.87       & 19.13       & 13.68       & 9.63       & 7.75     \\
		Non-Private     & 15.79       & 7.40       & 3.33       & 1.76     \\
		$\varepsilon=10$     & 21.82       & 17.53      & 15.50       & 13.27       \\
		$\varepsilon=1$      & 20.81       & 17.40       & 15.73       & 14.29       \\
		$\varepsilon=0.1$     & 22.89      & 17.66      & 15.95       & 14.80  \\
		$\varepsilon=0.01$     & 22.53       & 20.70       & 17.20       & 16.74   \\
		\hline
		\hline
	\end{tabular}
	\caption{Percentage regret ($\%$) for LPPQ.}
	\label{tab:LPPQ}
\end{table}

\begin{figure}
	\centering
	\includegraphics[width=0.5\textwidth]{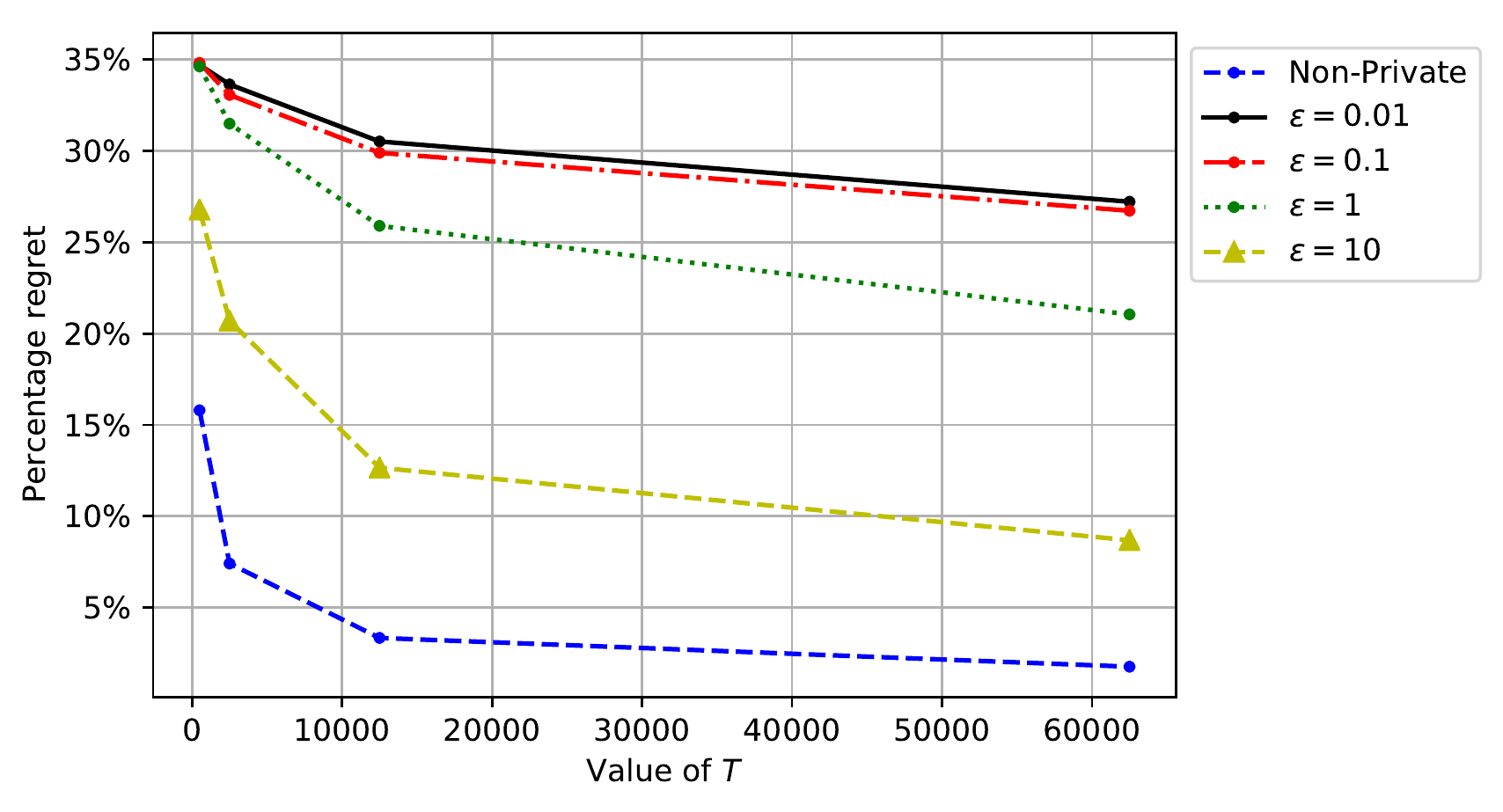}
	~
	\includegraphics[width=0.5\textwidth]{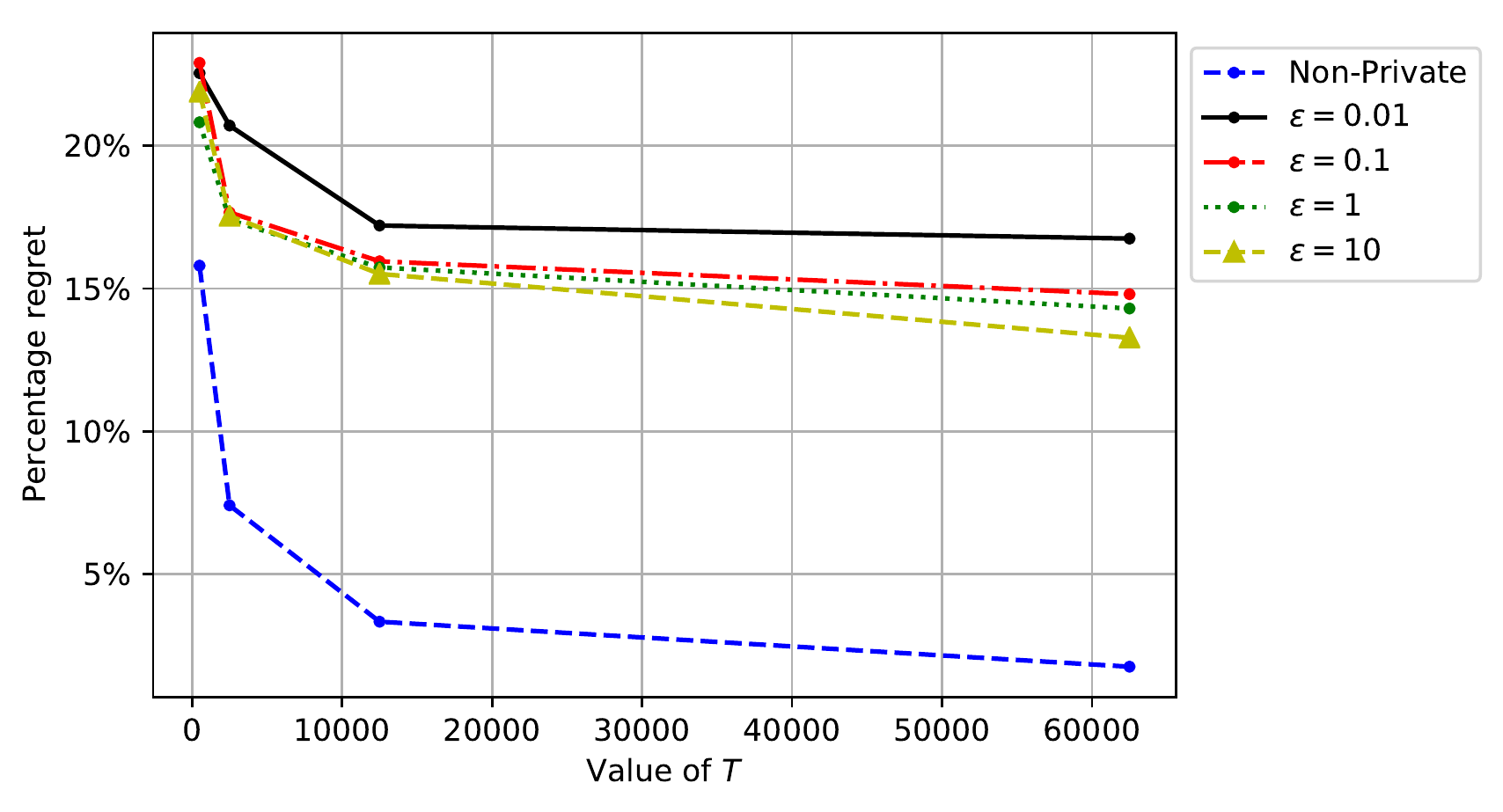}
	\caption{Percentage regret with respect to $T$ of CPPQ (the left panel) and LPPQ (the right panel). In both cases, the percentage regrets converge to zero as $T$ grows, and a larger $\varepsilon$ (i.e., less privacy protection)  leads to a smaller percentage regret.}
	\label{fig:cppq_lppq_T}
\end{figure}

\begin{figure}
	\centering
	\includegraphics[width=0.6\textwidth]{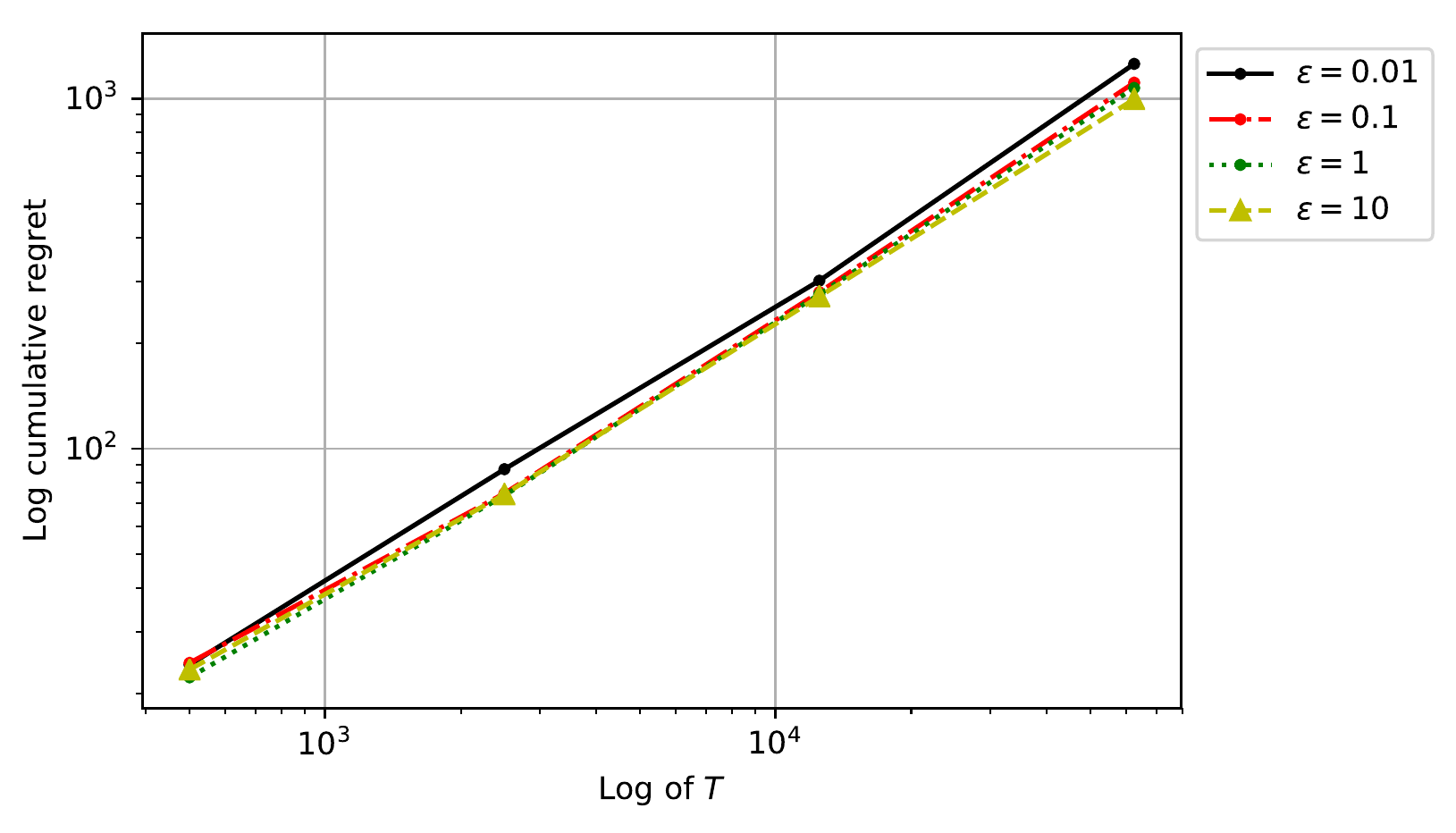}
	\caption{Log-log plot of $\sR_T(f,\pi)/\ln(T)$ for LPPQ for all $\varepsilon$ and $T$. The slopes of fitted lines with $\varepsilon\in\{0.01,0.1,1,1\}$ are $\{0.79,0.77,0.77,0.75\}$ respectively.}
	\label{fig:loglog_LPPQ}
\end{figure}

%\begin{figure}
%	\centering
%	\includegraphics[width=0.8\textwidth]{LPPQ_T}
%\end{figure}

%\begin{figure}
%	\centering
%	\includegraphics[width=0.5\textwidth]{epsilon}
%	\caption{Percentage regret with respect to $\varepsilon$ of CPPQ and LPPQ.}
%	\label{fig:epsilon}
%\end{figure}

Results of the two algorithms are summarized in Table \ref{tab:CPPQ} (for CPPQ), Table \ref{tab:LPPQ} (for LPPQ), and Figure \ref{fig:cppq_lppq_T}. To compare the performance of the algorithms across different $T$ and $\varepsilon$, we compute the percentage regret, which is defined as $\sR_T(f,\pi)/\sum_{t=1}^{T}f(p^*(\vct x_t),\vct x_t)$. For the non-private benchmark, we adopt the non-private version of CPPQ (i.e., without adding noise) because it estimates the revenue more directly.
According to these results, we can see that larger $\varepsilon$ in general leads to better regrets, which are closer to the one with no privacy. Moreover, it can be observed that the performance of CPPQ is quite sensitive to $\varepsilon$ (especially when $\varepsilon$ is relatively large). 
This is in line with Theorem \ref{thm:cppt-regret} as the the impact of $\varepsilon$ on the regret is by the factor of $\varepsilon^{-1}$.
For LPPQ, our observation is similar. That is, higher privacy leads to higher percentage regret. However, it shall be noted that the difference of percentage regret with respect to privacy parameter $\varepsilon$ is not very significant. This observation is consistent with our theoretical results in Theorem \ref{thm:regret-lppt}, which shows that the dependency of regret on $\varepsilon$ is $\varepsilon^{-2/(d+2)}$. Moreover, in Figure \ref{fig:loglog_LPPQ}, we plot the log-log scale of cumulative regret $\sR_T(f,\pi)/\ln(T)$ (it is divided by $\ln(T)$ because the regret upper bound has $\ln(T)$ factor as shown in Theorem \ref{thm:regret-lppt}) of LPPQ with all values of $\varepsilon$. The slopes of the fitted lines are $\{0.79,0.77,0.77,0.75\}$ with respect to $\varepsilon\in\{0.01,0.1,1,10\}$, which are quite close to $(d+1)/(d+2)=0.75$ when $d=2$. Thus, our numerical results verify the regret upper bound $\tilde O(\varepsilon^{-2/(d+2)}T^{(d+1)/(d+2)})$ (with respect to $T$) of LPPQ.  

In the end, comparing the result of CPPQ and LPPQ, we see that CPPQ does not necessarily have better performance than LPPQ, even though $\varepsilon$-CDP is weaker than $\varepsilon$-LDP in many cases. One reason is that in CPPQ, the Laplace noise has a scale of $2(L+1)/\varepsilon$ as opposed to $2/\varepsilon$ in LPPQ. As a result, for small $\varepsilon$, Lap($2(L+1)/\varepsilon$) can be quite significant especially in a relatively short horizon. This result is actually not surprising from the theoretical performance of CPPQ (i.e., $\tilde O(\varepsilon^{-1}T^{d/(d+4)})$ for the part with $\varepsilon$) versus LPPQ (i.e., $\tilde O(\varepsilon^{-2/(d+2)}T^{(d+1)/(d+2)})$). That is, when $\varepsilon$ is small while $T$ is not very large, it is very likely $\varepsilon^{-1}T^{d/(d+4)}> \varepsilon^{-2/(d+2)}T^{(d+1)/(d+2)}$ (e.g., $\varepsilon<o(T^{-5/12})$ when $d=2$).

\section{Conclusion}\label{sec:conclusion}
This paper studies the online personalized pricing problem with nonparametric demand and data privacy (to the best of our knowledge, our paper is the first result on this problem). That is, over a finite time horizon, the platform decides a price for each arriving customer based on  her personal data in order to maximize the cumulative revenue and protect customer's privacy. 
Two definitions of data privacy have been investigated: $\varepsilon$-CDP and $\varepsilon$-LDP, each of which depends on a parameter $\varepsilon>0$ (i.e., smaller $\varepsilon$ means higher security). %Briefly, $\varepsilon$-CDP means that the adversary arriving in any time cannot infer the personal information of any customer arrived earlier by observing the pricing decision of the platform. For $\varepsilon$-LDP, even the platform cannot use the real data of customers to update the pricing policy. Intuitively we can see that $\varepsilon$-LDP is stronger than $\varepsilon$-CDP, and this is indeed proved to be true in our setting (see Proposition \ref{prop:lpd-adp}).
Two algorithms are developed in this paper: CPPQ for $\varepsilon$-CDP and LPPQ for $\varepsilon$-LDP. Both algorithms are based on the idea of splitting the domain of customer's data into hypercubes, and applying a novel quadrisection search of optimal price in each hypercube. To satisfy $\varepsilon$-CDP, CPPQ uses a tree-based aggregation method to estimate the reward of the tested prices in each hypercube. Results show that this algorithm has regret at least $\tilde O(T^{(d+2)/(d+4)}+\varepsilon^{-1}T^{d/(d+4)})$ (Theorem \ref{thm:cppt-regret}), where the first term matches the near-optimal regret of $\tilde O(T^{(d+2)/(d+4)})$ without $\varepsilon$-CDP \citep{chen2021nonparametric}. On the other hand, LPPQ protects the privacy by adding noise to each data sample, and it achieves a regret of $\tilde O(\varepsilon^{-2/(d+2)}T^{(d+1)/(d+2)})$ (Theorem \ref{thm:regret-lppt}). Moreover, this regret is proved to be near-optimal in Theorem \ref{thm:regret-lower-bound}, which shows that any algorithm satisfying $\varepsilon$-LDP has the regret at least $\Omega(\varepsilon^{-2/(d+2)}T^{(d+1)/(d+2)})$. 

For potential future research, one direction is to consider data privacy protection with nonparametric model in other operations management problems (e.g., healthcare, assortment selection, ranking). Second, one may consider other techniques of protecting differential privacy in personalized pricing problem. For instance, the platform may add noise directly to historical customers' data $\vct x_t$, and a potential technique of demand learning in this scenario is the so-called nonparametric regression with error in variables (see, e.g., \citealt{fan1993nonparametric}). Developing near-optimal online learning algorithms using this technique is an interesting open problem.

\bibliography{refs}
\bibliographystyle{apa-good}

%% Here starts the e-companion (EC)
%%%%%%%%%%%%%%%%%%%%%%%%%%%%%%%%%%%%%%%%%%%%%%%%%%%%%%%%%%
\ECSwitch

\ECDisclaimer
%%%%%%%%%%%%%%%%%%%%%%%%%%%%%%%%%%%%%%%%%%%%%%%%%%%%%%%%%%

\newcommand{\mD}{\mathcal D}

%\setlength{\headsep}{0.4in}

%%% Main head for the e-companion
\ECHead{Supplementary material}

\section{Proof of Theorem \ref{thm:cppt-regret}}

We first establish the $\varepsilon$-CDP property of Algorithm \ref{alg:cppt}.
For each $t$ and $k=k_t$ define $\vct u_t:=(u_{j,t,k_t})_{j\in J}\in\mathbb R^J$ and $\vct v_t := (v_{j,t,k_t})_{j\in J}$.
It is clear that both $\vct u_t,\vct v_t$ are one-hot vectors (i.e., having zero values except for one entry $j=j_t$)
and hence their $\ell_1$ sensitivity is upper bounded by 2.
Note also that each time $\textsc{TreeBasedAggregation}$ is invoked, the resulting $\hat S_t$ consists of at most $L+1=\lfloor\log_2 T\rfloor+1$
partial sums. Therefore, by simple composition of Laplace mechanisms we have that all $\{r_{j,k}(t),\mu_{j,k}(t)\}_{t=1}^T$ are $\varepsilon$-differentially
private.
Since other parts of Algorithm \ref{alg:cppt} (e.g., the updates of $\vct\rho_j$ and $\varsigma_j$) no longer access the sensitive data $\{s_t=(\vct x_t,y_t,p_t)\}_{t=1}^T$ except for the offering price $p_t$, we conclude that Algorithm \ref{alg:cppt} satisfies $\varepsilon$-CDP thanks to the closedness-to-post-processing
property of differential privacy.

In the rest of the proof we upper bound the regret of Theorem \ref{thm:cppt-regret}.
For simplicity we will use the notation $\const$ to denote any universal numerical constant that does not depend on any problem parameters.
For $j\in[J]$, $t\in[T]$ and $k\in[5]$ recall the definitions that $u_{j,k,t}=\vct 1\{j=j_t\wedge k=k_t\}p_ty_t$ and $v_{j,k,t}=\vct 1\{j=j_t\wedge k=k_t\}$.
We first state a technical lemma that upper bounds (with high probability) the deviation of $r_{j,k}(t),\mu_{j,k}(t)$ from $\sum_{s\leq t}u_{s,j,k}$ and $\sum_{s\leq t}v_{s,j,k}$, respectively.
\begin{lemma}
With probability $1-O(T^{-1})$ the following hold uniformly over all $t\in[T]$, $j\in[J]$ and $k\in[5]$:
$$
\max\left\{\left|r_{j,k}(t) - \sum_{s\leq t}u_{s,j,k}\right|, \left|\mu_{j,k}(t)-\sum_{s\leq t}v_{s,j,k}\right|\right\}\leq 19\varepsilon^{-1}\ln^2(2T^3).
$$
\label{lem:cppt-aggregation}
\end{lemma}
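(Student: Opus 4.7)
The plan is to express $r_{j,k}(t) - \sum_{s\leq t} u_{s,j,k}$ as a short sum of independent Laplace perturbations, bound its magnitude via a sub-exponential concentration inequality, and then union-bound over $(t,j,k)$.

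First, I would trace how $r_{j,k}(t)$ is assembled inside Algorithm~\ref{alg:tree-based-aggregation}. In Algorithm~\ref{alg:cppt} the privacy budget $\varepsilon$ is split evenly between the reward statistic $r$ and the count statistic $\mu$, so each call to \textsc{TreeBasedAggregation} is made with budget $\varepsilon/2$. Internally this becomes $\varepsilon' = \varepsilon/(2(L+1))$ with $L = \lfloor \log_2 T\rfloor$, and each of the $L+1$ registers $\hat\alpha_\ell$ receives, at every refresh, an independent fresh draw $w\sim\Lap(2/\varepsilon')$ of scale $b = 2/\varepsilon' = 4(L+1)/\varepsilon$. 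The standard correctness invariant for tree-based aggregation (see \citealt{chan2011private,dwork2014analyze}) ensures that at time $t$ the noise-free components of the currently-active registers $\{\ell : b_\ell(t)=1\}$ telescope to exactly $\sum_{s\leq t} u_{s,j,k}$. Hence
\[
  r_{j,k}(t) - \sum_{s\leq t} u_{s,j,k} \;=\; W_{j,k}(t),
\]
where $W_{j,k}(t)$ is a sum of at most $L+1$ independent $\Lap(b)$ random variables (only the most recent refresh per level contributes, as earlier noise on that register is overwritten). An identical decomposition with $u_{s,j,k}\mapsto v_{s,j,k}$ handles $\mu_{j,k}(t)$.

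Second, I would invoke a Laplace tail bound (e.g., Lemma~2.8 of \citealt{chan2011private}): for $n = L+1$ and $b = 4(L+1)/\varepsilon$, and any $\lambda$ in the sub-Gaussian regime $\lambda \leq 2\sqrt{2}\,nb$,
\[
  \Pr\!\left[|W_{j,k}(t)| > \lambda\right] \;\leq\; 2\exp\!\left(-\frac{\lambda^2}{8 n b^2}\right).
\]
Setting $\lambda = 19\varepsilon^{-1}\ln^2(2T^3)$, so that $\lambda^2 = \Theta(\ln^4 T/\varepsilon^2)$ while $nb^2 = O(\ln^3 T/\varepsilon^2)$, the exponent scales as $\Omega(\ln T)$ with a leading constant easily chosen so that each individual tail is at most $O(T^{-4})$. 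A union bound over $t\in[T]$, $j\in[J]$, $k\in[5]$ and over the two statistics $r$ and $\mu$ --- at most $10JT \leq 10T^2$ events, since $J=\lceil T^{d/(d+4)}\rceil\leq T$ --- then yields the claimed uniform $1 - O(T^{-1})$ probability.

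The main obstacle is purely bookkeeping, not conceptual. One must verify that the invocation pattern of \textsc{TreeBasedAggregation} (which in Algorithm~\ref{alg:cppt} is called for index $k$ only when $k_t=k$) still preserves the telescoping invariant; this holds because the skipped periods contribute $u_{s,j,k}=0$ and so behave as implicit no-op updates on the tree. The other routine check is that the specific constant $19$ is consistent with the tail calibration above, which follows from a direct substitution using the explicit form of the Laplace concentration inequality in \cite{chan2011private}.
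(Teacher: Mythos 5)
Your proof takes essentially the same route as the paper's: the telescoping invariant of the tree-based aggregation reduces the deviation to a sum of at most $L+1$ independent Laplace variables, to which the concentration inequality of \cite{chan2011private} is applied, followed by a union bound over $t$, $j$, $k$ and the two statistics. The only (cosmetic) difference is that you track the $\varepsilon/2$ budget split and hence use Laplace scale $4(L+1)/\varepsilon$, whereas the paper's proof works with $2(L+1)/\varepsilon$; this changes the attainable numerical constant by a factor of two but not the $\varepsilon^{-1}\ln^2(2T^3)$ order of the bound.
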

\begin{proof}{Proof of Lemma \ref{lem:cppt-aggregation}.}
We focus on the $|r_{j,k}(t)-\sum_{s\leq t}u_{s,j,k}|$ term only as the proof for the other term is exactly the same.
Let $t=\sum_{\ell=0}^Lb_\ell(t)2^\ell$ where $L=\lfloor\log_2 T\rfloor$ and $b_\ell(t)\in\{0,1\}$ be the binary expression of $t$.
By the definition of the \textsc{TreeBasedAggregation}, we have that
\begin{equation}
r_{j,k}(t) - \sum_{s\leq t}u_{s,j,k} = \sum_{\ell=0}^L b_\ell(t)w_\ell,
\label{eq:proof-aggregation-1}
\end{equation}
where $\{w_\ell\}_{\ell=0}^L\overset{i.i.d.}{\sim}\Lap(2/\varepsilon')$ with $\varepsilon'=\varepsilon/(L+1)$.
Invoke the concentration inequality for sums of i.i.d.~centered Laplace random variables \citep[Corollary 2.9]{chan2011private}.
We have with probability $1-O(T^{-3})$ that
\begin{align}
\left| \sum_{\ell=0}^L b_\ell(t)w_\ell\right|
&\leq 2\varepsilon^{-1}(L+1)\left(\sqrt{L+1}+\sqrt{\ln(2T^3)}\right)\sqrt{8\ln(2T^3)}\nonumber\\
&\leq 19\varepsilon^{-1}\ln^2(2T^3).\label{eq:proof-aggregation-2}
\end{align}
Combine Eqs.~(\ref{eq:proof-aggregation-1},\ref{eq:proof-aggregation-2}) and apply a union bound over all $t\in[T]$, $j\in[J]$ and $k\in[5]$.
Lemma \ref{lem:cppt-aggregation} is thus proved. $\square$
\end{proof}

In the rest of the proof, define $n_j=t-\varsigma_j$, $\hat r_{jk}=r_{jk}(t)-r_{jk}(\varsigma_j)$ and $\hat \mu_{jk}=\mu_{jk}(t)-\mu_{jk}(t)(\varsigma_j)$.
Define $\tilde r_{jk} := \sum_{\tau=\varsigma_j+1}^t u_{j,\tau,k}$ and $\tilde\mu_{jk}:=\sum_{\tau=\varsigma_j+1}^t v_{j,\tau,k}$.
Lemma \ref{lem:cppt-aggregation} implies that with probability $1-O(T^{-1})$, 
\begin{equation}
\max\{|\tilde r_{jk}-\hat r_{jk}|,|\tilde\mu_{jk}-\hat \mu_{jk}|\}\leq 38\varepsilon^{-1}\ln^2(2T^3).
\label{eq:proof-cppt-1}
\end{equation}
On the other hand, observe that both $\tilde\mu_{jk}$ and $\tilde r_{jk}$ are sums of i.i.d.~random variables.
The following technical lemma then upper bounds the concentration of $\tilde\mu_{jk}$ and $\tilde r_{jk}$ towards their expected values.
\begin{lemma}
For each hypercube $B_j$ define $\bar\chi(B_j) := J\times \Pr[\vct x\in B_j] \in [0,C_X]$.
Let $n_j=t-\varsigma_j$.
The following holds with probability $1-O(T^{-1})$ uniformly over all $t\in[T]$, $j\in[J]$ and $k\in[5]$:
\begin{equation}
\max\left\{\left|\frac{\tilde r_{jk}}{n_j} - \frac{\bar\chi(B_j)h^d}{5n_j} f_{B_j}(\rho_{jk}) \right|, \left|\frac{\tilde\mu_{jk}}{n_j} - \frac{\bar\chi(B_j)h^d}{5n_j}\right|\right\} \leq  \sqrt{\frac{C_Xh^d\ln(2T^3)}{n_j}} + \frac{1.5\ln(2T^3)}{n_j}.\label{eq:cppt-concentration-1}
\end{equation}
Furthermore,
\begin{equation}
\left|\frac{\tilde r_{jk}}{\tilde\mu_{jk}} - f_{B_j}(\rho_{jk})\right| \leq \sqrt{\frac{\ln(2T^3)}{2\tilde\mu_{jk}}}.
\label{eq:cppt-concentration-2}
\end{equation}
\label{lem:cppt-concentration}
\end{lemma}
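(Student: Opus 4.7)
The plan is to treat $\tilde r_{jk}$ and $\tilde\mu_{jk}$, for each fixed realization of the stopping time $\varsigma_j$, as sums of bounded independent random variables indexed by a deterministic subset of $(\varsigma_j, t]$. One then applies Bernstein's inequality to deduce \eqref{eq:cppt-concentration-1} and Hoeffding's inequality to deduce \eqref{eq:cppt-concentration-2}, with a uniform union bound over all possible $(\varsigma_j, t, j, k)$ converting per-event bounds into the uniform statement claimed.

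First, fix an arbitrary pair $(\varsigma_j, t)$ with $n_j = t - \varsigma_j$. Because $k_\tau \equiv \tau \bmod 5$ is deterministic, the set $\mathcal I := \{\tau \in (\varsigma_j, t] : k_\tau = k\}$ is a deterministic subset of cardinality $n_j^{(k)}$ satisfying $|n_j^{(k)} - n_j/5| \le 1$. For $\tau \in \mathcal I$, the summands $v_{j,\tau,k} = \mathbf{1}\{\vct x_\tau \in B_j\}$ are i.i.d.\ Bernoulli$(\bar\chi(B_j)h^d)$ and $u_{j,\tau,k} = \rho_{jk}y_\tau\mathbf{1}\{\vct x_\tau\in B_j\}$ are i.i.d., bounded in $[0,1]$, with mean $\bar\chi(B_j)h^d f_{B_j}(\rho_{jk})$; both families have variance at most $C_X h^d$. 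Applying Bernstein's inequality at a per-event failure probability $O(T^{-5})$ (chosen to absorb a subsequent union bound over $O(JT^2)$ tuples $(j,k,\varsigma_j,t)$) yields a deviation of order $\sqrt{C_X h^d n_j \ln(2T^3)} + \ln(2T^3)$ for both statistics. Dividing by $n_j$ and absorbing the $O(1/n_j)$ bias coming from $|n_j^{(k)} - n_j/5|\le 1$ into the additive $\ln(2T^3)/n_j$ term establishes \eqref{eq:cppt-concentration-1}.

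For the sharper ratio bound \eqref{eq:cppt-concentration-2}, I would condition on the random contributing set $\mathcal T_{jk} := \{\tau \in \mathcal I : \vct x_\tau \in B_j\}$ and its cardinality $m = \tilde\mu_{jk}$. Given $\mathcal T_{jk}$, the rewards $\{\rho_{jk}y_\tau\}_{\tau\in\mathcal T_{jk}}$ are mutually independent, lie in $[0,1]$, and --- after integrating $\vct x_\tau$ against the conditional law $P_X(\cdot\mid\vct x \in B_j)$ --- have common mean $\rho_{jk}\mathbb E[\lambda(\rho_{jk},\vct x)\mid \vct x \in B_j] = f_{B_j}(\rho_{jk})$. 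Hoeffding's inequality then yields $|\tilde r_{jk}/m - f_{B_j}(\rho_{jk})|\le \sqrt{\ln(2T^3)/(2m)}$ at per-event failure $O(T^{-3})$, and a union bound over $m\in[T]$ and $(j,k,\varsigma_j,t)$ finishes the proof.

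The one genuinely delicate point --- and what I expect to be the main obstacle --- is that $\varsigma_j$ is a data-dependent stopping time adapted to the algorithm's filtration, so a priori $\tilde r_{jk}$ and $\tilde\mu_{jk}$ are \emph{not} sums over a deterministic index set and independence across $\tau$ cannot be invoked naively. I would sidestep this by proving both concentration bounds uniformly over all \emph{fixed} pairs $(\varsigma_j, t)\in\{0,1,\ldots,T-1\}\times[T]$ before specializing to the realized random value of $\varsigma_j$; the extra union bound costs only an absolute-constant inflation of the logarithmic factor, which is already absorbed into $\ln(2T^3)$. Once this step is handled, the remainder is a routine invocation of the concentration toolbox.
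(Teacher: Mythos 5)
Your proposal is correct and follows essentially the same route as the paper: Bernstein's inequality for the count/revenue sums over the post-$\varsigma_j$ window to get \eqref{eq:cppt-concentration-1}, Hoeffding's inequality conditional on the set of contributing customers to get \eqref{eq:cppt-concentration-2}, and a union bound over $(t,j,k)$. Your extra union over all fixed values of $\varsigma_j$ to handle its data-dependence is a sound (and slightly more careful) way to justify a step the paper's proof passes over implicitly, at the cost of only a constant inflation in the logarithmic factor.
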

\begin{proof}{Proof of Lemma \ref{lem:cppt-concentration}.}
For Eq.~(\ref{eq:cppt-concentration-1})
we will prove the concentration inequality involving $\tilde\mu_{jk}$ only, as the other inequality can be proved using the exact same argument.
Note that for each $j\in[J]$, $s\in[\varsigma_j+1,t]$ with $s\equiv k\mod 5$, we have that $v_{j,s,k}\in\{0,1\}$ almost surely
and furthermore $\Pr[v_{s,j,k}=1] = \Pr[\vct x_s\in B_j, k\equiv s\mod 5] = \bar\chi(B_j)h^d/5$. By Bernstein's inequality \citep{bennett1962probability}, for any particular $t$, $j$ and $k$
it holds with probability $1-O(T^{-3})$ that
$$
\left|\frac{\tilde\mu_{jk}}{n_j} - \frac{\bar\chi(B_j)h^d}{5n_j}\right|\leq 2\sqrt{\frac{\bar\chi(B_j)h^d\ln(2T^3)}{5 n_j}}+\frac{1.5\ln(2T^3)}{n_j}
\leq \sqrt{\frac{C_Xh^d\ln(2T^3)}{ n_j}} + \frac{1.5\ln(2T^3)}{n_j}.
$$
Apply the union bound over all $t\in[T]$, $j\in[J]$ and $k\in[5]$. We complete the proof of Eq.~(\ref{eq:cppt-concentration-1}). 

We next prove Eq.~(\ref{eq:cppt-concentration-2}). Note that $\tilde r_{jk}$ is a sum of $\tilde\mu_{jk}$ i.i.d.~random variables each with expectation $f_{B_j}(\rho_{jk})$ and supported on $[0,1]$ almost surely. 
By Hoeffding's inequality \citep{hoeffding1963probability}, it holds with probability $1-O(T^{-3}$) that 
$$
\left|\frac{\tilde r_{jk}}{\tilde\mu_{jk}} - f_{B_j}(\rho_{jk})\right|\leq \sqrt{\frac{\ln(2T^3)}{2\tilde\mu_{jk}}}.
$$
Apply the union bound over all $t\in[T]$, $j\in[J]$ and $k\in[5]$. We complete the proof of Eq.~(\ref{eq:cppt-concentration-2}). 
$\square$
\end{proof}

Eq.~(\ref{eq:proof-cppt-1}) and Lemma \ref{lem:cppt-concentration} together yield the following lemma:
\begin{lemma}
With probability $1-O(T^{-1})$ the following holds uniformly over all $t\in[T]$, $j\in[J]$, $k\in[4]$ that satisfies $\hat\mu_{jk}\geq c_2=76\varepsilon^{-1}\ln^2(2T^3)$:
$2\hat\mu_{jk}\geq\tilde\mu_{jk}\geq \hat\mu_{jk}/2$, and furthermore
$$
\left|\frac{\hat r_{jk}}{\hat\mu_{jk}} - f_{B_j}(\rho_{jk})\right|\leq \frac{2c_2}{\tilde\mu_{jk}} + \sqrt{\frac{\ln(2T^3)}{2\tilde\mu_{jk}}}
\leq \frac{4c_2}{\hat\mu_{jk}} + \sqrt{\frac{\ln(2T^3)}{\hat\mu_{jk}}}.
$$
\label{lem:rhojk-concentration}
\end{lemma}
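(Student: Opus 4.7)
The plan is to combine the tree-based-aggregation deviation bound in Eq.~(\ref{eq:proof-cppt-1}) with the concentration bound (\ref{eq:cppt-concentration-2}) from Lemma~\ref{lem:cppt-concentration} via triangle inequality, routing through the ``true'' cumulative statistics $\tilde r_{jk}$ and $\tilde\mu_{jk}$.

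First I would quantify the comparison between $\hat\mu_{jk}$ and $\tilde\mu_{jk}$. Eq.~(\ref{eq:proof-cppt-1}) gives $|\tilde\mu_{jk}-\hat\mu_{jk}|\leq 38\varepsilon^{-1}\ln^2(2T^3)=c_2/2$ with probability $1-O(T^{-1})$. Combined with the hypothesis $\hat\mu_{jk}\geq c_2$, this yields $\tilde\mu_{jk}\geq\hat\mu_{jk}-c_2/2\geq\hat\mu_{jk}/2$ and $\tilde\mu_{jk}\leq\hat\mu_{jk}+c_2/2\leq (3/2)\hat\mu_{jk}\leq 2\hat\mu_{jk}$, establishing the first claim of the lemma. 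Equivalently $1/\hat\mu_{jk}\leq 2/\tilde\mu_{jk}$, which I will use below.

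Next I would control $\hat r_{jk}/\hat\mu_{jk}-\tilde r_{jk}/\tilde\mu_{jk}$ by writing $\hat r_{jk}=\tilde r_{jk}+\delta_r$ and $\hat\mu_{jk}=\tilde\mu_{jk}+\delta_\mu$ with $|\delta_r|,|\delta_\mu|\leq c_2/2$. A direct algebraic manipulation gives
\[
\frac{\hat r_{jk}}{\hat\mu_{jk}}-\frac{\tilde r_{jk}}{\tilde\mu_{jk}}
=\frac{\delta_r\tilde\mu_{jk}-\tilde r_{jk}\delta_\mu}{\hat\mu_{jk}\tilde\mu_{jk}}
=\frac{\delta_r}{\hat\mu_{jk}}-\frac{\tilde r_{jk}}{\tilde\mu_{jk}}\cdot\frac{\delta_\mu}{\hat\mu_{jk}}.
\]
Since $u_{s,j,k}=v_{s,j,k}\cdot p_sy_s\in[0,v_{s,j,k}]$ for each $s$ (because $p_s,y_s\in[0,1]$), summing gives $0\leq\tilde r_{jk}\leq\tilde\mu_{jk}$, so $|\tilde r_{jk}/\tilde\mu_{jk}|\leq 1$. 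Hence
\[
\left|\frac{\hat r_{jk}}{\hat\mu_{jk}}-\frac{\tilde r_{jk}}{\tilde\mu_{jk}}\right|
\leq\frac{|\delta_r|+|\delta_\mu|}{\hat\mu_{jk}}\leq\frac{c_2}{\hat\mu_{jk}}\leq\frac{2c_2}{\tilde\mu_{jk}},
\]
where the last step uses the ratio $1/\hat\mu_{jk}\leq 2/\tilde\mu_{jk}$ derived above.

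Finally, I would apply the triangle inequality together with Eq.~(\ref{eq:cppt-concentration-2}) of Lemma~\ref{lem:cppt-concentration}, obtaining
\[
\left|\frac{\hat r_{jk}}{\hat\mu_{jk}}-f_{B_j}(\rho_{jk})\right|
\leq \frac{2c_2}{\tilde\mu_{jk}}+\sqrt{\frac{\ln(2T^3)}{2\tilde\mu_{jk}}},
\]
which is the first inequality of the lemma. For the second inequality I would substitute $\tilde\mu_{jk}\geq\hat\mu_{jk}/2$ into both terms to arrive at $4c_2/\hat\mu_{jk}+\sqrt{\ln(2T^3)/\hat\mu_{jk}}$. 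A union bound over $t\in[T]$, $j\in[J]$, $k\in[5]$ absorbed into the high-probability events of Eq.~(\ref{eq:proof-cppt-1}) and Lemma~\ref{lem:cppt-concentration} maintains the $1-O(T^{-1})$ guarantee. The only mildly delicate point is the ratio-difference step, so the decomposition into $\delta_r/\hat\mu_{jk}$ and $(\tilde r_{jk}/\tilde\mu_{jk})(\delta_\mu/\hat\mu_{jk})$, together with the boundedness $\tilde r_{jk}\leq\tilde\mu_{jk}$, is what makes everything go through cleanly.
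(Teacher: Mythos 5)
Your proposal is correct and follows essentially the same route as the paper's proof: use the tree-aggregation deviation bound in Eq.~(\ref{eq:proof-cppt-1}) to get $2\hat\mu_{jk}\geq\tilde\mu_{jk}\geq\hat\mu_{jk}/2$, control the ratio difference $\hat r_{jk}/\hat\mu_{jk}-\tilde r_{jk}/\tilde\mu_{jk}$ via the boundedness $\tilde r_{jk}/\tilde\mu_{jk}\in[0,1]$, then combine with Eq.~(\ref{eq:cppt-concentration-2}) by the triangle inequality and a union bound. Your exact algebraic decomposition of the ratio difference is a slightly cleaner packaging of the same estimate the paper obtains, and all constants work out identically.
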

\begin{proof}{Proof of Lemma \ref{lem:rhojk-concentration}.}
Fix $t\in[T]$, $j\in[J]$ and $k\in[5]$.
For notational simplicity define $\beta := \bar\chi(B_j)/(5h^d)$, $\Delta_r:=\hat r_{jk}-\tilde r_{jk}$ and $\Delta_\mu := \hat\mu_{jk}-\tilde \mu_{jk}$.
By Eq.~(\ref{eq:proof-cppt-1}), the condition that $\hat\mu_{jk}\geq c_2=76\varepsilon^{-1}\ln^2(2T^3)$ implies that $2\hat\mu_{jk}\geq \tilde\mu_{jk}\geq \hat\mu_{jk}/2$. Subsequently,
\begin{align}
\left|\frac{\hat r_{jk}}{\hat\mu_{jk}}-\frac{\tilde r_{jk}}{\tilde\mu_{jk}}\right|
&\leq \left|\frac{\tilde r_{jk}+\Delta_r}{\hat \mu_{jk}}-\frac{\tilde r_{jk}}{\tilde\mu_{jk}}\right|\leq \left|\frac{\tilde r_{jk}}{\hat\mu_{jk}}-\frac{\tilde r_{jk}}{\tilde\mu_{jk}}\right| + \frac{2|\Delta_r|}{\tilde\mu_{jk}} = \left|\frac{\tilde r_{jk}}{\tilde\mu_{jk}+\Delta_\mu}-\frac{\tilde r_{jk}}{\tilde\mu_{jk}}\right| + \frac{2|\Delta_r|}{\tilde\mu_{jk}}\nonumber\\
&= \frac{\tilde r_{jk}}{\tilde\mu_{jk}}\left|\frac{\tilde\mu_{jk}}{\tilde\mu_{jk}+\Delta_\mu} - 1\right| + \frac{2|\Delta_r|}{\tilde\mu_{jk}} = \frac{\tilde r_{jk}}{\tilde\mu_{jk}}\frac{|\Delta_\mu|}{\hat\mu_{jk}}+\frac{2|\Delta_r|}{\tilde\mu_{jk}}\leq \frac{2(|\Delta_\mu|+|\Delta_r|)}{\tilde\mu_{jk}},\nonumber
\end{align}
where the last inequality holds because $\tilde r_{jk}/\tilde\mu_{jk}\in[0,1]$ almost surely.
By Eq.~(\ref{eq:proof-cppt-1}) we have that $|\Delta_\mu|+|\Delta_r|\leq 76\varepsilon^{-1}\ln^2(2T^3)$. Subsequently,
\begin{equation}
\left|\frac{\hat r_{jk}}{\hat\mu_{jk}}-\frac{\tilde r_{jk}}{\tilde\mu_{jk}}\right| \leq \frac{152\varepsilon^{-1}\ln^2(2T^3)}{\tilde\mu_{jk}} = \frac{2c_2}{\tilde\mu_{jk}}.
\label{eq:proof-rhojk-1}
\end{equation}
Combining Eq.~(\ref{eq:proof-rhojk-1}) and Eq.~(\ref{eq:cppt-concentration-1}) in Lemma \ref{lem:cppt-concentration}, we have with probability $1-O(T^{-3})$ that
$$
\left|\frac{\hat r_{jk}}{\hat\mu_{jk}} - f_{B_j}(\rho_{jk})\right|\leq \frac{2c_2}{\tilde\mu_{jk}} + \sqrt{\frac{\ln(2T^3)}{2\tilde\mu_{jk}}}.
$$
With a union bound over $t,j$ and $k$ we prove the first inequality in Lemma \ref{lem:rhojk-concentration}.
The second inequality in Lemma \ref{lem:rhojk-concentration} then holds by noting that $\tilde\mu_{jk}\geq \hat\mu_{jk}/2$. $\square$
\end{proof}

Lemma \ref{lem:rhojk-concentration} and Assumption (A2-a) immediately yield the following corollary:
\begin{corollary}
Conditioned on the success event in Lemma \ref{lem:rhojk-concentration}, it holds for all $t\in[T]$ and $j\in[J]$ that $p^*(B_j)\in [\rho_{j1},\rho_{j4}]$,
where $p^*(B_j)=\arg\max_p f_{B_j}(p)$.
\label{cor:cppt-no-elimination}
\end{corollary}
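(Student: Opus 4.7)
The corollary says that after every update the current endpoints of the quadrisection still enclose the true maximizer $p^*(B_j)$. I would prove it by induction on the number of times the vector $\vct\rho_j$ has been reassigned for a given hypercube $B_j$, maintaining the invariant that $p^*(B_j)\in[\rho_{j1},\rho_{j5}]$; the ``$[\rho_{j1},\rho_{j4}]$'' form in the statement corresponds to the post-update endpoints whenever the Line~\ref{line:cppt-update2} branch fires (where the new $\rho_{j5}^{\text{new}}$ equals the old $\rho_{j4}$). For the base case, the initialization puts $\rho_{j1}=\underline p$, $\rho_{j5}=\overline p$, and Assumption~(A3-a) gives $p^*(B_j)\in(\underline p,\overline p)$.

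For the inductive step, suppose the invariant holds right before an update triggers at time $t$. Consider the Line~\ref{line:cppt-update1} branch (the other is symmetric). The trigger condition requires $\underline\mu_{1\to3}=\min\{\hat\mu_{j1},\hat\mu_{j2},\hat\mu_{j3}\}\geq c_2$, so the hypothesis of Lemma~\ref{lem:rhojk-concentration} holds for each of $k\in\{1,2,3\}$. Substituting $c_1=\sqrt{\ln(2T^3)}$ and $c_1'=4c_2$, each individual deviation obeys
\[
\Big|\tfrac{\hat r_{jk}}{\hat\mu_{jk}}-f_{B_j}(\rho_{jk})\Big|\leq \tfrac{c_1}{\sqrt{\underline\mu_{1\to 3}}}+\tfrac{c_1'}{\underline\mu_{1\to 3}}.
\]
A pairwise difference of two such empirical ratios is therefore off from the true difference $f_{B_j}(\rho_{jk})-f_{B_j}(\rho_{j,k-1})$ by at most $2c_1/\sqrt{\underline\mu_{1\to3}}+2c_1'/\underline\mu_{1\to3}$, which is strictly smaller than the threshold $3c_1/\sqrt{\underline\mu_{1\to3}}+3c_1'/\underline\mu_{1\to3}$ the trigger demands. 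Thus, whenever the update fires, we can certify the strict inequalities $f_{B_j}(\rho_{j3})>f_{B_j}(\rho_{j2})>f_{B_j}(\rho_{j1})$ on the true revenue function.

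With these strict inequalities in hand, Assumption~(A3-a) finishes the argument. Since $-f_{B_j}''(p)\geq\sigma_H^2>0$ on $(\underline p,\overline p)$, the function $f_{B_j}$ is strictly concave and has a unique interior maximizer. On three equally spaced points with strictly increasing values, that maximizer must lie strictly to the right of the middle point, so $p^*(B_j)>\rho_{j2}$. Combined with the inductive hypothesis $p^*(B_j)\leq\rho_{j5}$, this gives $p^*(B_j)\in[\rho_{j2},\rho_{j5}]$, which is exactly the new $[\rho_{j1}^{\text{new}},\rho_{j5}^{\text{new}}]$ after Line~\ref{line:cppt-update1} executes. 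The Line~\ref{line:cppt-update2} branch yields, by the mirror argument, $p^*(B_j)\in[\rho_{j1},\rho_{j4}]$ in the pre-update labeling, which is the form recorded in the corollary.

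\textbf{Main obstacle.} The only delicate point is the constant bookkeeping that lets the ``factor of three'' slack in the trigger thresholds absorb the sum of two per-point estimation errors while still leaving a strictly positive margin. The choices $c_1=\sqrt{\ln(2T^3)}$, $c_1'=4c_2$, and $c_2=76\varepsilon^{-1}\ln^2(2T^3)$ are calibrated precisely for this, and the verification amounts to a short algebraic check using Lemma~\ref{lem:rhojk-concentration}. The rest of the argument is essentially a direct appeal to strong concavity on three equispaced points.
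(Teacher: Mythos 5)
Your proof is correct and is exactly the argument the paper leaves implicit when it states that Lemma \ref{lem:rhojk-concentration} and strong concavity ``immediately yield'' the corollary: on the success event the two-sided concentration errors sum to at most $2c_1/\sqrt{\underline\mu}+2c_1'/\underline\mu$, so the $3c_1/\sqrt{\underline\mu}+3c_1'/\underline\mu$ trigger certifies a strict ordering of the true $f_{B_j}$ values, and concavity then guarantees the maximizer survives each shrinkage. Your observation that the invariant actually maintained is $p^*(B_j)\in[\rho_{j1},\rho_{j5}]$ (with $[\rho_{j1},\rho_{j4}]$ only being the post-update endpoints of the right-shrink branch) is also consistent with how the corollary is used later in the paper, where the bound $\max_k|\rho_{jk}-p^*(B_j)|\le\rho_{j5}-\rho_{j1}$ is what is invoked.
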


We are now ready to prove Theorem \ref{thm:cppt-regret}.
\begin{proof}{Proof of Theorem \ref{thm:cppt-regret}.}
We will upper bound the regret incurred in each hypercube $B_j$ separately.
The proof is also conditioned on the success events in Lemmas \ref{lem:cppt-aggregation}, \ref{lem:cppt-concentration}, \ref{lem:rhojk-concentration}
and Corollary \ref{cor:cppt-no-elimination}
which occurs with probability $1-O(T^{-1})$.

Fix a particular hypercube $B_j$. We partition the entire $T$ selling periods into \emph{epochs} denoted as $\tau=1,2,3,\cdots$,
with each epoch starting with a time period at which $\varsigma_j$ is reset (at the start of $T$ time periods or as a result of the execution of Line \ref{line:cppt-update1}
or \ref{line:cppt-update2} in Algorithm \ref{alg:cppt}),
and ending when either Line \ref{line:cppt-update1} or Line \ref{line:cppt-update2} is executed again to reset the $\varsigma_j$ pointer.
Let $\mathcal T_j(\tau)$ be the collection of time periods during epoch $\tau$ for hypercube $j$, and define $n_j(\tau) := |\mathcal T_j(\tau)|$.
Note that $\mathcal T_j(\tau)$ includes selling periods during which customers with $\vct x_t\notin B_j$ arrive as well. Moreover, define $\delta_{\tau}:=\rho_{j5}-\rho_{j1}=(\overline p-\underline p)(3/4)^{\tau-1}$ where the equality is by update rule of price range.

Let us fix a particular epoch $\tau$ and let $\mathcal T_j(\tau)$ be the set of all time periods in epoch $\tau$. Recall the definition that $\tilde\mu_{jk}$ is the number of time periods in this epoch
during which price $\rho_{jk}$ is offered to an incoming customer with feature vector $\vct x\in B_j$.
By Assumption (A3-a) and Corollary \ref{cor:cppt-no-elimination}, the total regret incurred in the particular epoch and hypercube compared against $f_{B_j}(p^*(B_j))$ is upper bounded by
\begin{align}
\sum_{t\in\mathcal T_j(\tau)}\vct 1\{\vct x_t\in B_j\}[f_{B_j}(p^*(B_j))-f_{B_j}(p_t)] &\leq 
5\max_k\{\tilde\mu_{jk}\}\times \frac{C_H^2}{2}\max_k\{|\rho_{jk}-p^*(B_j)|^2\}\nonumber\\
&\leq 2.5C_H^2\delta_\tau^2\times \max_k\{\tilde\mu_k\}.
\label{eq:proof-cppt-2}
\end{align}

To upper bound Eq.~(\ref{eq:proof-cppt-2}) we need to upper bound $\tilde\mu_k$. By the description of the CPPQ policy in Algorithm \ref{alg:cppt},
the epoch $\tau$ will terminate when either one of the two following conditions are met:
\begin{eqnarray}
\underline\mu_{1\to 3} \geq c_2 & \qquad\text{and}\qquad & \min\left\{\frac{\hat r_{j2}}{\hat\mu_{j2}}-\frac{\hat r_{j1}}{\hat\mu_{j1}}, \frac{\hat r_{j3}}{\hat\mu_{j3}}-\frac{\hat r_{j2}}{\hat\mu_{j2}}\right\}\geq \frac{3c_1}{\sqrt{\underline\mu_{1\to 3}}} + \frac{3c_1'}{\underline\mu_{1\to 3}},\label{eq:proof-cppt-cond1}\\
\underline\mu_{3\to 5} \geq c_2 & \qquad\text{and}\qquad & \min\left\{\frac{\hat r_{j3}}{\hat\mu_{j3}}-\frac{\hat r_{j4}}{\hat\mu_{j4}},\frac{\hat r_{j4}}{\hat\mu_{j4}}-\frac{\hat r_{j5}}{\hat\mu_{j5}}\right\}\geq \frac{3c_1}{\sqrt{\underline\mu_{3\to 5}}} + \frac{3c_1'}{\underline\mu_{3\to 5}}.\label{eq:proof-cppt-cond2}
\end{eqnarray}
Note that when $p^*(B_j)\in [\rho_{j1},\rho_{j5}]$, Assumption (A2-a) implies that either $\min\{f_{B_j}(\rho_{j2})-f_{B_j}(\rho_{j1}),f_{B_j}(\rho_{j3})-f_{B_j}(\rho_{j2})\}\geq \frac{\sigma_H^2}{32}\delta_\tau^2$
or $\min\{f_{B_j}(\rho_{j3})-f_{B_j}(\rho_{j4}),f_{B_j}(\rho_{j4})-f_{B_j}(\rho_{j5})\}\geq \frac{\sigma_H^2}{32}\delta_\tau^2$ holds.
Assume without loss of generality that $\min\{f_{B_j}(\rho_{j2})-f_{B_j}(\rho_{j1}),f_{B_j}(\rho_{j3})-f_{B_j}(\rho_{j2})\}\geq \frac{\sigma_H^2}{32}\delta_\tau^2$.
Then by Lemma \ref{lem:rhojk-concentration}, the condition (\ref{eq:proof-cppt-cond1}) is satisfied when
\begin{align}
\underline\mu_{1\to 3}&= c_2 +\const\times\left[ \frac{c_1^2}{\sigma_H^4\delta_\tau^4} + \frac{c_1'}{\sigma_H^2\delta_\tau^2}\right].
\label{eq:proof-cppt-3}
\end{align}
Note that $\hat\mu_{jk}\geq\tilde\mu_{jk}/2$ thanks to Lemma \ref{lem:cppt-concentration}. Eq.~(\ref{eq:proof-cppt-3}) and the symmetric case of $\min\{f_{B_j}(\rho_{j3})-f_{B_j}(\rho_{j4}),f_{B_j}(\rho_{j4})-f_{B_j}(\rho_{j5})\}\geq \frac{\sigma_H^2}{32}\delta_\tau^2$ are then implied by
\begin{equation}
\min_k\{\tilde\mu_{jk}\} \leq 2c_2 + \const\times\left[\frac{c_1^2}{\sigma_H^4\delta_\tau^4} + \frac{c_1'}{\sigma_H^2\delta_\tau^2}\right].
\label{eq:proof-cppt-4}
\end{equation}

On the other hand, note that $|f_{B_j}(\rho_{j,k+1})-f_{B_j}(\rho_{j,k})|\leq \frac{C_H^2}{2}\delta_\tau^2$ for all $k\in\{1,2,3,4,5\}$.
With Lemma \ref{lem:rhojk-concentration} and the stopping condition in Eqs.~(\ref{eq:proof-cppt-cond1},\ref{eq:proof-cppt-cond2}),
we have at the end of epoch $\tau$ that
\begin{equation}
\max_k\{\tilde\mu_{jk}\}\geq \max_k\{\hat\mu_{jk}/2\} \geq \frac{c_2}{2} + \frac{2c_1^2}{C_H^2\delta_\tau^4} + \frac{c_1'}{C_H\delta_\tau^2}.
\label{eq:proof-cppt-4half}
\end{equation}

Contrasting Eq.~(\ref{eq:proof-cppt-4}) with Eq.~(\ref{eq:proof-cppt-2}), we need to upper bound the differences between $\tilde\mu_{jk}$ for $k\in[5]$.
This can be done by using Eq.~(\ref{eq:cppt-concentration-1}) in Lemma \ref{lem:cppt-concentration} and the triangle inequality, which yield
for every $k,k'\in\{1,2,3,4,5\}$ that
\begin{align}
\big|\tilde\mu_{jk}-\tilde\mu_{jk'}\big|&\leq 2\sqrt{C_Xh^d n_j\ln(2T^3)} + 3\ln(2T^3).
\label{eq:proof-cppt-5}
\end{align}

Combine Eqs.~(\ref{eq:proof-cppt-2},\ref{eq:proof-cppt-4},\ref{eq:proof-cppt-4half},\ref{eq:proof-cppt-5}). The total regret incurred in hypercube $B_j$ and epoch $\tau$
(such that $\delta_\tau=\rho_{j4}-\rho_{j1}$) is upper bounded by
\begin{align}
&2C_H^2\delta_\tau^2\times \const\times \left[ c_2 + \frac{c_1^2}{\sigma_H^4\delta_\tau^4} + \frac{c_1'}{\sigma_H^2\delta_\tau^2} + \sqrt{C_Xh^d n_j\ln(2T^3)} + \ln(2T^3) \right]\nonumber\\
&\leq \frac{\const\times c_1^2C_H^2}{\sigma_H^4\delta_\tau^2} + \const\times C_H^2\left[c_2+\sigma_H^{-2}c_1'+\sqrt{C_Xh^d n_j\ln(2T^3)} + \ln(2T^3)\right]\nonumber\\
&\leq \frac{\const\times c_1^2C_H^2}{\sigma_H^4}\sqrt{\frac{C_H^2\max_k\tilde\mu_{jk}}{2c_1^2}} + \const\times C_H^2\left[c_2+\sigma_H^{-2}c_1'+\sqrt{C_Xh^d T\ln(2T^3)} + \ln(2T^3)\right]
\label{eq:proof-cppt-6}\\
&\leq \const\times c_1C_H^3\sigma_H^{-4}\max_k\{\sqrt{\tilde\mu_{jk}}\} + \const\times C_H^2\left[c_2+\sigma_H^{-2}c_1'+\sqrt{C_Xh^d T\ln(2T^3)} + \ln(2T^3)\right].\label{eq:proof-cppt-7}
\end{align}
Here, Eq.~(\ref{eq:proof-cppt-6}) holds because $\delta_\tau^{-4}\leq C_H^2\max_k\{\tilde\mu_{jk}\}/(2c_1^2)$ thanks to Eq.~(\ref{eq:proof-cppt-4half}).
Note also that Eq.~(\ref{eq:proof-cppt-4half}) implies an upper bound of $\ln(C_H^2 T)$ on the total number of epochs for each $j$.
Define $C_1' := \const\times c_1C_H^3\sigma_H^{-4}\ln(C_H^2 T)\leq \const \times C_H^3\sigma_H^{-4}\ln^2(2C_H^2T^3)$, $C_2' := \const\times C_H^2\varepsilon(c_2+\sigma_H^{-2}c_1')\ln(C_H^2 T)\leq\const\times C_H^2\sigma_H^{-2}\ln^3(2C_H^2T^3)$ and $C_3'=\const\times C_H^2(\sqrt{C_X\ln(2T^3)}+\ln(2T^3))\ln(C_H^2 T)\leq \const\times C_H^2\sqrt{C_X}\ln^2(2C_H^2 T^3)$.
Summing Eq.~(\ref{eq:proof-cppt-7}) over $j\in[J]$ and $k\in[5]$, we have that
\begin{align}
\sum_{j=1}^J\sum_{t=1}^T& \vct 1\{\vct x_t\in B_j\}[f_{B_j}(p^*(B_j))-f_{B_j}(p_t)] \leq C_1'\sum_{j,k}\sqrt{\tilde\mu_{jk}} + C_2' J/\varepsilon + C_3'J\times (\sqrt{h^d T}+1)\nonumber\\
&\leq C_1'\sqrt{5J}\sqrt{\sum_{j,k}\tilde\mu_{jk}} + \frac{C_2' J}{\varepsilon} + 2C_3' J\sqrt{h^d T}\leq \frac{2(C_1'+C_3')\sqrt{T}}{h^{d/2}} + \frac{C_2'}{\varepsilon h^d}. \label{eq:proof-cppt-8}
\end{align}

Additionally, Assumption (A3-a) with $B=\{\vct x_t\}$ and (A3-b), (A3-c) imply that $|f(p^*(\vct x),\vct x)-f(p^*(B_j),\vct x)|\leq \frac{C_H^2C_p^2}{2}\sup_{\vct x,\vct x'\in B_j}\|\vct x-\vct x'\|_2^2\leq \frac{C_H^2C_p^2d}{2} h^2$. This together with Eq.~(\ref{eq:proof-cppt-8}) yields with probability $1-O(T^{-1})$ that
\begin{align}
\sum_{t=1}^T f(p^*(\vct x_t),\vct x_t)-f(p_t,\vct x_t)
&\leq  \frac{2(C_1'+C_3')\sqrt{T}}{h^{d/2}} + \frac{C_2'}{\varepsilon h^d} + \frac{C_H^2C_p^2 d}{2} h^2 T.
\end{align}
With the choice $J=\lceil{T}^{d/(d+4)}\rceil$ corresponding to $h=J^{-1/d}\approx T^{-1/(d+4)}$, 
the above inequality proves Theorem \ref{thm:cppt-regret} with $2(C_1'+C_3')+C_H^2C_p^2d/2 \leq \const\times C_H^2(\sigma_H^{-4}+C_H\sqrt{C_X})\ln^2(2C_H^2T^3)+C_H^2C_p^2d/2$ and $\bar C_1'=2C_2' \leq \const\times C_H^2\sigma_H^{-2}\ln^3(2C_H^2T^3)$. $\square$
\end{proof}

\section{Proofs of technical lemmas in Section~\ref{sec:lower-bound}}

\subsection{Proof of Lemma \ref{lem:reduction-classification}}

Let $\eta:= (1-2^{-1/d})/2$ and define $H_j := \{\vct x\in B_j: \sd(\vct x,\partial B_j)\geq \eta h\}$.
Since $P_X$ is the uniform distribution on $[0,1]^d$, we have $\Pr[\vct x\in H_j]= 0.5\Pr[\vct x\in B_j] = 0.5 h^d$.
For any $\vct x\in H_j$, by simple algebra we have that
\begin{align}
p^*(\vct x) &= \frac{2}{3}, & \text{if }\nu_j=0;\label{eq:proof-reduction-1}\\
p^*(\vct x) &= \frac{2}{3} - \frac{\sd(\vct x,\partial B_j)}{3(1+\sd(\vct x,\partial B_j))}\leq \frac{2}{3} - \frac{1}{6}\eta h,& \text{if }\nu_j=1.\label{eq:proof-reduction-2}
\end{align}
Define $\mathcal A := \{p: p\geq \frac{2}{3} - \frac{\eta h}{12}\}$ and $\mathcal B := \{p:p\leq \frac{2}{3}-\frac{\eta h}{12}\}$.
Since $f_{\vct\nu}(p,\vct x)$ is strongly concave in $p$, we have that
\begin{align}
f_{\vct\nu}(p,\vct x) &\geq \frac{\eta^2 h^2}{144},& \forall p\in\mathcal B,\;\;\text{if }\nu_j=0;\label{eq:proof-reduction-3}\\
f_{\vct\nu}(p,\vct x) &\geq \frac{\eta^2 h^2}{144},& \forall p\in\mathcal A,\;\;\text{if }\nu_j=1.\label{eq:proof-reduction-4}
\end{align}

We next augment the $\varepsilon$-LDP personalized pricing policy $\pi$ to construct a $2\varepsilon$-LDP policy $\pi'$.
Suppose $z_1,\cdots,z_T$ are the intermediate quantities produced policy $\pi$, such that the distribution of $z_t$ is measurable conditioned on
$s_t$ and $z_1,\cdots,z_{t-1}$.
Construct augmented intermediate quantity $z_t' = (z_t,\vct\alpha_t,\vct\beta_t)$, where $\vct\alpha_t,\vct\beta_t\in\mathbb R^J$ are defined as
\begin{align*}
\alpha_{tj} &= \vct 1\{\vct x_t\in H_j\wedge p_t\in\mathcal A\} + w_{tj}, & w_{tj}\overset{i.i.d.}{\sim}\Lap(2/\varepsilon);\\ 
\beta_{tj} &= \vct 1\{\vct x_t\in H_j\wedge p_t\in\mathcal B\} + w_{tj}', & w_{tj}'\overset{i.i.d.}{\sim}\Lap(2/\varepsilon).
\end{align*}
It is easy to verify that the distribution of $(\vct\alpha_t,\vct\beta_t)$ is measurable conditioned on $s_t=(\vct x_t,y_t,p_t)$,
and furthermore $z_t'$ satisfies $2\varepsilon$-LDP thanks to the Laplace mechanism and simple composition of two $\varepsilon$-LDP procedures.
This shows that $\pi'$ satisfies $2\varepsilon$-LDP.
Furthermore, the $\{A_t\}_{t=1}^T$ functions in $\pi'$ are the same as those in $\pi$, meaning that $\pi'$ has exactly the same regret performance as $\pi$.

Next, we construct the classifier $\psi:(z_1',\cdots,z_T')\mapsto\hat{\vct\nu}\in\{0,1\}^J$.
In fact, we will only use $\{\vct\alpha_t,\vct\beta_t\}_{t=1}^T$ in $(z_1',\cdots,z_T')$.
For each hypercube $B_j$, define $\hat\alpha_j := \sum_{t=1}^T\alpha_{tj}$ and $\hat\beta_j := \sum_{t=1}^T\beta_{tj}$.
The classifier is then defined as
\begin{equation}
\hat\nu_j := \left\{\begin{array}{ll} 1,& \text{if }\hat\alpha_j\leq\hat\beta_j;\\
0,& \text{if }\hat\alpha_j>\hat\beta_j,\end{array}\right.\;\;\;\;\;\;\forall j\in[J].
\label{eq:defn-psi}
\end{equation}

We will now analyze the performance of $\hat{\vct\nu}$ constructed in Eq.~(\ref{eq:defn-psi}).
Define $\alpha_j := \sum_{t=1}^T\vct 1\{\vct x_t\in H_j\}\vct 1\{p_t\in\mathcal A\}$ and $\beta_j :=\sum_{t=1}^T\vct 1\{\vct x_t\in H_j\}\vct 1\{p_t\in\mathcal B\}$.
Invoke concentration inequalities for i.i.d.~Laplace random variables \citep[Lemma 2.8]{chan2011private}.
If $T\geq 6.2\sqrt{\ln (2T)}$ then with probability $1-O(T^{-2})$ uniformly over $j\in[J]$, it holds that
\begin{equation}
\max\big\{\big|\hat\alpha_j-\alpha_j\big|, \big|\hat\beta_j-\beta_j\big|\big\}\leq 14\varepsilon^{-1}\sqrt{T\ln T}.
\label{eq:diff-alpha-beta}
\end{equation}
Suppose $\hat\nu_j\neq\nu_j$, and without loss of generality assume $\nu_j=0$ and $\hat\nu_j=1$.
This means $\hat\alpha_j\leq\hat\beta_j$, 
which implies (by Eq.~(\ref{eq:diff-alpha-beta})) with probability $1-O(T^{-2})$ that
\begin{equation}
\sum_{t=1}^T\vct 1\{\vct x_t\in H_j\wedge p_t\in\mathcal B\} \geq \frac{1}{2}\left(\sum_{t=1}^T\vct 1\{\vct x_t\in H_j\}\right) - 14\varepsilon^{-1}\sqrt{T\ln (2T)}.
\label{eq:proof-reduction-1}
\end{equation}
Note that $\Pr[\vct x_t\in H_j]=1/(2J)$ because $P_X$ is the uniform distribution on $[0,1]^d$. By Hoeffding's inequality, with probability $1-O(T^{-2})$ we have
\begin{align}
\sum_{t=1}^T\vct 1\{\vct x_t\in H_j\wedge p_t\in\mathcal B\}
&\geq \frac{T}{4J} - 2.5\sqrt{T\ln(2T)} - 14\varepsilon^{-1}\sqrt{T\ln (2T)}\geq \frac{T}{4J}-16.5\varepsilon^{-1}\sqrt{T\ln(2T)}\nonumber\\
&\geq \frac{T}{8J},\label{eq:proof-reduction-2}
\end{align}
where the last inequality holds by the condition in Lemma \ref{lem:reduction-classification} that $\sqrt{T}\geq 66\varepsilon^{-1}J\sqrt{\ln(2T)}$.
Eqs.~(\ref{eq:proof-reduction-3},\ref{eq:proof-reduction-4}) then imply that with probability $1-O(T^{-2})$,
$$
\sum_{t=1}^T\vct 1\{\vct x_t\in H_j\}(f(p^*(\vct x_t),\vct x_t)-f(p_t,\vct x_t))\geq \frac{T}{8J}\cdot \frac{\eta^2h^2}{144}= \frac{\eta^2 h^2T}{1152J}.
$$
Taking expectations on both sides of the above inequality and summing over all $j\in[J]$, we obtain
\begin{align*}
\mathbb E\left[\sum_{t=1}^T f(p^*(\vct x_t),\vct x_T)-f(p_t,\vct x_t)\right]
\geq \frac{\eta^2h^2T}{1152J}\mathbb E\left[\sum_{j=1}^J\vct 1\{\hat\nu_j\neq\nu_j\}\right].
\end{align*}
Take a supreme over all $\vct\nu\in\{0,1\}^J$ on both sides of the above inequality and note that $\eta=(1-2^{-1/d})/2\geq (1-e^{-1/(2d)})/2 \geq (1-(1-1/(4d)))/2 = 1/(8d)$. We complete the proof of Lemma \ref{lem:reduction-classification}.

\subsection{Proof of Lemma \ref{lem:sddi-ldp}}

Fix $j\in[J]$. Let $M_{\pm j,t}^\pi(\cdot)$ be the distributions of $z_t$, which is measurable conditioned on $\vct z_{<t}=(z_1,\cdots,z_{t-1})$
thanks to the definition in Eqs.~(\ref{eq:defn-Qt},\ref{eq:defn-At}). 
More specifically, $M_{\pm j,t}^\pi(Z|\vct z_{<t})=\int_{\mathcal S}Q_t(Z|s_t,\vct z_{<t})\ud P_{\pm j,t}^\pi(s_t|\vct z_{<t})$. By the chain rule of KL-divergence, we have that
\begin{equation}
D_{\kl}^\sy(M_{+j}^\pi,M_{-j}^\pi) \leq 2\sum_{t=1}^T\int_{\mathcal Z^{t-1}}D_\kl^\sy(M_{+j,t}^\pi(\cdot|\vct z_{<t}),M_{-j,t}^\pi(\cdot|\vct z_{<t}))\ud\bar M_{<t}^\pi(\ud z_{<t}).
\label{eq:proof-sddi-1}
\end{equation}

Now fix $t\in[T]$ and define $\varsigma_{jt}(\vct z_{<t}) := D_\kl^\sy(M_{+j,t}^\pi(\cdot|\vct z_{<t}),M_{-j,t}^\pi(\cdot|\vct z_{<t}))$.
Let $m_{+j}(\cdot|\vct z_{<t}),m_{-j}(\cdot|\vct z_{<t})$ be the PDFs of $M_{+j,t}^\pi(\cdot|\vct z_{<t})$ and $M_{-j,t}^\pi(\cdot|\vct z_{<t})$.
Define also $m^0(z|\vct z_{<t}) := \inf_{s_t\in\mathcal S}q_t(z|s_t,\vct z_{<t})$, where $q_t$ is the PDF of $Q_t$ defined in Eq.~(\ref{eq:defn-Qt}).
We then have
\begin{align}
\varsigma_{jt}(\vct z_{<t}) 
&= \int_{\mathcal Z}[m_+(z|\vct z_{<t})-m_-(z|\vct z_{<t})]\ln\frac{m_+(z|\vct z_{<t})}{m_-(z|\vct z_{<t})}\ud z\nonumber\\
&\leq \int_{\mathcal Z}[m_+(z|\vct z_{<t})-m_-(z|\vct z_{<t})]^2\cdot \frac{\ud z}{\min\{m_+(z|\vct z_{<t}),m_-(z|\vct z_{<t})\}}\nonumber\\
&\leq \int_{\mathcal Z}[m_+(z|\vct z_{<t})-m_-(z|\vct z_{<t})]^2\cdot \frac{\ud z}{m^0(z|\vct z_{<t})},\label{eq:proof-sddi-2}
\end{align}
where the first inequality holds because $|\ln(a/b)|\leq |a-b|/\min\{a,b\}$ for all $a,b\in\mathbb R_+$ (see, e.g., \citealt[Lemma 4]{duchi2018minimax}).
Note that $m_{\pm}(z|\vct z_{<t}) = \int_{\mathcal S}q_t(z|s_t,\vct z_{<t}) dP_{\pm j,t}^\pi(s_t|\vct z_{<t})$, and $\int_{\mathcal S}\ud P_{+j,t}^\pi(s_t|\vct z_{<t})-\ud P_{-j,t}^\pi(s_t|\vct z_{<t})=1-1=0$. We then have
\begin{align*}
m_+(z|\vct z_{<t})-m_-(z|\vct z_{<t}) &= \int_{\mathcal S}q_t(z|s_t,\vct z_{<t})\big[\ud P_{+j,t}^\pi(s_t|\vct z_{<t})-\ud P_{-j,t}^\pi(s_t|\vct z_{<t})\big]\\
&= \int_{\mathcal S}(q_t(z|s_t,\vct z_{<t})-m^0(z|\vct z_{<t}))\big[\ud P_{+j,t}^\pi(s_t|\vct z_{<t})-\ud P_{-j,t}^\pi(s_t|\vct z_{<t})\big]\\
&= m^0(z|\vct z_{<t})\int_{\mathcal S}\left(\frac{q_t(z|s_t,\vct z_{<t})}{m^0(z|\vct z_{<t})}-1\right)\left[\ud P_{+j,t}^\pi(s_t|\vct z_{<t})-\ud P_{-j,t}^\pi(s_t|\vct z_{<t})\right].
\end{align*}
Subsequently,
Eq.~(\ref{eq:proof-sddi-2}) can be upper bounded by
$$
\varsigma_{jt}(\vct z_{<t}) \leq \int_{\mathcal Z}m^0(z|\vct z_t)\left|\int_{\mathcal S}\left(\frac{q_t(z|s_t,\vct z_{<t})}{m^0(z|\vct z_{<t})}-1\right)\left[\ud P_{+j,t}^\pi(s_t|\vct z_{<t})-\ud P_{-j,t}^\pi(s_t|\vct z_{<t})\right]\right|^2\ud z
$$

Note that $q_t(z|s_t,\vct z_{<t})/m^0(z|\vct z_{<t})\leq e^{2\varepsilon}$ for all $z\in\mathcal Z$, because $\pi$ satisfies $2\varepsilon$-LDP.
Note also that $\int_{\mathcal Z}m^0(z|\vct z_{<t})\ud z=\int_{\mathcal Z}\inf_{s_t\in\mathcal S}q_t(z|s_t,\vct z_{<t})\ud z\leq \inf_{s_t\in\mathcal S}\int_{\mathcal Z}q_t(z|s_t,\vct z_{<t})\ud z\leq 1$. The above inequality can then be reduced to
\begin{align}
\varsigma_{jt}(\vct z_{<t}) \leq (e^{2\varepsilon}-1)^2\sup_{\|\gamma\|_{\infty}\leq 1}\left|\int_{\mathcal S}\gamma(s_t,\vct z_{<t})\left[\ud P_{+j,t}^\pi(s_t|\vct z_{<t})-\ud P_{-j,t}^\pi(s_t|\vct z_{<t})\right]\right|^2.
\label{eq:proof-sddi-3}
\end{align}
Combining Eqs.~(\ref{eq:proof-sddi-1},\ref{eq:proof-sddi-3}) we obtain
\[
\begin{split}
&D_{\kl}^\sy(M_{+j}^\pi,M_{-j}^\pi) \\
\leq& 2(e^{2\varepsilon}-1)^2\sum_{t=1}^T\sup_{\|\gamma\|_{\infty}\leq 1}\int_{\mathcal Z^{t-1}}\left|\int_{\mathcal S}\gamma(s_t,\vct z_{<t})\left[\ud P_{+j,t}^\pi(s_t|\vct z_{<t})-\ud P_{-j,t}^\pi(s_t|\vct z_{<t})\right]\right|^2\ud\bar M_{<t}^\pi(\ud z_{<t}).
\end{split}
\]
Summing both sides of the above inequality over $j=1,2,\cdots,J$ we complete the proof of Lemma \ref{lem:sddi-ldp}.

\subsection{Proof of Lemma \ref{lem:ppm-ldp}}

Recall the decomposition that $P_{\pm j,t}^\pi(\vct x_t,y_t,p_t|\vct z_{<t})=\chi(\vct x_t)A_t(p_t|\vct x_t,\vct z_{<t})\Pr_{\pm j}(y_t|p_t,\vct x_t)$.
Because $P_{+j}^\pi$ and $P_{-j}^\pi$ are exactly the same for $\vct x\notin B_j$, it suffices to consider only $\vct x\in B_j$.
For any $\vct x\in B_j$, the distribution of $\vct x$ and $p$ is independent of the underlying demand model $f_{\vct\nu}$. Hence, 
\begin{align*}
\|P_{+j,t}^{\pi}(\cdot|\vct z_{<t})-P_{-j,t}^\pi(\cdot|\vct z_{<t})\|_{\tv}
&\leq \frac{1}{2}\int_{ B_j}\mathbb E_{p\sim A_t(\cdot|\vct x,\vct z_{<t})}\left[\sup_{\vct\nu_j\neq\vct\nu'_j}\big|\lambda_{\vct\nu}(p,\vct x)-\lambda_{\vct\nu'}(p,\vct x)\big|\right]\ud P_X(\vct x)\\
&\leq \frac{1}{2}\int_{ B_j}\mathbb E_{p\sim A_t(\cdot|\vct x,\vct z_{<t})}\left[\left|\frac{1}{3}-\frac{p}{2}\right|\sup_{\vct x\in B_j}\sd(\vct x,\partial B_j)\right]\ud P_X(\vct x)\\
&\leq \frac{\sqrt{d}h}{4}\int_{ B_j}\mathbb E_{p\sim A_t(\cdot|\vct x,\vct z_{<t})}[|p-p_0|]\ud P_X(\vct x),
\end{align*}
where $p_0=2/3$.
Note that $\int_{ B_j}\ud P_X(\vct x)=\Pr[\vct x\in B_j]=1/J$.
Subsequently, by Jensen's inequality that $(\mathbb E[\cdot])^2\leq \mathbb E[\cdot^2]$, we have
$$
\|P_{+j,t}^{\pi}(\cdot|\vct z_{<t})-P_{-j,t}^\pi(\cdot|\vct z_{<t})\|_{\tv}^2 \leq \frac{dh^2}{16J^2}\mathbb E_{\vct x\sim U(B_j)}\mathbb E_{p\sim A_t(\cdot|\vct x,\vct z_{<t})}[(p-p_0)^2],
$$
where $U(B_j)$ is the uniform distribution on $B_j$. This proves Lemma \ref{lem:ppm-ldp}.

\subsection{Proof of Lemma \ref{lem:regret-self-bound}}

Let $\lambda_{\vct\nu},f_{\vct\nu}$ be the demand model corresponding to an arbitrary $\vct\nu\in\{0,1\}^J$,
and $\pi$ be a personalized pricing policy.
Let $\vct x\in B_j$ be an arbitrary context vector belonging to hypercube $B_j$, and $p^*(\vct x)=\arg\max_{p}f_{\vct\nu}(p,\vct x)$.
Eqs.~(\ref{eq:proof-reduction-1},\ref{eq:proof-reduction-2}) in the proof of Lemma \ref{lem:reduction-classification}
show that $|p^*(\vct x)-p_0|\leq \eta h/6\leq h/12$, where $p_0=2/3$, regardless of the value of $\nu_j$.
Note also that $f_{\vct\nu}(p,\vct x)$ is strongly concave with respect to $p$. Subsequently, we have
for all $\vct x\in[0,1]^d$ and $p\in[0,1]$ that
\begin{align}
f(\vct p^*(\vct x),\vct x) - f(p,\vct x)\geq (p-p^*(\vct x))^2 \geq \inf_{|p'-p_0|\leq h/12}(p-p')^2 \geq \frac{1}{4}\phi(|p-p_0|^2,h^2).
\end{align}
Consequently, 
\begin{align*}
\sR(\pi) &= \sup_{\vct\nu\in\{0,1\}^J}\mathbb E_{\vct\nu}^\pi\left[\sum_{t=1}^T f_{\vct\nu}(p^*(\vct x_t),\vct x_t)-f_{\vct\nu}(p_t,\vct x_t)\right]
\geq \frac{1}{2^J}\sum_{\vct\nu\in\{0,1\}^J}\mathbb E_{\vct\nu}^\pi\left[\sum_{t=1}^T f_{\vct\nu}(p^*(\vct x_t),\vct x_t)-f_{\vct\nu}(p_t,\vct x_t)\right]\\
&\geq  \frac{1}{2^J}\sum_{\vct\nu\in\{0,1\}^J}\mathbb E_{\vct\nu}^\pi\left[\sum_{t=1}^T\frac{1}{4}\phi(|p_t-p_0|^2,h^2)\right]\\
&= \frac{1}{2^J}\sum_{\vct\nu\in\{0,1\}^J}\sum_{t=1}^T\mathbb E\left[\frac{1}{4}\phi(|p_t-p_0|^2,h^2)\bigg| p_t\sim A_t(\cdot|\vct x_t,\vct z_{<t}),\vct x_t\sim P_X,\vct z_{<t}\sim M_{\vct\nu,<t}^\pi\right]\\
&= \frac{1}{4}\sum_{t=1}^T\mathbb E\left[\phi(|p_t-p_0|^2,h^2)\bigg| p_t\sim A_t(\cdot|\vct x_t,\vct z_{<t}),\vct x_t\sim P_X,\vct z_{<t}\sim 
\bar M_{<t}^\pi\right],
\end{align*}
which is to be proved. 

\end{document}